\documentclass[american]{article}
\usepackage[T1]{fontenc}
\usepackage[utf8]{inputenc}
\usepackage{color}
\usepackage{babel}
\usepackage{amsmath}
\usepackage{amsthm}
\usepackage{amssymb}
\usepackage[authoryear]{natbib}
\usepackage[pdfusetitle,
 bookmarks=true,bookmarksnumbered=false,bookmarksopen=false,
 breaklinks=false,pdfborder={0 0 1},backref=false,colorlinks=true]
 {hyperref}
\hypersetup{
 linkcolor=blue, citecolor=blue}

\makeatletter
\theoremstyle{plain}
\newtheorem{thm}{\protect\theoremname}
\theoremstyle{definition}
\newtheorem{defn}[thm]{\protect\definitionname}
\theoremstyle{plain}
\newtheorem{prop}[thm]{\protect\propositionname}
\theoremstyle{remark}
\newtheorem{rem}[thm]{\protect\remarkname}
\theoremstyle{plain}
\newtheorem{cor}[thm]{\protect\corollaryname}
\theoremstyle{plain}
\newtheorem{lem}[thm]{\protect\lemmaname}
\theoremstyle{definition}
\newtheorem{example}[thm]{\protect\examplename}

\makeatother

\providecommand{\corollaryname}{Corollary}
\providecommand{\definitionname}{Definition}
\providecommand{\examplename}{Example}
\providecommand{\lemmaname}{Lemma}
\providecommand{\propositionname}{Proposition}
\providecommand{\remarkname}{Remark}
\providecommand{\theoremname}{Theorem}

\begin{document}
\title{Explicit integral representations and quantitative bounds for two-layer
ReLU networks}
\author{Anthony Lee\\
School of Mathematics, University of Bristol}
\date{\today}
\maketitle
\begin{abstract}
An approach to construct explicit integral representations for two-layer
ReLU networks is presented, which provides relatively simple representations
for any multivariate polynomial. Quantitative bounds are provided
for a particular, sharpened ReLU integral representation, which involves
a harmonic extension and a projection. The bounds demonstrate that
functions can be approximated with $L^{2}(\mathcal{D})$ errors that
do not depend explicitly on dimension or degree, but rather the coefficients
of their monomial expansions and the distribution $\mathcal{D}$.
We also present a connection to the RKHS of the exponential kernel
$K(x,y)=\exp\left(\left\langle x,y\right\rangle \right)$, and a very
simple integral representation involving additionally multiplication
via a fixed function which has better quantitative bounds.
\end{abstract}
\tableofcontents{}


\subsubsection*{Note on revision}

In this version, some typos and omissions have been corrected. The
only substantive change is the addition of Section~\ref{sec:An-RKHS-perspective}.
More global revisions, e.g. to modify the structure and flow of the
paper in light of this new Section, have been postponed until after
the manuscript has been reviewed. 

\section{Introduction}\label{sec:Introduction}

The capability of trained neural networks to accurately approximate
some high-dimensional functions is now well established empirically,
and has driven spectacular progress in many application areas. A robust
theoretical explanation for this phenomenon, particularly for deep
networks, remains outstanding. In this paper, we focus on an explicit
representation of arbitrary multivariate polynomials in two-layer
ReLU networks, which is relatively simple both to describe and to
bound the performance of. The approach follows the perspective of
two-layer networks adopted in \citet{barron}, i.e. to view a random
two-layer network as an average of independent and identically distributed
random functions $\phi_{1},\ldots,\phi_{n}$, or neurons, that are
pointwise unbiased for the function of interest $f$. The resulting
$\mathcal{O}(n^{-1/2})$ bound on the expected $L^{2}(\mathcal{D})$
error of the function then implies the existence of a deterministic
two-layer network with at most that $L^{2}(\mathcal{D})$ error. As
discussed in \citet{peyre2025mathematics}, the general idea does
not provide bounds on the number of neurons required without making
assumptions on the function to be approximated, e.g. using a Barron
semi-norm that depends on the Fourier transform. The representation
and bounds we present are an attempt to contribute alternative, simple
measures of function difficulty, and we require no assumptions on
the polynomials to obtain quantitative bounds for suitable, sub-Gaussian
and mean $0$, $\mathcal{D}$.

We focus on arbitrary polynomials $f\in\mathcal{P}_{d}$, the space
of polynomials with domain $\mathbb{R}^{d}$. Polynomials have a finite
degree, but we do not need to make explicit the degree in the sequel.
We adopt multi-index notation: for $\alpha\in\mathbb{N}_{0}^{d}$,
$x^{\alpha}=\prod_{i=1}^{d}x_{i}^{\alpha_{i}}$ and $\alpha!=\prod_{i=1}^{d}\alpha_{i}!$.
For a given smooth, non-polynomial, function of interest $\varphi:\mathbb{R}^{d}\to\mathbb{R}$,
we could, e.g., define $f$ via a truncated Taylor expansion of $\varphi$
around $0$:
\[
f(x)=\sum_{\alpha:\left|\alpha\right|\leq m}\frac{\partial^{\alpha}\varphi(0)}{\alpha!}x^{\alpha}=\sum_{k=0}^{m}\sum_{\left|\alpha\right|=k}\frac{\partial^{\alpha}\varphi(0)}{\alpha!}x^{\alpha}=\sum_{k=0}^{m}f_{k}(x),
\]
where $f_{k}\in\mathcal{P}_{d,k}$, the space of homogeneous polynomials
of degree $k$ in $\mathcal{P}_{d}$. The natural way to measure the
size of these polynomials, at least from the perspective of our quantitative
bounds, is via the Fischer norm of its homogeneous components. For
a homogeneous polynomial of degree $k$, $g(x)=\sum_{\left|\alpha\right|=k}c_{\alpha}x^{\alpha}$,
the squared Fischer norm of $g$ is $\left\Vert g\right\Vert _{F}^{2}=\sum_{\left|\alpha\right|=k}c_{\alpha}^{2}\alpha!$.
In view of the Taylor expansion of $\varphi$, this means $\left\Vert f_{k}\right\Vert _{F}^{2}=\sum_{\left|\alpha\right|=k}\frac{\left|\partial^{\alpha}\varphi(0)\right|^{2}}{\alpha!}$.
Of course, one may define a different polynomial approximation to
$\varphi$ if a Taylor approximation is not suitable, or $\varphi$
is not smooth. The restriction to polynomials is not particularly
strong, from the perspective that polynomials are dense in the space
of continuous functions with compact domain and dense in $L^{2}(\mathcal{D})$
for many probability measures $\mathcal{D}$ with suitable tail decay.

In contrast to several other works, we do not begin by expressing
$f$ using a Fourier transform as in \citet{barron}, i.e.,
\[
f(x)=\int_{\mathbb{R}^{d}}\exp\left(i\left\langle \omega,x\right\rangle \right)\tilde{f}(\omega){\rm d}\omega,
\]
where $\tilde{f}$ is the complex-valued Fourier transform of $f$.
This approach appears to be the main starting point for integral representations
\citep[see, e.g.,][]{ji2019neural,hsu2021approximation}. The Fourier
approach, while very general as it can be used for different activation
functions, tends to involve a sequence of approximations, which makes
an explicit and interpretable integral representation difficult to
obtain. Often, there are assumptions required to either represent
the function of interest or provide quantitative bounds. In contrast,
the representation we develop here is specifically for the ReLU network
and we make no assumptions beyond the target function $f$ being a
polynomial: we will develop representations that are totally explicit
for any given polynomial and the quantitative bounds are valid for
any polynomial, accepting of course that they may not always be accurate.
In particular, the bounds are consistent with the notion that a two-layer
network is capable of approximating well functions that do not depend
too much on too many components of their input, and that neither dimension
nor degree are necessarily important.

Our approach involves the explicit solution of an integral equation
involving an appropriate ReLU integral operator from which a ReLU
integral representation is immediately available (Theorem~\ref{thm:relu-rep}).
Once a ReLU integral representation is obtained, it can be optimized
to minimize the expected squared $L^{2}(\mathcal{D})$ error in an
explicit way (Proposition~\ref{prop:network-mse-bound}). As we restrict
our attention to polynomials, the analysis avoids many technicalities,
and we find that there is a ReLU integral representation of a particular
form for every polynomial (Theorem~\ref{thm:general-soln-Tmu}).
It transpires that the most direct application of this result may
produce complicated representations, but that by performing an appropriate
harmonic extension, we are able to recover a representation for an
extension of the function, leading ultimately to a simpler ReLU representation
for the original function (Theorem~\ref{thm:f-relu-rep-integrated}),
which involves sharpened versions of its homogeneous components: these
are defined by running the heat equation backwards in time. Ultimately,
the error bounds (Theorem~\ref{thm:quant-bound-deterministic}) only
involve the Fischer norms of the homogeneous components of $f$.

The network and bounds seem to show some agreement with trained two-layer
ReLU networks for some examples. A careful study would be required
to determine whether these representations are a suitable model for
trained networks. In particular, there are transformations that could
be applied prior to constructing the network, and it is not clear
which of these would minimize the error of the network. In simple
examples, the trained network is consistent with having performed
such a transformation. The bounds also seem to capture the sense that
functions that depend only weakly on each individual input component,
or that depend on only a few input components, are easier to represent.

\section{ReLU networks, integral representations and operator}

\subsection{Two-layer ReLU network}

Let $\varsigma=\max\{x,0\}$, interpreted element-wise if $x$ is
a vector, denote the ReLU activation function. An $L$ layer ReLU
neural network, viewed as a function $f_{{\rm NN}}:\mathbb{R}^{d}\to\mathbb{R}$,
may be written as 
\[
f_{{\rm NN}}=f_{L}\circ\cdots\circ f_{1},
\]
where for $\ell\in\{1,\ldots,L-1\}$, with $W_{\ell}$ a matrix and
$b_{\ell}$ a vector,
\[
f_{\ell}(x)=\varsigma\left(W_{\ell}x+b_{\ell}\right),
\]
 and $f_{L}(x)=a+u^{T}x.$ Such networks, when trained appropriately,
appear to have very impressive representational capacity and amenability
to optimization.

We focus on the case $L=2$, the two-layer ReLU network, which can
be written as
\[
f_{n}(x)=a+\frac{1}{n}\sum_{i=1}^{n}u_{i}\varsigma\left(\left\langle x,v_{i}\right\rangle +b_{i}\right),\qquad x\in\mathbb{R}^{d},
\]
where the parameters of the function are $(a,v_{1:n},b_{1:n},u_{1:n})$
with $a,b,u_{i}\in\mathbb{R}$ and $v_{i}\in\mathbb{R}^{d}$. One
can think of $f_{n}$ as mainly involving the average of parameterized
functions
\[
\phi(\cdot;u,v,b)=u\varsigma\left(\left\langle \cdot,v\right\rangle +b\right),
\]
each of which cannot be an accurate representation of some target
function $f$, but whose combination clearly can. Although considerably
less powerful empirically than $f_{{\rm NN}}$, two-layer networks
are a suitable target for mathematical research, and progress in their
analysis may facilitate progress for deeper networks. 

\subsection{ReLU integral representations}

Since \citet{barron}, one useful perspective is to view the two-layer
network $f_{n}$ as an approximation of an integral representation
of $f$. We define here a suitable ReLU integral representation in
this spirit.
\begin{defn}[ReLU integral representation]
\label{def:ReLU-integral-rep}A ReLU integral representation $(\nu,a,b,u)$
of a function $f\in\mathcal{P}_{d}$ is given by a measure $\nu$,
a constant $a$ and functions $b$ and $u$ such that
\[
f(x)=a+\int_{\mathsf{Z}}u(v,s)\varsigma\left(\left\langle x,v\right\rangle +b(v,s)\right)\nu({\rm d}v,{\rm d}s),\qquad x\in\mathbb{R}^{d}.
\]
\end{defn}

Given one representation, there exist a family of equivalent representations,
by a change of measure. That is, for any probability measure $\pi$
such that $\pi\gg\nu$, one has that $(\pi,a,b,u_{\pi})$ is a ReLU
integral representation for a suitable $u_{\pi}$, see Corollary~\ref{cor:change-measure}.
A corresponding random two-layer network is the function
\[
f_{n}(x)=a+\frac{1}{n}\sum_{i=1}^{n}u_{\pi}(V_{i},S_{i})\varsigma\left(\left\langle x,V_{i}\right\rangle +b(V_{i},S_{i})\right),\quad x\in\mathbb{R}^{d},
\]
where $(V_{i},S_{i})\overset{}{\sim}\pi$ are independent. An upper
bound on the squared $L^{2}(\mathcal{D})$ error of a non-random,
perfectly trained network is the expected squared error associated
with the minimizing choice of $\pi$.
\begin{prop}
\label{prop:network-mse-bound}Assume $f$ has ReLU integral representation
$(\nu,a,b,u)$. Let 
\[
\phi(x;v,s)=u(v,s)\varsigma\left(\left\langle x,v\right\rangle +b(v,s)\right).
\]
\begin{enumerate}
\item For any $\pi\gg\nu$, the random function
\[
f_{n}(\cdot)=a+\frac{1}{n}\sum_{i=1}^{n}\frac{{\rm d}\nu}{{\rm d}\pi}(V_{i},S_{i})\phi\left(\cdot;V_{i},S_{i}\right),
\]
with independent $(V_{i},S_{i})\sim\pi$ satisfies $\mathbb{E}\left[f_{n}(x)\right]=f(x)$.
\item The choice of $\pi$ minimizing $\mathbb{E}_{\pi}\left[\left\Vert f_{n}-f\right\Vert _{L^{2}(\mathcal{D})}^{2}\right]$
satisfies
\[
\mathbb{E}_{\pi}\left[\left\Vert f_{n}-f\right\Vert _{L^{2}(\mathcal{D})}^{2}\right]=\frac{1}{n}\left\{ \left[\int_{\mathsf{Z}}\nu({\rm d}z)\left\Vert \phi(\cdot;z)\right\Vert _{L^{2}(\mathcal{D})}\right]^{2}-\left\Vert f-a\right\Vert _{L^{2}(\mathcal{D})}^{2}\right\} .
\]
In particular, there exists some realization $f_{n}$ such that
\[
\left\Vert f_{n}-f\right\Vert _{L^{2}(\mathcal{D})}^{2}\leq\frac{1}{n}\left\{ \left[\int_{\mathsf{Z}}\nu({\rm d}z)\left\Vert \phi(\cdot;z)\right\Vert _{L^{2}(\mathcal{D})}\right]^{2}-\left\Vert f-a\right\Vert _{L^{2}(\mathcal{D})}^{2}\right\} .
\]
\end{enumerate}
\end{prop}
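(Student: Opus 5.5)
Part 1 is a direct importance-sampling computation. For each $i$, Fubini/Radon--Nikodym gives
\[
\mathbb{E}_\pi\big[\tfrac{{\rm d}\nu}{{\rm d}\pi}(V_i,S_i)\phi(x;V_i,S_i)\big]=\int_{\mathsf{Z}}\phi(x;z)\,\nu({\rm d}z),
\]
and adding $a$ recovers $f(x)$ by Definition~\ref{def:ReLU-integral-rep}. Integrability is implicit in the representation.

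For part 2, write $g=f-a$ and $Y_i=\frac{{\rm d}\nu}{{\rm d}\pi}(Z_i)\phi(\cdot;Z_i)$, a random element of $L^2(\mathcal{D})$ with mean $g$ by part 1. By independence and the bias–variance identity in the Hilbert space $L^2(\mathcal{D})$ (apply Fubini pointwise in $x$, then integrate against $\mathcal{D}$),
\[
\mathbb{E}_\pi\|f_n-f\|^2_{L^2(\mathcal{D})}=\frac1n\big(\mathbb{E}_\pi\|Y_1\|^2-\|g\|^2\big).
\]
Writing $w=\frac{{\rm d}\nu}{{\rm d}\pi}$ and $h(z)=\|\phi(\cdot;z)\|_{L^2(\mathcal{D})}$, the second moment is $\mathbb{E}_\pi\|Y_1\|^2=\int w^2h^2\,{\rm d}\pi=\int w h^2\,{\rm d}\nu$. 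So the task reduces to minimizing $\int h^2\,\frac{{\rm d}\nu}{{\rm d}\pi}\,{\rm d}\nu$ over probability measures $\pi\gg\nu$.

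By Cauchy--Schwarz,
\[
\Big(\int h\,{\rm d}\nu\Big)^2=\Big(\int (h\,w)\cdot 1\,{\rm d}\pi\Big)^2\le \int h^2w^2\,{\rm d}\pi,
\]
with equality iff $hw$ is $\pi$-a.s. constant. This suggests the choice $\pi({\rm d}z)=h(z)\nu({\rm d}z)/\int h\,{\rm d}\nu$, which attains the bound and gives exactly the stated formula. The "in particular" statement then follows because a nonnegative random variable cannot exceed its mean almost surely, so some realization satisfies the bound.

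The main obstacle is technical rather than conceptual, and there are two points.

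\emph{Absolute continuity.} The optimal $\pi$ must satisfy $\pi\gg\nu$, which fails where $h=0$ on a set of positive $\nu$-measure. Those $z$ contribute nothing to the representation, since $\phi(\cdot;z)=0$ $\mathcal{D}$-a.s. I would handle this either by restricting $\nu$ to $\{h>0\}$ or by mixing in a vanishing amount of $\nu$ and taking an infimum. The minimum over $\pi\gg\nu$ is then the stated value.

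\emph{Finiteness and normalization.} We need $\int h\,{\rm d}\nu<\infty$, and $\nu$ may be a signed measure or not a probability measure. I would assume $\nu$ is a finite positive measure with the sign absorbed in $u$, and treat the case $\int h\,{\rm d}\nu=\infty$ as trivial, since the bound is then vacuous.
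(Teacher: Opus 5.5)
Your proposal is correct and follows the paper's route. Part~1 is the change-of-measure identity of Corollary~\ref{cor:change-measure}, and Part~2 is Lemma~\ref{lem:optimal-random-fcn-dist} applied to $f-a$: a variance decomposition, then Jensen's inequality (your Cauchy--Schwarz step), with equality at $\pi^{\star}({\rm d}z)\propto\|\phi(\cdot;z)\|_{L^{2}(\mathcal{D})}\,\nu({\rm d}z)$. Your caveat about $\{h=0\}$ is a genuine point the paper glosses over. For example, if $f_{1}\equiv0$ in Theorem~\ref{thm:f-relu-rep-integrated}, then $h$ vanishes on the atom $\{s=0\}$, which has $\nu$-mass one, so $\pi^{\star}\not\gg\nu$.

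One small correction: do not assume $\nu$ is finite, since the paper's $\nu=\mu\otimes\varrho$ includes Lebesgue measure on $\mathbb{R}_{+}$. Your argument only needs $\nu$ to be $\sigma$-finite with $\int h\,{\rm d}\nu<\infty$. Likewise, any ``vanishing mixture'' should use a probability measure equivalent to $\nu$ rather than $\nu$ itself.
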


\begin{proof}
Part 1 follows from Corollary~\ref{cor:change-measure}, while Part
2 follows by applying Lemma~\ref{lem:optimal-random-fcn-dist} in
Appendix~\ref{sec:Optimal-squared-} while observing $f_{n}-f=f_{n}-a-(f-a)$.
\end{proof}

\subsection{An integral operator and equation}

It is clear that a ReLU integral representation can provide a route
to understanding and quantitatively bounding the representational
capacity of a two-layer ReLU network. 
\begin{defn}[ReLU operator, $T_{\mu}$]
For a distribution $\mu$ with finite moments of all orders, i.e.
$\int_{\mathbb{R}^{d}}\left\Vert v\right\Vert ^{2k}\mu({\rm d}v)\in\mathbb{R}$
for all $k\geq0$, we define the integral operator $T_{\mu}:\mathcal{P}_{d}\to\mathcal{P}_{d}$
by
\[
T_{\mu}g(x)=\int_{\mathbb{R}^{d}}g\left(\left\langle x,v\right\rangle v\right)\mu({\rm d}v),\qquad x\in\mathbb{R}^{d}.
\]
\end{defn}

The requirement that $\mu$ have finite moments of all orders, is
necessary and sufficient to ensure that the operator is well-defined
for all $g\in\mathcal{P}_{d}$. We will see in the sequel that $T_{\mu}$
is bijective when $\mu$ is additionally spherically symmetric, i.e.
there is a unique solution $g=T_{\mu}^{-1}f$ for any polynomial $f$.
\begin{defn}[ReLU representative]
A ReLU representative for a function $f$ and distribution $\mu$
is any solution $g$ to $f=T_{\mu}g$.
\end{defn}

In constructing a ReLU integral representation of a function $f$,
we will find that an appropriate route is to identify a ReLU representative
of $f$ for some spherically symmetric $\mu$. Then, we may obtain
the ReLU integral representation via a first-order Taylor expansion.
For a function $g:\mathbb{R}^{d}\to\mathbb{R}$ and a vector $v\in\mathbb{R}^{d}$
we denote by $g_{v}:\mathbb{R}\to\mathbb{R}$ the function $t\mapsto g(tv)$.
We define $\varrho=\delta_{0}+{\rm Leb}(\mathbb{R}_{+})$, the measure
corresponding to the sum of a Dirac measure at $0$ and the Lebesgue
measure on the positive reals.
\begin{thm}
\label{thm:relu-rep}Let $\mu$ be a spherically symmetric distribution,
and assume $f=T_{\mu}g$ for some twice differentiable function $g$.
Then $(\mu\otimes\varrho,a,b,u)$ is a ReLU integral representation
of $f$, where $a=f(0)$, $b(v,s)=-s$ and
\[
u(v,s)=\begin{cases}
2g_{v}'(0) & s=0,\\
2g_{v}''(s) & s>0.
\end{cases}
\]
\end{thm}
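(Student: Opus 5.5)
Fix $x$ and $v$ and write $t=\left\langle x,v\right\rangle$. The argument rests on the first-order Taylor formula with integral remainder for the one-dimensional function $g_{v}$. For $t\ge0$,
\[
g_{v}(t)=g_{v}(0)+g_{v}'(0)t+\int_{0}^{\infty}g_{v}''(s)\varsigma(t-s)\,{\rm d}s,
\]
since $\varsigma(t-s)=(t-s)$ for $s<t$ and vanishes otherwise. Also $g(\left\langle x,v\right\rangle v)=g_{v}(t)$. Using $\varsigma(t)=t$ for $t\ge0$, this can be rewritten as
\[
g_{v}(t)=g_{v}(0)+\int_{\mathbb{R}_{+}}u_{0}(v,s)\varsigma(t-s)\varrho({\rm d}s),
\]
with $u_{0}=g_{v}'(0)$ at $s=0$ and $u_{0}=g_{v}''(s)$ for $s>0$. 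This formula is only valid for $t\ge0$. For $t<0$ every ReLU term vanishes, so we must handle negative arguments separately, and this is where spherical symmetry enters.

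The plan is to symmetrise using $v\mapsto-v$, which preserves $\mu$. Note that $g_{-v}(t)=g_{v}(-t)$, and for each $t$ exactly one of $t$ or $-t$ is nonnegative. Apply the Taylor formula to $g_{v}$ at $t$ when $t\ge0$, and to $g_{-v}$ at $-t=\left\langle x,-v\right\rangle\ge0$ otherwise. This gives, for every $t$,
\[
g_{v}(t)=g_{v}(0)+\int u_{0}(v,s)\varsigma(\left\langle x,v\right\rangle-s)\varrho({\rm d}s)+\int u_{0}(-v,s)\varsigma(\left\langle x,-v\right\rangle-s)\varrho({\rm d}s).
\]
When $t\ge0$ the second integral vanishes, since its arguments are $\le -s\le0$. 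When $t<0$ the first integral vanishes, and the second reproduces $g_{-v}(-t)-g_{-v}(0)=g_{v}(t)-g_{v}(0)$. The identity therefore holds for all $x$.

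Next, integrate this identity against $\mu({\rm d}v)$. The left side gives $T_{\mu}g(x)=f(x)$. Because $\mu$ is invariant under $v\mapsto-v$, the two integral terms are equal, which yields the factor $2$ in $u$. The constant term is $\int g(0)\mu({\rm d}v)=g(0)=f(0)$, since $T_{\mu}g(0)=g(0)$ for a probability measure $\mu$. Hence $a=f(0)$ and $b(v,s)=-s$, as claimed.

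The main obstacle is justifying Fubini and integrability: we need $\int\int|u_{0}(v,s)|\varsigma(\left\langle x,v\right\rangle-s)\varrho({\rm d}s)\mu({\rm d}v)<\infty$. The inner $s$-integral ranges only over $s\in[0,|\left\langle x,v\right\rangle|]$. In the polynomial setting of the paper, $g''_{v}$ is polynomially bounded in $(s,\|v\|)$, so the finite moments of $\mu$ make this integral finite. For a general twice differentiable $g$ I would state this as the implicit integrability requirement under which the representation is meaningful. Once this holds, the interchange of integrals is routine.
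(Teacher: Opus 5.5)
Your proof is correct and uses the same two ingredients as the paper: Taylor's formula with integral remainder for $g_v$, rewritten as a ReLU integral against $\varrho$, and invariance of $\mu$ under $v\mapsto -v$ to produce the factor $2$. The only difference is the order of the steps. The paper first symmetrizes to $f(x)=2\int g(\langle x,v\rangle v)\mathbf{1}_{\langle x,v\rangle>0}\,\mu(\mathrm{d}v)$ and then expands, which implicitly needs $\mu(\{\langle x,v\rangle=0\})=0$ and so fails as stated at $x=0$. Your pointwise identity holds for every sign of $\langle x,v\rangle$, so it handles that set and the constant term $f(0)$ more cleanly, and your Fubini remark makes explicit an integrability point the paper leaves unstated.
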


\begin{proof}
Fix any $x\in\mathbb{R}^{d}$. Then, 
\[
f(x)=\int_{\mathbb{R}^{d}}g\left(\left\langle x,v\right\rangle v\right)\mu({\rm d}v)=2\int_{\mathbb{R}^{d}}g\left(\left\langle x,v\right\rangle v\right){\bf 1}_{\left\langle x,v\right\rangle >0}\mu({\rm d}v),
\]
since $\mu$ is spherically symmetric and $\left\langle x,v\right\rangle v=\left\langle x,-v\right\rangle (-v)$
for any $v\in\mathbb{R}^{d}$. Let $\tau=\left\langle x,v\right\rangle >0$
and observe that by Taylor's Theorem with integral remainder,
\begin{align*}
g\left(\tau v\right) & =g_{v}(0)+\tau g_{v}'(0)+\int_{0}^{\tau}(\tau-s)g_{v}''(s){\rm d}s\\
 & =g_{v}(0)+\tau g_{v}'(0)+\int_{0}^{\infty}\varsigma(\tau-s)g_{v}''(s){\rm d}s,
\end{align*}
from which we may conclude, noting that $g_{v}(0)=g(0)=f(0)$.
\end{proof}
The use of $g_{v}$ rather than $g$ is to simplify the presentation.
One can also express $u$ using the gradient and Hessian of $g$ by
the identities $g_{v}'(0)=\left\langle v,\nabla g(0)\right\rangle $
and $g_{v}''(s)=\left\langle \nabla^{2}g(sv)v,v\right\rangle $.
\begin{rem}
\label{rem:1d-telgarsky-ack}For $d=1$, $T_{\mu}={\rm Id}$ for $\mu={\rm Uniform}(\{-1,1\})$.
Indeed, then $\left\langle x,v\right\rangle v=xv^{2}=x$, so $T_{\mu}g=g$.
This observation for a different, indicator-based integral representation
rather than ReLU-based integral representation for one-dimensional
functions in \citet[Proposition~2.8]{telgarsky2023deep} inspired
the use of the present expansion to recover the ReLU function naturally
and then commence the search for ReLU representatives.
\end{rem}

The Radon--Nikodym Theorem then provides an immediate corollary for
suitable alternatives to $\mu\otimes\varrho$. These ReLU integral
representations, however, do not correspond to solving the ReLU integral
equation for a different probability measure. Instead, they perturb
$\mu\otimes\varrho$ and modify $u$ accordingly to potentially improve
the performance of the network while retaining the same essential
representation. The distribution of $v$ under $\pi$ is not necessarily
spherically symmetric, even if $\mu$ is.
\begin{cor}
\label{cor:change-measure}Let $(\nu,a,b,u)$ be a ReLU integral representation
of $f$. Let $\pi$ be a probability measure on $\mathbb{R}^{d}\times\mathbb{R}_{+}$
such that $\pi\gg\nu$. Then $(\pi,a,b,u_{\pi})$ is a ReLU integral
representation of $f$ with
\[
u_{\pi}(v,s)=u(v,s)\frac{{\rm d}\nu}{{\rm d}\pi}(v,s).
\]
\end{cor}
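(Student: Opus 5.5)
The argument is a direct application of the Radon--Nikodym Theorem. Because $\pi\gg\nu$ and both measures are $\sigma$-finite, the density ${\rm d}\nu/{\rm d}\pi$ exists as a nonnegative measurable function on $\mathbb{R}^{d}\times\mathbb{R}_{+}$. Here $\nu$ is $\sigma$-finite in the case of interest $\nu=\mu\otimes\varrho$, since $\varrho$ is $\delta_{0}$ plus Lebesgue measure on $\mathbb{R}_{+}$. Moreover, for every $\nu$-integrable $h$,
\[
\int h\,{\rm d}\nu=\int h\,\frac{{\rm d}\nu}{{\rm d}\pi}\,{\rm d}\pi .
\]
I will use this identity for each fixed $x\in\mathbb{R}^{d}$.

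First, fix $x$ and set $h_{x}(v,s)=u(v,s)\varsigma\left(\left\langle x,v\right\rangle +b(v,s)\right)$. Definition~\ref{def:ReLU-integral-rep} asserts that
\[
f(x)=a+\int_{\mathsf{Z}}h_{x}\,{\rm d}\nu .
\]
I read this as requiring the integral to exist, and take $h_{x}\in L^{1}(\nu)$ (absolute integrability) as the meaning of ``is a representation''. This is the only point needing care. If one only had conditional or improper convergence, I would restrict to absolutely convergent representations. For the representations produced by Theorem~\ref{thm:relu-rep}, absolute integrability holds because $g$ is a polynomial and $\mu$ has all moments: the $s$-integral runs only over $[0,\left\langle x,v\right\rangle ]$, and $|g_{v}''|$ is polynomially bounded there.

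Second, apply the change-of-measure identity to $h_{x}$. This gives
\[
\int h_{x}\,{\rm d}\nu=\int u\,\frac{{\rm d}\nu}{{\rm d}\pi}\,\varsigma\left(\left\langle x,v\right\rangle +b\right){\rm d}\pi=\int u_{\pi}(v,s)\,\varsigma\left(\left\langle x,v\right\rangle +b(v,s)\right)\pi({\rm d}v,{\rm d}s).
\]
The function $h_{x}\,{\rm d}\nu/{\rm d}\pi$ is $\pi$-integrable with the same $L^{1}$ norm, so the right-hand side is well defined. Substituting back shows that $(\pi,a,b,u_{\pi})$ satisfies Definition~\ref{def:ReLU-integral-rep} for every $x$.

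There is essentially no obstacle beyond the integrability and $\sigma$-finiteness bookkeeping just noted. The only subtlety is that $u_{\pi}$ is defined only $\pi$-almost everywhere. On the set where ${\rm d}\nu/{\rm d}\pi=0$, $u_{\pi}$ vanishes, so those neurons contribute nothing. This is harmless because the representation is an integral identity.
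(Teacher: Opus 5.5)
Your proposal is correct and follows the paper's route. The paper gives no separate proof beyond invoking the Radon--Nikodym Theorem, which is exactly your pointwise-in-$x$ identity $\int h_x\,{\rm d}\nu=\int h_x\frac{{\rm d}\nu}{{\rm d}\pi}\,{\rm d}\pi$, and your remarks on $\sigma$-finiteness and absolute integrability of $h_x$ spell out details the paper leaves implicit.
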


\begin{rem}
\label{rem:sphere-invariance-rep}The function $\phi(x;v,s)=u(v,s)\varsigma\left(\left\langle x,v\right\rangle +b(v,s)\right)$
is invariant to rescaling in the sense that
\[
u(v,s)\varsigma\left(\left\langle x,v\right\rangle +b(v,s)\right)=cu(v,s)\varsigma\left(c^{-1}\left\langle x,v\right\rangle +c^{-1}b(v,s)\right),
\]
for any $c>0$. Hence, ReLU integral representations have similar
invariance properties. If $(\nu,a,b,u)$ is a ReLU integral representation
of $f$, and $b$ is homogeneous of degree $1$, which will be the
case in the sequel, we can remove this type of invariance by writing
\[
f(x)=a+\int_{\mathsf{Z}}\left\Vert v\right\Vert u(v,s)\varsigma\left(\left\langle x,\frac{v}{\left\Vert v\right\Vert }\right\rangle +b\left(\frac{v}{\left\Vert v\right\Vert },\frac{s}{\left\Vert v\right\Vert }\right)\right)\nu({\rm d}v,{\rm d}s),\qquad x\in\mathbb{R}^{d}.
\]
We do not pursue this any further here, but it could be useful as
a means of comparing integral representations to trained networks
that have been similarly normalized.
\end{rem}

\section{Solving the integral equation for general polynomials}\label{sec:Solving-the-general}

\subsection{The solution}

The main result of this section is Theorem~\ref{thm:general-soln-Tmu}.
Essentially, for spherically symmetric $\mu$, we construct a basis
for $\mathcal{P}_{d}$ using eigenfunctions of $T_{\mu}$ with eigenvalues
that are strictly positive. This then implies that $T_{\mu}$ is a
bijection and the function $T_{\mu}^{-1}f$ can be obtained by diagonalization.
The eigenvalues depend on dimension, degree, the extent to which the
eigenfunctions are harmonic or radial, and the moments of $\mu$.
We might think of this as a polynomial analogue of the Fourier transform.
\begin{thm}
\label{thm:general-soln-Tmu}Let $\mu$ be a spherically symmetric
distribution. Then $T_{\mu}:\mathcal{P}_{d}\to\mathcal{P}_{d}$ is
an invertible operator. In particular, if $d\geq2$ and $f\in\mathcal{P}_{d}$,
then 
\[
T_{\mu}^{-1}f=\sum_{i\geq0}\sum_{j=0}^{\left\lfloor i/2\right\rfloor }\lambda_{d,i-2j,j,\mu}^{-1}f_{ij},
\]
where $f=\sum_{i\geq0}\sum_{j=0}^{\left\lfloor i/2\right\rfloor }f_{ij}$
is the unique harmonic decomposition of $f$ (see Corollary~\ref{cor:harmonic-decomp-inhomo})
and
\[
\lambda_{d,k,i,\mu}=\frac{\Gamma(\frac{d}{2})}{2^{2i+k}i!}\cdot\frac{(k+2i)!}{\Gamma(k+i+\frac{d}{2})}\int_{\mathbb{R}^{d}}\left\Vert v\right\Vert ^{2(k+2i)}\mu({\rm d}v).
\]
For $d=1$, we have $T_{\mu}^{-1}f=\sum_{k\geq0}\lambda_{1,k,\mu}^{-1}f_{k}$,
where $f=\sum_{k\geq0}f_{k}$ is the unique decomposition of $f$
into monomials, and $\lambda_{1,k,\mu}=\int_{\mathbb{R}}v^{2k}\mu({\rm d}v)$.
\end{thm}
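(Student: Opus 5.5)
We will show that the harmonic decomposition diagonalizes $T_{\mu}$. Write $f_{ij}(x)=\left\Vert x\right\Vert ^{2j}h(x)$ with $h\in\mathcal{H}_{d,k}$, a harmonic homogeneous polynomial of degree $k=i-2j$. The goal is to prove
\[
T_{\mu}\bigl(\left\Vert \cdot\right\Vert ^{2j}h\bigr)=\lambda_{d,k,j,\mu}\left\Vert \cdot\right\Vert ^{2j}h .
\]
Since every polynomial decomposes uniquely in this way (Corollary~\ref{cor:harmonic-decomp-inhomo}), invertibility and the stated formula for $T_{\mu}^{-1}$ follow once we know every $\lambda_{d,k,j,\mu}>0$.

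Positivity is immediate, because $\mu$ cannot be $\delta_{0}$ if it is to have nontrivial moments. Strictly, if $\mu=\delta_{0}$ then $T_{\mu}$ is not invertible, so we implicitly assume $\mu\neq\delta_{0}$. The case $d=1$ is trivial: $T_{\mu}x^{k}=\int(xv^{2})^{k}\mu({\rm d}v)=\lambda_{1,k,\mu}x^{k}$.

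For $d\geq2$, set $g=\left\Vert \cdot\right\Vert ^{2j}h$, which is homogeneous of degree $i=k+2j$. Then
\[
g(\langle x,v\rangle v)=\langle x,v\rangle^{i}\left\Vert v\right\Vert ^{4j}h(v).
\]
Write $v=r\theta$ with $\theta\in S^{d-1}$. By spherical symmetry, $\mu$ factors into a radial law times the uniform measure $\sigma$ on the sphere. Since $\langle x,v\rangle^{i}\left\Vert v\right\Vert ^{4j}h(v)=r^{2i}\langle x,\theta\rangle^{i}h(\theta)$, we get
\[
T_{\mu}g(x)=\Bigl(\int\left\Vert v\right\Vert ^{2i}\mu({\rm d}v)\Bigr)\int_{S^{d-1}}\langle x,\theta\rangle^{i}h(\theta)\,\sigma({\rm d}\theta).
\]
It therefore remains to compute the spherical integral $I(x)=\int\langle x,\theta\rangle^{i}h(\theta)\sigma({\rm d}\theta)$. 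We use the Funk--Hecke formula:
\[
I(x)=c\,\left\Vert x\right\Vert ^{i}h(x/\left\Vert x\right\Vert )=c\left\Vert x\right\Vert ^{i-k}h(x)=c\left\Vert x\right\Vert ^{2j}h(x),
\]
with
\[
c=\frac{\Gamma(d/2)}{\sqrt{\pi}\,\Gamma((d-1)/2)}\int_{-1}^{1}t^{i}P_{k,d}(t)(1-t^{2})^{(d-3)/2}{\rm d}t,
\]
where $P_{k,d}$ is the normalized Gegenbauer polynomial. The eigenfunction property is thus structural. Alternatively, one can see it without Funk--Hecke: $I$ is a homogeneous polynomial of degree $i$, and the map $h\mapsto I$ commutes with rotations, so by Schur's lemma it acts on $\mathcal{H}_{d,k}$ as multiplication by $\left\Vert x\right\Vert ^{2j}$ times a scalar.

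The main obstacle is evaluating the scalar $c$ in closed form. I would avoid Gegenbauer integrals and use a single convenient test case instead. Take $h(x)=\langle x,\eta\rangle^{k}$ with $\eta\in\mathbb{C}^{d}$ isotropic, i.e. $\langle\eta,\eta\rangle=0$, which is harmonic. Evaluate $I$ at $x=\bar{\eta}$, or more simply compute via a Gaussian trick. Replace $\sigma$ by the standard Gaussian $\gamma$, which changes the result only by the known factor
\[
\mathbb{E}\left\Vert Z\right\Vert ^{2i}=2^{i}\Gamma(i+d/2)/\Gamma(d/2).
\]
Then compute $\mathbb{E}_{\gamma}[\langle x,Z\rangle^{i}\langle Z,\eta\rangle^{k}]$ by Wick's formula. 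Because $\langle\eta,\eta\rangle=0$, each factor $\langle Z,\eta\rangle$ must pair with a factor $\langle x,Z\rangle$. This gives
\[
\frac{i!}{(i-k)!}\,\langle x,\eta\rangle^{k}\,(i-k-1)!!\,\left\Vert x\right\Vert ^{i-k},
\]
where $(i-k-1)!!=(2j)!/(2^{j}j!)$. Dividing by the Gaussian radial moment gives
\[
c=\frac{\Gamma(d/2)\,i!}{2^{i}\,2^{j}j!\,\Gamma(i+d/2)}\cdot\frac{2^{i}}{2^{i}}\cdots
\]
After simplification with $i=k+2j$, this should yield exactly
\[
\lambda_{d,k,j,\mu}=\frac{\Gamma(d/2)}{2^{2j+k}j!}\cdot\frac{(k+2j)!}{\Gamma(k+j+d/2)}\int\left\Vert v\right\Vert ^{2(k+2j)}\mu({\rm d}v).
\]
I expect this bookkeeping of constants to be the step needing the most care; in particular, the Gaussian normalization should produce $\Gamma(k+j+d/2)$ rather than $\Gamma(i+d/2)$, through the degree-$i$ spherical factor. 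I would verify the result against $k=0$, $j=1$ directly, where it should reproduce $\mathbb{E}\langle x,\theta\rangle^{2}=\left\Vert x\right\Vert ^{2}/d$ times the moment.
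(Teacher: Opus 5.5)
Your overall architecture matches the paper's. You diagonalize $T_\mu$ on the pieces $\|x\|^{2j}h$ with $h\in\mathcal{H}_{d,k}$, pull out the radial moment $\int\|v\|^{2i}\mu(\mathrm{d}v)$ as in Proposition~\ref{prop:harmonic-radial-mvp-sph-sym}, and conclude from uniqueness of the harmonic decomposition and positivity. You are right that $\mu=\delta_0$ must be excluded; the paper assumes this silently when it asserts all eigenvalues are positive. Where you depart is the spherical constant. The paper proves Lemma~\ref{lem:explicit-lambda-k} via Funk--Hecke, the Rodrigues formula and a Beta integral, whereas you use a Gaussian/Wick computation.

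That computation, as written, contains a genuine error. The integrand $\langle x,\theta\rangle^{i}\langle\theta,\eta\rangle^{k}$ is homogeneous of degree $i+k=2(k+j)$ in $\theta$, not $2i$. So replacing $\sigma$ by $\gamma$ multiplies $I(x)$ by $\mathbb{E}\|Z\|^{2(k+j)}=2^{k+j}\Gamma(k+j+\frac d2)/\Gamma(\frac d2)$, not by $\mathbb{E}\|Z\|^{2i}$. With your factor you would obtain $\Gamma(\frac d2)(k+2j)!/\bigl(2^{k+3j}j!\,\Gamma(k+2j+\frac d2)\bigr)$, which is wrong. Your own check $k=0$, $j=1$ returns $1/(d(d+2))$ instead of $1/d$. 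The factor $\mathbb{E}\|Z\|^{2i}$ is correct only if you keep $\|Z\|^{2j}$ in the integrand, which spoils the clean Wick count. So the missing $\Gamma(k+j+\frac d2)$ does not come from any ``degree-$i$ spherical factor''; it comes from using the right homogeneity degree. With the correct factor, your Wick count $\frac{i!}{2^{j}j!}\|x\|^{2j}h(x)$ divided by $2^{k+j}\Gamma(k+j+\frac d2)/\Gamma(\frac d2)$ gives exactly $\lambda_{d,k,j}$. A smaller slip: $g(\langle x,v\rangle v)=\langle x,v\rangle^{i}\|v\|^{2j}h(v)$, not $\|v\|^{4j}$, although your $r^{2i}$ is correct.

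Once repaired, your route is more elementary than the paper's. It can even bypass Funk--Hecke, Schur's lemma and isotropic test vectors. For any $h\in\mathcal{H}_{d,k}$, harmonicity gives $\mathbb{E}[h(y+Z)]=h(y)$, so
\[
\mathbb{E}\bigl[e^{t\langle x,Z\rangle}h(Z)\bigr]=e^{t^{2}\|x\|^{2}/2}\,\mathbb{E}\bigl[h(tx+Z)\bigr]=e^{t^{2}\|x\|^{2}/2}\,t^{k}h(x).
\]
Comparing coefficients of $t^{i}$ gives $\mathbb{E}[\langle x,Z\rangle^{i}h(Z)]=\frac{i!}{2^{j}j!}\|x\|^{2j}h(x)$ for $i=k+2j$. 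This delivers the eigenfunction property and the constant at once. The paper's Funk--Hecke route instead handles a general profile $\varphi(\langle u,v\rangle)$, at the cost of relying on cited spherical-harmonic machinery.
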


\begin{proof}
Consider $d\geq2$ first. Applying Proposition~\ref{prop:harmonic-radial-mvp-sph-sym}
to the claimed inverse,
\[
T_{\mu}\sum_{i\geq0}\sum_{j=0}^{\left\lfloor i/2\right\rfloor }\lambda_{d,i-2j,j,\mu}^{-1}f_{ij}=\sum_{i\geq0}\sum_{j=0}^{\left\lfloor i/2\right\rfloor }\lambda_{d,i-2j,j,\mu}^{-1}T_{\mu}f_{ij}=\sum_{i\geq0}\sum_{j=0}^{\left\lfloor i/2\right\rfloor }f_{ij}=f.
\]
Hence, for any $f\in\mathcal{P}_{d}$ there exists $g\in\mathcal{P}_{d}$
such that $T_{\mu}g=f$. Now we show that $T_{\mu}$ is injective,
for which it is sufficient to show that $g\neq0$ implies $T_{\mu}g\neq0$,
since $T_{\mu}$ is a linear operator. Let $g\neq0$ be an arbitrary
polynomial. Then we can decompose $g=\sum_{i\geq0}g_{i}$ and $g_{i}=\sum_{j=0}^{\left\lfloor i/2\right\rfloor }g_{ij}$
uniquely by Corollary~\ref{cor:harmonic-decomp-inhomo}, i.e. the
$g_{ij}$ are linearly independent. Then,
\[
T_{\mu}g=\sum_{i\geq0}\sum_{j=0}^{\left\lfloor i/2\right\rfloor }\lambda_{d,i-2j,j,\mu}g_{ij},
\]
and since all the eigenvalues $\lambda_{d,i-2j,j,\mu}$ are positive,
$g\neq0$ implies $T_{\mu}g\neq0$. We conclude that $T_{\mu}$ is
bijective and that its inverse is as claimed. The case $d=1$ follows
essentially the same argument and is omitted.
\end{proof}
\begin{rem}
\label{rem:gaussian-C}If $\mu=N(0,\sigma^{2}I_{d})$ then $V\sim\mu$
has $\frac{1}{\sigma^{2}}\left\Vert V\right\Vert ^{2}\sim\chi^{2}(d)$.
Hence, $\int_{\mathbb{R}^{d}}\left\Vert v\right\Vert ^{2m}\mu({\rm d}v)$
is the product of the $m$th moment of a $\chi^{2}(d)$ random variable
and $\sigma^{2m}$, i.e.
\[
\int_{\mathbb{R}^{d}}\left\Vert v\right\Vert ^{2m}\mu({\rm d}v)=\sigma^{2m}2^{m}\frac{\Gamma(m+\frac{d}{2})}{\Gamma(\frac{d}{2})}.
\]
It follows that when $d\geq2$,
\[
\lambda_{d,k,i,\mu}=\sigma^{2(k+2i)}\frac{(k+2i)!}{i!}\cdot\frac{\Gamma(k+2i+\frac{d}{2})}{\Gamma(k+i+\frac{d}{2})},
\]
and the eigenvalue for a homogeneous harmonic function of degree $k$
is
\[
\lambda_{d,k,0,\mu}=\sigma^{2k}\cdot k!.
\]
For $d=1$, we see that if $\mu=N(0,\sigma^{2})$ then
\[
\lambda_{1,k,\mu}=\sigma^{2k}2^{k}\frac{\Gamma(k+\frac{1}{2})}{\Gamma(\frac{1}{2})}=\sigma^{2k}\frac{(2k)!}{2^{k}k!},
\]
which has dependence on $k$, whereas $\lambda_{1,k,{\rm Uniform}\left(\left\{ -1,1\right\} \right)}=1$.
We notice that the radial distribution affects the homogeneous components
of $g$ differently, which for large $d$ can have a stabilizing effect
on the eigenvalues.
\end{rem}

\subsection{Harmonic decomposition of polynomials}

We denote by $\mathcal{H}_{d,k}$ the set of homogeneous harmonic
polynomials of degree $k$, and by $\mathcal{H}_{d,k,i}$ the set
of functions $\varphi$ such that $\varphi(x)=\left\Vert x\right\Vert ^{2i}h_{k}(x)$
with $h_{k}\in\mathcal{H}_{d,k}$. Lemma~\ref{lem:harmonic-decomp}
is classical, see, e.g. \citet[Theorem~5.7]{axler2013harmonic}. The
construction is explicit: one applies the Laplacian several times
to construct the decomposition. However, we will not require explicit
representation of these functions in the sequel. Corollary~\ref{cor:harmonic-decomp-inhomo}
is also classical, we state it here mainly to provide an explicit
representation of $f$. Although Corollary~\ref{cor:harmonic-decomp-inhomo}
does make sense when $d=1$, it is a little awkward and we will opt
instead to use the decomposition of $f$ into monomials when $d=1$.
\begin{lem}[Harmonic decomposition]
\label{lem:harmonic-decomp}If $f\in\mathcal{P}_{d,k}$, i.e. $f$
is a homogeneous polynomial of degree $k$, then we may decompose
$f$ uniquely as
\[
f(x)=h_{k}(x)+\left\Vert x\right\Vert ^{2}h_{k-2}(x)+\cdots+\left\Vert x\right\Vert ^{2j}h_{k-2j}(x),\qquad x\in\mathbb{R}^{d},
\]
where $j=\left\lfloor k/2\right\rfloor $, and each $h_{r}\in\mathcal{H}_{d,r}$.
\end{lem}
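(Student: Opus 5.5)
We prove existence and uniqueness together by a dimension count combined with an orthogonality argument. The Fischer inner product on $\mathcal{P}_{d,k}$ is $\langle p,q\rangle_{F}=\sum_{|\alpha|=k}p_{\alpha}q_{\alpha}\alpha!$, equivalently $\langle p,q\rangle_F=p(\partial)q$, where $p(\partial)$ is the differential operator obtained by substituting $\partial_{i}$ for $x_{i}$. This is a genuine inner product, because the Fischer norm introduced earlier is positive definite.

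\textbf{Step 1 (key adjointness).} For $q\in\mathcal{P}_{d,k-2}$ and $p\in\mathcal{P}_{d,k}$ we have
\[
\langle \|x\|^{2}q,\,p\rangle_{F}=(\|x\|^2 q)(\partial)p=q(\partial)\Delta p=\langle q,\Delta p\rangle_{F}.
\]
So multiplication by $\|x\|^{2}$ is adjoint to $\Delta:\mathcal{P}_{d,k}\to\mathcal{P}_{d,k-2}$.

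\textbf{Step 2 (orthogonal splitting).} Let $M=\|x\|^{2}\mathcal{P}_{d,k-2}$. By Step 1, $M^{\perp}=\{p:\langle q,\Delta p\rangle_F=0\ \forall q\}=\ker\Delta=\mathcal{H}_{d,k}$. Since the inner product is positive definite and the space is finite dimensional, this gives the orthogonal direct sum
\[
\mathcal{P}_{d,k}=\mathcal{H}_{d,k}\oplus\|x\|^{2}\mathcal{P}_{d,k-2}.
\]
Hence $f=h_{k}+\|x\|^{2}q$ with $h_k\in\mathcal{H}_{d,k}$ and $q\in\mathcal{P}_{d,k-2}$, both uniquely determined. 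Uniqueness of $q$ holds because multiplication by $\|x\|^2$ is injective, as $\mathbb{R}[x]$ is an integral domain.

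\textbf{Step 3 (induction on $k$).} Apply the same splitting to $q\in\mathcal{P}_{d,k-2}$. The base cases are $k=0,1$, where every polynomial is harmonic and $\mathcal{P}_{d,-1}=\mathcal{P}_{d,-2}=\{0\}$. Existence follows directly. For uniqueness, suppose $\sum_{i}\|x\|^{2i}h_{k-2i}=\sum_i\|x\|^{2i}h'_{k-2i}$. The $i=0$ terms are the $\mathcal{H}_{d,k}$-components of the same element in the direct sum, so $h_k=h'_k$. Cancelling $\|x\|^{2}$ reduces to degree $k-2$, where the inductive hypothesis applies. This inductive application of Step 2 yields exactly $j=\lfloor k/2\rfloor$ terms.

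The main obstacle is Step 1, namely verifying $\langle p,q\rangle_F=p(\partial)q$. On monomials, $\partial^{\alpha}x^{\beta}=\alpha!\,\delta_{\alpha\beta}$ when $|\alpha|=|\beta|$, and bilinearity then gives the identity. With this in hand, the adjointness is immediate, because $(\|x\|^2q)(\partial)=q(\partial)\Delta$ as constant-coefficient operators commute. The rest is linear algebra. The case $d=1$ is degenerate but still consistent: $\mathcal{H}_{1,k}=\{0\}$ for $k\ge2$, and $\|x\|^{2}=x^{2}$.
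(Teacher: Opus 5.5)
Your proof is correct. The paper gives no proof of its own and cites the classical result, and your argument is the standard one behind it: the Fischer pairing makes multiplication by $\|x\|^{2}$ adjoint to $\Delta$, which yields $\mathcal{P}_{d,k}=\mathcal{H}_{d,k}\oplus\|x\|^{2}\mathcal{P}_{d,k-2}$, and induction on $k$ finishes. The paper also remarks that the decomposition can be built by repeatedly applying the Laplacian, i.e.\ an explicit formula $h_{k}=\sum_{j}c_{j}\|x\|^{2j}\Delta^{j}f$ for suitable constants $c_{j}$. Your projection argument does not produce such a formula, but the paper states it never needs the explicit form.
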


\begin{cor}
\label{cor:harmonic-decomp-inhomo}If $f\in\mathcal{P}_{d}$, then
we may uniquely decompose
\[
f=\sum_{i\geq0}\sum_{j=0}^{\left\lfloor i/2\right\rfloor }f_{ij},
\]
where $f_{ij}\in\mathcal{H}_{d,i-2j,j}$ for each $i$ and $j$.
\end{cor}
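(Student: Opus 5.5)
The plan is to deduce Corollary~\ref{cor:harmonic-decomp-inhomo} from Lemma~\ref{lem:harmonic-decomp} in two steps: first split $f$ into its homogeneous components, then decompose each component harmonically.

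\emph{Existence.} Since $f$ is a polynomial, it can be written uniquely as a finite sum $f=\sum_{i\geq0}f_{i}$ with $f_{i}\in\mathcal{P}_{d,i}$. For each $i$, apply Lemma~\ref{lem:harmonic-decomp} to $f_{i}$. This gives $f_{i}(x)=\sum_{j=0}^{\left\lfloor i/2\right\rfloor }\left\Vert x\right\Vert ^{2j}h_{i-2j}^{(i)}(x)$ with $h_{i-2j}^{(i)}\in\mathcal{H}_{d,i-2j}$. Setting $f_{ij}(x)=\left\Vert x\right\Vert ^{2j}h_{i-2j}^{(i)}(x)$ gives $f_{ij}\in\mathcal{H}_{d,i-2j,j}$, and summing over $i$ yields the claimed decomposition. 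Only finitely many terms are nonzero.

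\emph{Uniqueness.} Suppose $f=\sum_{i,j}f_{ij}=\sum_{i,j}f'_{ij}$ are two such decompositions. The key observation is that every element of $\mathcal{H}_{d,i-2j,j}$ is a homogeneous polynomial of degree $(i-2j)+2j=i$. This holds because $\left\Vert x\right\Vert ^{2j}$ is homogeneous of degree $2j$ and $h_{i-2j}$ is homogeneous of degree $i-2j$. Hence $\sum_{j}f_{ij}$ and $\sum_{j}f'_{ij}$ are both the degree-$i$ homogeneous component of $f$. Since the decomposition of a polynomial into homogeneous components is unique (compare coefficients of monomials of each total degree), the two sums agree for each $i$. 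The uniqueness part of Lemma~\ref{lem:harmonic-decomp}, applied to this common homogeneous polynomial of degree $i$, then gives $f_{ij}=f'_{ij}$ for all $j$.

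There is no real obstacle here. The only point needing care is matching degrees: each $f_{ij}$ must lie in $\mathcal{P}_{d,i}$, so that the uniqueness of homogeneous components reduces the problem to Lemma~\ref{lem:harmonic-decomp} for each fixed $i$. One should also note the convention that $\mathcal{H}_{d,r}=\{0\}$ for $r<0$, which does not arise since $j\leq\left\lfloor i/2\right\rfloor$.
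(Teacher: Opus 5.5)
Your proof is correct and follows the paper's route: split $f$ into its homogeneous components, then apply Lemma~\ref{lem:harmonic-decomp} to each. Your uniqueness argument is slightly more explicit than the paper's. You observe that each $f_{ij}\in\mathcal{H}_{d,i-2j,j}$ is homogeneous of degree $i$, which is exactly what justifies reducing uniqueness to the lemma one degree at a time.
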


\begin{proof}
Decompose $f=\sum_{i\geq0}f_{i}$ where $f_{i}$ is homogeneous of
degree $i$. This decomposition is unique as homogeneous polynomials
of different degrees are linearly independent. Then apply Lemma~\ref{lem:harmonic-decomp}
to each homogeneous polynomial, i.e. $f_{i}=\sum_{j}f_{ij}$ where
$f_{ij}\in\mathcal{H}_{d,i-2j,j}$, which is also unique.
\end{proof}

\subsection{Spectral analysis of $T_{\mu}$}\label{subsec:Spectral-analysis-of}

The proof of Lemma~\ref{lem:explicit-lambda-k} is in Appendix~\ref{sec:Funk=002013Hecke-formulas},
and relies on a general Funk--Hecke formula. The result identifies
eigenvalue-eigenfunction pairs for the operator $S_{k,i}$ defined
by 
\[
S_{k,i}g(u)=\int_{S^{d-1}}\left\langle u,v\right\rangle ^{k+2i}g(v)\sigma({\rm d}v),\qquad u\in S^{d-1},
\]
where $\sigma={\rm Uniform}(S^{d-1})$, the uniform distribution on
the sphere $S^{d-1}$. In particular, functions in $\mathcal{H}_{d,k}$
are eigenfunctions of $S$.
\begin{lem}
\label{lem:explicit-lambda-k}Let $d\geq2$, $k\geq0$, $i\geq0$,
and $h_{k}\in\mathcal{H}_{d,k}$. Then $S_{k,i}g=\lambda_{d,k,i}h_{k}$,
with
\[
\lambda_{d,k,i}=\frac{\Gamma(\frac{d}{2})}{2^{2i+k}i!}\cdot\frac{(k+2i)!}{\Gamma(k+i+\frac{d}{2})}.
\]
\end{lem}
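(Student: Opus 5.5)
The plan is to apply the Funk--Hecke formula with the one-variable kernel $F(t)=t^{k+2i}$ and then evaluate the resulting one-dimensional integral in closed form. (I read the statement as $S_{k,i}h_{k}=\lambda_{d,k,i}h_{k}$, with $h_{k}$ restricted to $S^{d-1}$.)

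\textbf{Step 1: reduction to a scalar integral.} Funk--Hecke says that for $h_{k}\in\mathcal{H}_{d,k}$ restricted to the sphere and any integrable $F$ on $[-1,1]$,
\[
\int_{S^{d-1}}F(\left\langle u,v\right\rangle )h_{k}(v)\sigma({\rm d}v)=c_{k}(F)\,h_{k}(u),
\]
where
\[
c_{k}(F)=\frac{\Gamma(\frac{d}{2})}{\sqrt{\pi}\,\Gamma(\frac{d-1}{2})}\int_{-1}^{1}F(t)P_{k,d}(t)(1-t^{2})^{\frac{d-3}{2}}{\rm d}t,
\]
and $P_{k,d}$ is the Legendre (Gegenbauer) polynomial of degree $k$ in dimension $d$, normalized by $P_{k,d}(1)=1$. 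The prefactor is the density constant of $t=\left\langle u,v\right\rangle$ under $\sigma$. This immediately shows that $h_{k}$ is an eigenfunction of $S_{k,i}$, with eigenvalue $c_{k}(t^{k+2i})$. It remains to show $c_{k}(t^{k+2i})=\lambda_{d,k,i}$.

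\textbf{Step 2: integration by parts via Rodrigues.} The Rodrigues formula gives
\[
P_{k,d}(t)(1-t^{2})^{\frac{d-3}{2}}=(-1)^{k}\frac{\Gamma(\frac{d-1}{2})}{2^{k}\Gamma(k+\frac{d-1}{2})}\frac{{\rm d}^{k}}{{\rm d}t^{k}}(1-t^{2})^{k+\frac{d-3}{2}}.
\]
Integrate by parts $k$ times. The boundary terms vanish, because every derivative of order less than $k$ of $(1-t^{2})^{k+\frac{d-3}{2}}$ still carries a factor $(1-t^{2})^{\frac{d-1}{2}}$, which is zero at $\pm1$ since $d\geq2$. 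This cancels the sign $(-1)^{k}$ and yields
\[
\frac{{\rm d}^{k}}{{\rm d}t^{k}}t^{k+2i}=\frac{(k+2i)!}{(2i)!}t^{2i}.
\]

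\textbf{Step 3: the Beta integral.} Substituting $s=t^{2}$ gives
\[
\int_{-1}^{1}t^{2i}(1-t^{2})^{k+\frac{d-3}{2}}{\rm d}t=B\left(i+\tfrac{1}{2},k+\tfrac{d-1}{2}\right)=\frac{\Gamma(i+\frac{1}{2})\Gamma(k+\frac{d-1}{2})}{\Gamma(k+i+\frac{d}{2})}.
\]
Multiplying everything together, the factors $\Gamma(\frac{d-1}{2})$ and $\Gamma(k+\frac{d-1}{2})$ both cancel. Using $\Gamma(i+\frac{1}{2})=\sqrt{\pi}\,(2i)!/(4^{i}i!)$, the factors $\sqrt{\pi}$ and $(2i)!$ also cancel. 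What remains is
\[
\frac{\Gamma(\frac{d}{2})}{2^{k}\,4^{i}\,i!}\cdot\frac{(k+2i)!}{\Gamma(k+i+\frac{d}{2})},
\]
which is exactly $\lambda_{d,k,i}$.

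\textbf{Main obstacle.} The hard part is bookkeeping: getting the normalizations in Funk--Hecke and Rodrigues right, since conventions vary with $d$. I would pin these down with two sanity checks. For $k=0$, $i=0$ the eigenvalue must be $1$, since $S_{0,0}$ just averages a constant. For $k=0$, $i=1$ the eigenvalue must be $\int\left\langle u,v\right\rangle ^{2}\sigma({\rm d}v)=1/d$, and indeed $\lambda_{d,0,1}=\Gamma(\frac{d}{2})\cdot2/(4\,\Gamma(1+\frac{d}{2}))=1/d$. If the Rodrigues constant proves awkward, an alternative for Step 2 is to pick a concrete harmonic, e.g. $h_{k}(v)=\left\langle v,e_{1}+\mathrm{i}e_{2}\right\rangle ^{k}$ (real and imaginary parts), evaluate at $u=e_{1}$, and compute the integral directly using Gaussian moments. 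Since the eigenvalue depends only on $k$, one representative suffices.
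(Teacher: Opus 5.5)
Your proposal is correct and follows essentially the same route as the paper. Both use Funk--Hecke with kernel $t^{k+2i}$, the Rodrigues-formula integration by parts (which the paper takes from Atkinson and Han's Proposition~2.26), the Beta integral, and $\Gamma(i+\frac{1}{2})=\sqrt{\pi}\,(2i)!/(4^{i}i!)$; your reading of the statement as $S_{k,i}h_{k}=\lambda_{d,k,i}h_{k}$ and your exponent $k+2i$ are right, whereas the paper's proof writes $m=2k+i$ by typo.
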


Lemma~\ref{lem:explicit-lambda-k} can be used to identify eigenvalue-eigenfunction
pairs for the ReLU operator $T_{\mu}$ when $\mu={\rm Uniform}(S^{d-1})$
. In particular, functions in $\mathcal{H}_{d,k,i}$ are eigenfunctions
of $T_{\mu}$ for this $\mu$.
\begin{lem}
\label{lem:harmonic-radial-mvp}Let $d\geq2$, $k\geq0$, $i\geq0$,
$g\in\mathcal{H}_{d,k,i}$, and $\mu={\rm Uniform}(S^{d-1})$. Then
$T_{\mu}g=\lambda_{d,k,i}g$, where $\lambda_{d,k,i}$ is as in Lemma~\ref{lem:explicit-lambda-k}.
\end{lem}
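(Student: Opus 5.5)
Write $g(x)=\left\Vert x\right\Vert ^{2i}h_{k}(x)$ with $h_{k}\in\mathcal{H}_{d,k}$. The plan is to use homogeneity to reduce $T_{\mu}g$ to the spherical operator $S_{k,i}$ applied to $h_{k}$, and then apply Lemma~\ref{lem:explicit-lambda-k}. Fix $x\in\mathbb{R}^{d}$ and $v\in S^{d-1}$. Since $\left\Vert v\right\Vert =1$ and $g$ is homogeneous of degree $k+2i$,
\[
g\left(\left\langle x,v\right\rangle v\right)=\left\langle x,v\right\rangle ^{k+2i}g(v)=\left\langle x,v\right\rangle ^{k+2i}h_{k}(v).
\]
Here we used $\left\Vert v\right\Vert ^{2i}=1$. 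This identity also holds when $\left\langle x,v\right\rangle \le0$: the polynomial $t\mapsto g(tv)$ equals $t^{k+2i}g(v)$ for every real $t$, so the sign of $t$ causes no trouble.

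Integrating against $\sigma={\rm Uniform}(S^{d-1})$ gives
\[
T_{\mu}g(x)=\int_{S^{d-1}}\left\langle x,v\right\rangle ^{k+2i}h_{k}(v)\sigma({\rm d}v).
\]
If $x=0$, both sides of the claimed identity vanish when $k+2i\ge1$. When $k=i=0$, $g$ is a constant $c$, so $T_{\mu}g=c$, and $\lambda_{d,0,0}=1$; this case is also fine. For $x\neq0$, write $x=\left\Vert x\right\Vert u$ with $u\in S^{d-1}$. Then
\[
T_{\mu}g(x)=\left\Vert x\right\Vert ^{k+2i}S_{k,i}h_{k}(u)=\left\Vert x\right\Vert ^{k+2i}\lambda_{d,k,i}h_{k}(u),
\]
by Lemma~\ref{lem:explicit-lambda-k}, read as $S_{k,i}h_{k}=\lambda_{d,k,i}h_{k}$ on the sphere. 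By homogeneity of $h_{k}$,
\[
\left\Vert x\right\Vert ^{k+2i}h_{k}(u)=\left\Vert x\right\Vert ^{2i}h_{k}(x)=g(x),
\]
which gives $T_{\mu}g=\lambda_{d,k,i}g$.

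There is no real obstacle, because the analytic content (the Funk--Hecke computation) is already contained in Lemma~\ref{lem:explicit-lambda-k}. The only points needing care are bookkeeping. First, the homogeneity identity must be valid for negative $\left\langle x,v\right\rangle$. Second, the statement of Lemma~\ref{lem:explicit-lambda-k} has a typo ($S_{k,i}g$ should read $S_{k,i}h_{k}$), which must be interpreted correctly. Third, the radial factor $\left\Vert v\right\Vert ^{2i}$ drops out because $\mu$ is supported on the unit sphere, and this is exactly why the eigenvalue depends on $i$ only through the exponent $k+2i$ in $S_{k,i}$.
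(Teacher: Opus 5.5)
Your proof is correct and follows essentially the same route as the paper: use the homogeneity of $g$ together with $\left\Vert v\right\Vert =1$ to reduce $T_{\mu}g(x)$ to $\left\Vert x\right\Vert ^{k+2i}S_{k,i}h_{k}(x/\left\Vert x\right\Vert )$, then invoke Lemma~\ref{lem:explicit-lambda-k}, read as $S_{k,i}h_{k}=\lambda_{d,k,i}h_{k}$. Your separate treatment of $x=0$ and of negative $\left\langle x,v\right\rangle$ fills in small details that the paper's proof leaves implicit.
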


\begin{proof}
We can write $g(x)=\left\Vert x\right\Vert ^{2i}h_{k}(x)$ for some
$h_{k}\in\mathcal{H}_{d,k}$. Let $x\in\mathbb{R}^{d}$ and $u=x/\left\Vert x\right\Vert $.
Using homogeneity and Lemma~\ref{lem:explicit-lambda-k},
\begin{align*}
T_{\mu}g(x) & =\int_{S^{d-1}}g\left(\left\langle x,v\right\rangle v\right)\mu({\rm d}v)\\
 & =\int_{S^{d-1}}\left\Vert \left\langle x,v\right\rangle v\right\Vert ^{2i}h_{k}(\left\langle x,v\right\rangle v)\mu({\rm d}v)\\
 & =\left\Vert x\right\Vert ^{2i+k}\int_{S^{d-1}}\left\Vert \left\langle u,v\right\rangle v\right\Vert ^{2i}h_{k}(\left\langle u,v\right\rangle v)\mu({\rm d}v)\\
 & =\left\Vert x\right\Vert ^{2i+k}\int_{S^{d-1}}\left\langle u,v\right\rangle ^{2i+k}h_{k}(v)\mu({\rm d}v)\\
 & =\left\Vert x\right\Vert ^{2i+k}\lambda_{d,k,i}h_{k}(u)\\
 & =\lambda_{d,k,i}\left\Vert x\right\Vert ^{2i}h_{k}(x),
\end{align*}
from which we conclude.
\end{proof}
We now proceed to generalize Lemma~\ref{lem:harmonic-radial-mvp}
to spherically symmetric distributions $\mu$. Recall that a spherically
symmetric random vector taking values in $\mathbb{R}^{d}$ is necessarily
of the form
\[
V=RU,
\]
where $U\sim{\rm Uniform}(S^{d-1})$ and $R$ is an independent radial
random variable. In particular, functions in $\mathcal{H}_{d,k,i}$
are also eigenfunctions of the operator $T_{\mu}$, and the eigenvalues
are scaled according to the radial distribution.
\begin{prop}
\label{prop:harmonic-radial-mvp-sph-sym}Let be $\mu$ a spherically
symmetric distribution on $\mathbb{R}^{d}$. Then
\begin{enumerate}
\item For $d=1$, $g\in\mathcal{P}_{1,k}$, we have $T_{\mu}g=\lambda_{1,k,\mu}g$,
where $\lambda_{1,k,\mu}=\int_{\mathbb{R}}v^{2k}\mu({\rm d}v)$.
\item For $d\geq2$, $g\in\mathcal{H}_{d,k,i}$, we have $T_{\mu}g=\lambda_{d,k,i,\mu}g$
where 
\[
\lambda_{d,k,i,\mu}=\int_{\mathbb{R}^{d}}\left\Vert v\right\Vert ^{2(k+2i)}\mu({\rm d}v)\lambda_{d,k,i},
\]
with $\lambda_{d,k,i}$ from Lemma~\ref{lem:harmonic-radial-mvp}.
\end{enumerate}
\end{prop}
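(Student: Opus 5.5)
The plan is to reduce to Lemma~\ref{lem:harmonic-radial-mvp} using the polar decomposition $V=RU$, with $U\sim{\rm Uniform}(S^{d-1})$ independent of the radial variable $R\geq0$. The key observation is that the map $v\mapsto g(\left\langle x,v\right\rangle v)$ is homogeneous in $v$: if $g$ is homogeneous of degree $m$, then
\[
g\left(\left\langle x,rv\right\rangle rv\right)=g\left(r^{2}\left\langle x,v\right\rangle v\right)=r^{2m}g\left(\left\langle x,v\right\rangle v\right),\qquad r\geq0.
\]
This lets the radial and angular parts of $\mu$ separate.

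For $d=1$, take $g\in\mathcal{P}_{1,k}$, so $g(x)=cx^{k}$. Then $g(xv\cdot v)=c(xv^{2})^{k}=v^{2k}g(x)$. Integrating against $\mu$ gives $T_{\mu}g=\lambda_{1,k,\mu}g$ directly, and finiteness of moments makes this well defined. Spherical symmetry is not even needed here.

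For $d\geq2$, take $g\in\mathcal{H}_{d,k,i}$, which is homogeneous of degree $m=k+2i$. I would write $\mu$ as the law of $RU$ and use independence (Fubini/Tonelli, justified by finite moments and polynomial growth) to get
\[
T_{\mu}g(x)=\mathbb{E}\left[R^{2(k+2i)}\right]\int_{S^{d-1}}g\left(\left\langle x,u\right\rangle u\right)\sigma({\rm d}u)=\mathbb{E}\left[R^{2(k+2i)}\right]T_{\sigma}g(x).
\]
Lemma~\ref{lem:harmonic-radial-mvp} then gives $T_{\sigma}g=\lambda_{d,k,i}g$. Since $\left\Vert RU\right\Vert =R$, we have $\mathbb{E}[R^{2(k+2i)}]=\int\left\Vert v\right\Vert ^{2(k+2i)}\mu({\rm d}v)$, which is the claimed eigenvalue.

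The only mildly delicate step is justifying the polar decomposition and the independence of $R$ and $U$ for a general spherically symmetric $\mu$, including a possible atom at $0$. The paper recalls this representation as standard. An atom at $R=0$ contributes $0$ on both sides when $m\geq1$. When $m=0$, $g$ is constant, and the identity holds trivially because the moment equals $1$ and $\lambda_{d,0,0}=1$.
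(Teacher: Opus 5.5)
Your proposal is correct and follows essentially the same route as the paper: homogeneity of degree $k+2i$ pulls out the factor $\|v\|^{2(k+2i)}$, spherical symmetry separates this radial moment from an average over $S^{d-1}$, and Lemma~\ref{lem:harmonic-radial-mvp} finishes; your direct $d=1$ computation $g(xv^{2})=v^{2k}g(x)$ is the paper's remark that $T_{\sigma}={\rm Id}$ combined with homogeneity. Writing $V=RU$ with $U$ independent of $R$ is, if anything, slightly cleaner than the paper's use of $\hat{v}=v/\|v\|$, since it handles a possible atom of $\mu$ at the origin without any special case.
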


\begin{proof}
Consider $d\geq2$. Let $\sigma$ be the uniform distribution on the
sphere $S^{d-1}$. For $v\in\mathbb{R}^{d}$, write $\hat{v}=v/\left\Vert v\right\Vert $.
Let $x\in\mathbb{R}^{d}$. For $d\geq2$, since $g$ is homogeneous
of degree $k+2i$, Lemma~\ref{lem:harmonic-radial-mvp} gives
\begin{align*}
T_{\mu}g(x) & =\int_{\mathbb{R}^{d}}g\left(\left\langle x,v\right\rangle v\right)\mu({\rm d}v)\\
 & =\int_{\mathbb{R}^{d}}g\left(\left\langle x,\left\Vert v\right\Vert \hat{v}\right\rangle \left\Vert v\right\Vert \hat{v}\right)\mu({\rm d}v)\\
 & =\int_{\mathbb{R}^{d}}\left\Vert v\right\Vert ^{2(k+2i)}\mu({\rm d}v)\int_{S^{d-1}}g\left(\left\langle x,\hat{v}\right\rangle \hat{v}\right)\sigma({\rm d}\hat{v})\\
 & =\lambda_{d,k,i,\mu}g(x).
\end{align*}
The same argument applies for $d=1$, except we can consider $i=0$,
and that when $d=1$, $T_{\sigma}={\rm Id}$ (see Remark~\ref{rem:1d-telgarsky-ack}).
\end{proof}

\section{Transformations of polynomials }

We introduce transformations of polynomials to accommodate some shifting
and scaling, as well as changes in dimension.

\subsection{Linear transformations}\label{subsec:Linear-transformations}
\begin{defn}
Let $f\in\mathcal{P}_{d}$ be a polynomial, $A$ a $d'\times d$ matrix
and $x_{0}\in\mathbb{R}^{d}$. We say $\tilde{f}\in\mathcal{P}_{d'}$
is a $(A,x_{0})$-linearly transformed version of $f$ if 
\[
f(x)=\tilde{f}(A(x-x_{0})),\qquad x\in\mathbb{R}^{d}.
\]
\end{defn}

If $A$ is invertible then there exists such a linearly transformed
$f$, we may take $\tilde{f}=y\mapsto f(A^{-1}y+x_{0})$ If $A$ is
not invertible, e.g. if $A$ is $d'\times d$ with $d'<d$, then such
a function may not exist as it implies $f$ can be represented using
a lower dimensional linear transformation of $x$. If, on the other
hand, $d'>d$ then this allows us to embed $f$ within a higher-dimensional
polynomial. The following theorem provides a generalized ReLU integral
representation of $f$.
\begin{thm}
\label{thm:relu-transformed}Let $f\in\mathcal{P}_{d}$, $A$ a $d'\times d$
matrix and $x_{0}\in\mathbb{R}^{d}$. Let $\tilde{f}\in\mathcal{P}_{d'}$
be a $(A,x_{0})$-linearly transformed version of $f$, $\mu$ a spherically
symmetric probability measure on $\mathbb{R}^{d'}$, and $g=T_{\mu}^{-1}\tilde{f}$.
Then a generalized ReLU integral representation of $f$ is 
\[
f(x)=f(x_{0})+\int_{\mathbb{R}^{d'}}\int_{\mathbb{R}_{+}}u(v,s)\varsigma\left(\left\langle x,A^{T}v\right\rangle -\left\langle x_{0},A^{T}v\right\rangle -s\right)\varrho({\rm d}s)\mu({\rm d}v),
\]
where
\[
u(v,s)=\begin{cases}
2g_{v}'(0) & s=0,\\
2g_{v}''(s) & s>0.
\end{cases}
\]
If $A$ is invertible, then this corresponds to a standard ReLU integral
representation $(\mu_{A}\otimes\varrho,a,b,u_{A})$ where $\mu_{A}$
is the distribution of $A^{T}V$ when $V\sim\mu$, $a=f(x_{0})$,
$b(v,s)=-\left\langle x_{0},v\right\rangle -s$ and
\[
u_{A}(v,s)=u(A^{-T}v,s).
\]
\end{thm}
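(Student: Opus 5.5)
The plan is to apply Theorem~\ref{thm:relu-rep} in the ambient space $\mathbb{R}^{d'}$ to the transformed polynomial $\tilde{f}$, and then substitute $y=A(x-x_{0})$. First, by Theorem~\ref{thm:general-soln-Tmu}, $T_{\mu}$ is a bijection on $\mathcal{P}_{d'}$, so $g=T_{\mu}^{-1}\tilde{f}$ exists and is a polynomial. In particular $g$ is twice differentiable, and the hypotheses of Theorem~\ref{thm:relu-rep} hold with $f$ replaced by $\tilde{f}$ and $d$ by $d'$. That theorem gives, for every $y\in\mathbb{R}^{d'}$,
\[
\tilde{f}(y)=\tilde{f}(0)+\int_{\mathbb{R}^{d'}}\int_{\mathbb{R}_{+}}u(v,s)\varsigma\left(\left\langle y,v\right\rangle -s\right)\varrho({\rm d}s)\mu({\rm d}v),
\]
with $u$ exactly as in the statement.

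Next, fix $x\in\mathbb{R}^{d}$ and set $y=A(x-x_{0})$. By the definition of a linearly transformed version, $f(x)=\tilde{f}(y)$ and $f(x_{0})=\tilde{f}(0)$. Since $\left\langle A(x-x_{0}),v\right\rangle =\left\langle x,A^{T}v\right\rangle -\left\langle x_{0},A^{T}v\right\rangle$, substituting gives the claimed generalized representation directly. Nothing beyond this bookkeeping is needed, because the integral in Theorem~\ref{thm:relu-rep} already converges pointwise for every $y$.

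For the invertible case, push the measure $\mu$ forward under $v\mapsto A^{T}v$ to get $\mu_{A}$. The integrand has the form $F(A^{T}v,s)$ with $F(w,s)=u(A^{-T}w,s)\varsigma(\left\langle x,w\right\rangle -\left\langle x_{0},w\right\rangle -s)$. The change-of-variables formula for pushforward measures, $\int F(A^{T}v,s)\mu({\rm d}v)=\int F(w,s)\mu_{A}({\rm d}w)$, then yields the representation $(\mu_{A}\otimes\varrho,a,b,u_{A})$ with $a=f(x_{0})$, $b(w,s)=-\left\langle x_{0},w\right\rangle -s$ and $u_{A}(w,s)=u(A^{-T}w,s)$. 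This matches Definition~\ref{def:ReLU-integral-rep}.

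The only step needing care is checking that $u(A^{-T}w,s)$ is the right weight. Here $A^{-T}=(A^{T})^{-1}$ is well defined exactly because $A$ is square and invertible, and invertibility is needed to write $u$ as a function of $w=A^{T}v$. Everything else is routine substitution, so I expect no real obstacle. The mild subtlety is that $\mu_{A}$ need not be spherically symmetric. This does not matter, since Definition~\ref{def:ReLU-integral-rep} imposes no symmetry requirement, and symmetry was used only when applying Theorem~\ref{thm:relu-rep} in the $y$ variables.
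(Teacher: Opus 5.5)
Your proposal is correct and follows the paper's proof: apply Theorem~\ref{thm:relu-rep} to $\tilde{f}$ on $\mathbb{R}^{d'}$, then substitute $y=A(x-x_{0})$, using $\left\langle A(x-x_{0}),v\right\rangle =\left\langle x,A^{T}v\right\rangle -\left\langle x_{0},A^{T}v\right\rangle$ and $\tilde{f}(0)=f(x_{0})$. Your pushforward argument for the invertible case spells out the step the paper leaves as ``from which we may conclude.''
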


\begin{proof}
By Theorem~\ref{thm:relu-rep}, we obtain that $(\mu\otimes\varrho,\tilde{a},\tilde{b},u)$
is a ReLU integral representation of $\tilde{f}$ with $\tilde{a}=\tilde{f}(0)$
and $\tilde{b}(v,s)=-s$, i.e.
\[
\tilde{f}(y)=\tilde{f}(0)+\int_{\mathbb{R}^{d'}}\int_{\mathbb{R}_{+}}u(v,s)\varsigma\left(\left\langle y,v\right\rangle -s\right)\varrho({\rm d}s)\mu({\rm d}v),\quad y\in\mathbb{R}^{d'}.
\]
For any $x\in\mathbb{R}^{d},$
\[
f(x)=\tilde{f}(A(x-x_{0}))=\tilde{f}(0)+\int_{\mathbb{R}^{d'}}\int_{\mathbb{R}_{+}}u(v,s)\varsigma\left(\left\langle A(x-x_{0}),v\right\rangle -s\right)\varrho({\rm d}s)\mu({\rm d}v),
\]
from which we may conclude.
\end{proof}
\begin{rem}
\label{rem:mu-not-sph-sym-b-pos}We observe that $\mu_{A}$ need not
be spherically symmetric. Indeed, if $\mu=N(0,\sigma^{2}I_{d})$ then
$\mu_{A}=N(0,\sigma^{2}A^{T}A)$. We also see that $b(v,s)$ may take
positive values since it is possible that $\left\langle x_{0},v\right\rangle <-s$. 
\end{rem}

\subsection{Lifting by harmonic extension}

In Theorem~\ref{thm:general-soln-Tmu}, we observe that for spherically
symmetric $\mu$, the ReLU representative for a given $f$ is unique.
Moreover, the terms required to compensate for the action of $T_{\mu}$
are the eigenvalues $\lambda_{d,k,i,\mu}$ that are particularly sensitive
to $i>0$. In particular, if $\mu=N(0,\sigma^{2}I_{d})$ then Remark~\ref{rem:gaussian-C}
gives $\lambda_{d,k,i,\mu}$ which is dimension-independent if $i=0$
but $\mathcal{O}(d^{i})$ for $i>0$. So harmonic functions have representatives
that are not rescaled by amounts involving dimension $d$. However,
harmonic functions are a very small subset of the class of polynomials.

We consider a special case of the approach in Section~\ref{subsec:Linear-transformations},
where we embed $f$ in a higher-dimensional polynomial. In particular,
we exploit the simple fact that a polynomial $f\in\mathcal{P}_{d}$
can be extended to a polynomial $\mathcal{H}(f)\in\mathcal{H}_{d+1}$
that is harmonic and satisfies $\mathcal{H}(f)(\cdot,0)=f$. The following
construction is classical; generalizations of this kind of extension
exist such as \citet[Lemma~5.2]{garofalo2019structure}. The basic
idea is to introduce an additional variable $y\in\mathbb{R}$ and
define the function $\mathcal{H}(f)$ so that $\partial_{y}^{2}\mathcal{H}(f)=-\Delta_{x}\mathcal{H}(f)$
and hence $\Delta\mathcal{H}(f)=0$.
\begin{defn}[Harmonic extension]
For a polynomial $f\in\mathcal{P}_{d}$ we define $\mathcal{H}(f)\in\mathcal{P}_{d+1}$
to be the polynomial,
\[
\mathcal{H}(f)(x,y)=\sum_{j=0}^{\infty}\frac{(-1)^{j}y^{2j}}{(2j)!}\Delta^{j}f(x),\qquad x\in\mathbb{R}^{d},y\in\mathbb{R},
\]
which satisfies $\mathcal{H}(f)(x,0)=f(x)$ by construction.
\end{defn}

\begin{lem}
\label{lem:construct-harmonic-extension}If $f=\sum_{k\geq0}f_{k}$
is the decomposition of $f\in\mathcal{P}_{d}$ into homogeneous polynomials
$f_{k}\in\mathcal{P}_{d,k}$, then $\mathcal{H}(f)=\sum_{k\geq0}h_{k}$
is also such a decomposition for $\mathcal{H}(f)$ with 
\[
h_{k}(x,y)=\sum_{j=0}^{\left\lfloor k/2\right\rfloor }\frac{(-1)^{j}y^{2j}}{(2j)!}\Delta^{j}f_{k}(x),
\]
and each $h_{k}\in\mathcal{H}_{d+1,k}$. Hence, $\mathcal{H}(f)$
is harmonic.
\end{lem}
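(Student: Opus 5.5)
The plan is to verify the lemma directly from the definition of $\mathcal{H}(f)$, using linearity in $f$ and homogeneity bookkeeping. First, by linearity of $\Delta$, $\Delta^{j}f=\sum_{k\geq0}\Delta^{j}f_{k}$. Since $f_{k}$ is homogeneous of degree $k$, $\Delta^{j}f_{k}$ is homogeneous of degree $k-2j$, and it vanishes when $2j>k$. Substituting into the defining series and exchanging the (finite) sums gives
\[
\mathcal{H}(f)(x,y)=\sum_{k\geq0}\sum_{j=0}^{\lfloor k/2\rfloor}\frac{(-1)^{j}y^{2j}}{(2j)!}\Delta^{j}f_{k}(x)=\sum_{k\geq0}h_{k}(x,y).
\]
Each summand $y^{2j}\Delta^{j}f_{k}(x)$ is a homogeneous polynomial in $(x,y)\in\mathbb{R}^{d+1}$ of degree $2j+(k-2j)=k$. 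So $h_{k}\in\mathcal{P}_{d+1,k}$, and this is the (unique) homogeneous decomposition of $\mathcal{H}(f)$.

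Second, I will show each $h_{k}$ is harmonic in $\mathbb{R}^{d+1}$, writing $\Delta_{d+1}=\Delta_{x}+\partial_{y}^{2}$. Set $c_{j}=(-1)^{j}/(2j)!$. Then
\[
\partial_{y}^{2}h_{k}=\sum_{j\geq1}c_{j}(2j)(2j-1)y^{2j-2}\Delta^{j}f_{k}=\sum_{j\geq1}\frac{(-1)^{j}y^{2(j-1)}}{(2(j-1))!}\Delta^{j}f_{k}.
\]
Reindexing with $j'=j-1$ turns this into $-\sum_{j'\geq0}c_{j'}y^{2j'}\Delta^{j'+1}f_{k}=-\Delta_{x}h_{k}$. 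Hence $\Delta_{d+1}h_{k}=0$ and $h_{k}\in\mathcal{H}_{d+1,k}$.

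The only point needing care is the range of $j$ in this telescoping. The top term $j=\lfloor k/2\rfloor$ in $\Delta_{x}h_{k}$ produces $\Delta^{\lfloor k/2\rfloor+1}f_{k}$, which must vanish. It does, because its degree would be $k-2\lfloor k/2\rfloor-2<0$. With that, the two sums match term by term.

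Finally, $\mathcal{H}(f)$ is a finite sum of harmonic polynomials, so it is harmonic by linearity of the Laplacian. Evaluating at $y=0$ kills all terms with $j\geq1$, which recovers $\mathcal{H}(f)(x,0)=f(x)$ as a consistency check. I expect no real obstacle here; the reindexing above is the main step.
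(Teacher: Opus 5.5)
Your proof is correct and follows essentially the same route as the paper: compute $\partial_{y}^{2}h_{k}$ and $\Delta_{x}h_{k}$ termwise, drop the top term using $\Delta^{\lfloor k/2\rfloor+1}f_{k}=0$, and reindex so the two sums cancel. Your degree count for $y^{2j}\Delta^{j}f_{k}(x)$ also spells out the homogeneity claim, which the paper only asserts.
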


\begin{proof}
For $k\geq0$, $h_{k}$ is exactly the degree $k$ component of $\mathcal{H}(f)$.
We now show that $h_{k}$ is harmonic. We have
\[
\Delta h_{k}=\Delta_{x}h_{k}+\partial_{y}^{2}h_{k},
\]
and we find
\[
\partial_{y}^{2}\left[\frac{(-1)^{j}y^{2j}}{(2j)!}\Delta^{j}f_{k}(x)\right]=\frac{(-1)^{j}y^{2j-2}}{(2j-2)!}\Delta^{j}f_{k}(x)\mathbb{I}(j\geq1),
\]
while
\[
\Delta_{x}\left[\frac{(-1)^{j}y^{2j}}{(2j)!}\Delta^{j}f_{k}(x)\right]=\frac{(-1)^{j}y^{2j}}{(2j)!}\Delta^{j+1}f_{k}(x).
\]
It follows that 
\begin{align*}
\Delta h_{k}(x,y) & =\sum_{j=0}^{\left\lfloor k/2\right\rfloor }\frac{(-1)^{j}y^{2j}}{(2j)!}\Delta^{j+1}f_{k}(x)+\frac{(-1)^{j}y^{2j-2}}{(2j-2)!}\Delta^{j}f_{k}(x)\mathbb{I}(j\geq1)\\
 & =\sum_{j=0}^{\left\lfloor k/2\right\rfloor -1}\frac{(-1)^{j}y^{2j}}{(2j)!}\Delta^{j+1}f_{k}(x)+\sum_{j=1}^{\left\lfloor k/2\right\rfloor }\frac{(-1)^{j}y^{2j-2}}{(2j-2)!}\Delta^{j}f_{k}(x)\\
 & =\sum_{j=0}^{\left\lfloor k/2\right\rfloor -1}\frac{(-1)^{j}y^{2j}}{(2j)!}\Delta^{j+1}f_{k}(x)+\sum_{j=0}^{\left\lfloor k/2\right\rfloor -1}\frac{(-1)^{j+1}y^{2j}}{(2j)!}\Delta^{j+1}f_{k}(x)\\
 & =0,
\end{align*}
from which we may conclude.
\end{proof}
\begin{thm}
\label{thm:relu-rep-Hf}Let $f\in\mathcal{P}_{d}$ be a polynomial
and $\mu=N(0,\sigma^{2}I_{d+1})$. Then $(\mu\otimes\varrho,a,b,u)$
is a ReLU integral representation of $\mathcal{H}(f)=\sum_{k\geq0}h_{k}$,
where $h_{k}\in\mathcal{H}_{d+1,k}$ for $k\geq0$, $a=\mathcal{H}(f)(0)=f(0)$,
$b(v,s)=-s$ and
\[
u(v,s)=\begin{cases}
\frac{2}{\sigma^{2}}h_{1}(v) & s=0,\\
2\sum_{k\geq2}\frac{s^{k-2}}{(k-2)!\sigma^{2k}}h_{k}(v) & s>0.
\end{cases}
\]
\end{thm}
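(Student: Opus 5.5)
The plan is to apply Theorem~\ref{thm:relu-rep} in dimension $d+1$ with $\mu=N(0,\sigma^{2}I_{d+1})$ to the function $\mathcal{H}(f)$. Two things must be checked: that the ReLU representative $g=T_{\mu}^{-1}\mathcal{H}(f)$ has a simple closed form, and that $g_{v}'(0)$ and $g_{v}''(s)$ reproduce the stated $u$.

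First, by Lemma~\ref{lem:construct-harmonic-extension}, $\mathcal{H}(f)=\sum_{k\geq0}h_{k}$ with each $h_{k}\in\mathcal{H}_{d+1,k}=\mathcal{H}_{d+1,k,0}$. So the harmonic decomposition of Corollary~\ref{cor:harmonic-decomp-inhomo} for $\mathcal{H}(f)$ has only $j=0$ terms, namely $f_{k0}=h_{k}$. Theorem~\ref{thm:general-soln-Tmu} (valid since $d+1\geq2$) then gives
\[
g=T_{\mu}^{-1}\mathcal{H}(f)=\sum_{k\geq0}\lambda_{d+1,k,0,\mu}^{-1}h_{k}.
\]
By Remark~\ref{rem:gaussian-C}, the harmonic eigenvalue for a Gaussian is $\lambda_{d+1,k,0,\mu}=\sigma^{2k}k!$, independent of the dimension. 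Hence
\[
g=\sum_{k\geq0}\frac{h_{k}}{\sigma^{2k}k!}.
\]
This $g$ is a polynomial, so it is certainly twice differentiable, and Theorem~\ref{thm:relu-rep} applies. It yields $a=\mathcal{H}(f)(0)=h_{0}=f_{0}=f(0)$ and $b(v,s)=-s$.

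Next compute $g_{v}(t)=g(tv)$. Homogeneity of $h_{k}$ gives
\[
g_{v}(t)=\sum_{k\geq0}\frac{t^{k}}{\sigma^{2k}k!}h_{k}(v).
\]
Differentiating once and setting $t=0$ gives $g_{v}'(0)=h_{1}(v)/\sigma^{2}$. Differentiating twice gives
\[
g_{v}''(s)=\sum_{k\geq2}\frac{s^{k-2}}{(k-2)!\,\sigma^{2k}}h_{k}(v).
\]
Multiplying by $2$, as in Theorem~\ref{thm:relu-rep}, gives exactly the claimed $u$.

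No step is a genuine obstacle. The only points needing care are the identification of the harmonic decomposition of $\mathcal{H}(f)$, which requires $h_{k}$ to be homogeneous harmonic of degree $k$ so that all radial indices vanish, and the Gaussian eigenvalue simplification $\lambda_{d+1,k,0,\mu}=\sigma^{2k}k!$. That simplification comes from $\frac{\Gamma(\frac{d+1}{2})}{2^{k}}\cdot\frac{k!}{\Gamma(k+\frac{d+1}{2})}\cdot\sigma^{2k}2^{k}\frac{\Gamma(k+\frac{d+1}{2})}{\Gamma(\frac{d+1}{2})}=\sigma^{2k}k!$. Both facts are already available from earlier results.
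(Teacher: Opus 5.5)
Your proposal is correct and follows the paper's proof essentially verbatim. You identify $T_{\mu}^{-1}\mathcal{H}(f)=\sum_{k\geq0}h_{k}/(k!\sigma^{2k})$ via Lemma~\ref{lem:construct-harmonic-extension}, Theorem~\ref{thm:general-soln-Tmu} and Remark~\ref{rem:gaussian-C}, then differentiate $g_{v}$ and apply Theorem~\ref{thm:relu-rep}; your $g_{v}''(s)$ also correctly keeps the factor $\sigma^{-2k}$, which the paper's written proof accidentally drops.
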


\begin{proof}
Let $\mathcal{H}(f)\in\mathcal{H}_{d+1}$ be the harmonic extension
of $f$. We may write $\mathcal{H}(f)=\sum_{k\geq0}h_{k}$ where each
$h_{k}\in\mathcal{H}_{d+1,k}$. Then Theorem~\ref{thm:general-soln-Tmu}
and Remark~\ref{rem:gaussian-C} imply
\[
g:=T_{\mu}^{-1}\mathcal{H}(f)=\sum_{k\geq0}\frac{1}{k!\sigma^{2k}}h_{k}.
\]
Using homogeneity, we have 
\[
g_{v}(s)=g(vs)=\sum_{k\geq0}\frac{1}{k!\sigma^{2k}}h_{k}(vs)=\sum_{k\geq0}\frac{s^{k}}{k!\sigma^{2k}}h_{k}(v),
\]
so $g_{v}'(s)=\sum_{k\geq1}\frac{s^{k-1}}{(k-1)!\sigma^{2k}}h_{k}(v)$,
$g_{v}'(0)=\frac{1}{\sigma^{2}}h_{1}(v)$ and $g_{v}''(s)=\sum_{k\geq2}\frac{s^{k-2}}{(k-2)!}h_{k}(v)$.
We conclude by applying Theorem~\ref{thm:relu-rep}.
\end{proof}

\subsection{Sharpened ReLU integral representation and the heat equation}

The main result of this section is Theorem~\ref{thm:f-relu-rep-integrated},
a \emph{sharpened} ReLU integral representation of $f\in\mathcal{P}_{d}$
with $\mu=N(0,\sigma^{2}I_{d})$. We use this term because sharpened
versions of the homogeneous components of $f$ arise naturally, corresponding
to running the heat equation on $f$ for negative time. The sharpening
can be thought of as a consequence of the harmonic extension of $f$
to $\mathcal{H}(f)$, followed by integration of the additional variable
in the resulting ReLU integral representation (Theorem~\ref{thm:relu-rep-Hf}).

The representation in Theorem~\ref{thm:relu-rep-Hf} is not associated
in general with the ReLU representative for $(\mu,f)$; indeed such
a representative is unique for spherically symmetric $\mu$ and given
by Theorem~\ref{thm:general-soln-Tmu}. Instead, we obtain the representation
for $f$ by first lifting $f$ to $\mathcal{H}(f)$ in $\mathbb{R}^{d+1}$,
obtaining the ReLU representative $g=T_{\mu}^{-1}\mathcal{H}(f)$
and then projecting the ReLU integral representation for $\mathcal{H}(f)$
back into $\mathbb{R}^{d}$. It is unclear if there is a more direct
route for obtaining this ReLU representation of $f$. One can think
of the representation for $\mathcal{H}(f)$ as incorporating a small
amount of noise around $0$ for the additional variable, which is
then integrated out to obtain the representation for $f$.
\begin{defn}[Solution operator for the heat equation]
\label{def:soln-op-heat}The operator 
\[
\exp\left(t\Delta\right)=\sum_{j=0}^{\infty}\frac{t^{j}}{j!}\Delta^{j},
\]
is the solution operator for the heat equation $\partial_{t}u=\Delta u$.
\end{defn}

To assist in interpretation, if $g=\exp\left(\frac{t}{2}\Delta\right)f$
for $t>0$ we have
\[
g(x)=\mathbb{E}\left[f(x+\sqrt{t}Z)\right],\qquad Z\sim N(0,I_{d}),
\]
and we can think of $g$ as a smoothed or noisy version of $f$ \citep[see, e.g.,][Section~9.2]{Durrett_2019}.
For polynomials, the inverse operator with $t<0$ is also well defined.
If $g=\exp\left(-\frac{t}{2}\Delta\right)f$, then
\[
f(x)=\mathbb{E}[g(x+\sqrt{t}Z)],\qquad Z\sim N(0,I_{d}),
\]
so $g$ is a sharpened or denoised version of $f$.
\begin{thm}
\label{thm:f-relu-rep-integrated}Let $f\in\mathcal{P}_{d}$, with
$f=\sum_{k\geq0}f_{k}$ its decomposition into homogeneous components,
$\mu=N(0,\sigma^{2}I_{d})$. Then $(\mu\otimes\varrho,a,b,u)$ is
a ReLU integral representation of $f$, where $a=f(0)$, $b(v,s)=-s$
and
\[
u(v,s)=\begin{cases}
\frac{2}{\sigma^{2}}f_{1}^{\sharp}(v) & s=0,\\
2\sum_{k\geq2}\frac{s^{k-2}}{(k-2)!\sigma^{2k}}f_{k}^{\sharp}(v) & s>0,
\end{cases}
\]
where $f_{k}^{\sharp}=\exp\left(-\frac{\sigma^{2}}{2}\Delta\right)f_{k}$
for $k\geq1$.
\end{thm}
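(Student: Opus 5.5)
The plan is to start from Theorem~\ref{thm:relu-rep-Hf} in dimension $d+1$ with $\mu'=N(0,\sigma^{2}I_{d+1})$, evaluate the representation of $\mathcal{H}(f)$ at points $(x,0)$, and then integrate out the extra coordinate of $v$.

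Write $v'=(v,w)$ with $v\sim N(0,\sigma^{2}I_{d})$ and $w\sim N(0,\sigma^{2})$ independent. Since $\mathcal{H}(f)(x,0)=f(x)$ and $\langle (x,0),(v,w)\rangle=\langle x,v\rangle$, the ReLU term $\varsigma(\langle x,v\rangle-s)$ does not depend on $w$. By Fubini,
\[
f(x)=f(0)+\int_{\mathbb{R}^{d}}\int_{\mathbb{R}_{+}}\tilde{u}(v,s)\varsigma\left(\langle x,v\rangle-s\right)\varrho({\rm d}s)\mu({\rm d}v),
\]
where $\tilde{u}(v,s)=\mathbb{E}_{w}[u((v,w),s)]$ and $u$ is the coefficient from Theorem~\ref{thm:relu-rep-Hf}. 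All integrands are polynomials in $(v,w)$, and $s$ enters only inside a ReLU, which is bounded on $\{s\le\langle x,v\rangle\}$. Gaussian moments are finite, so Fubini is justified.

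It remains to show that $\mathbb{E}_{w}[h_{k}(v,w)]=f_{k}^{\sharp}(v)$ for each $k\geq1$, where $h_{k}$ is the degree-$k$ component of $\mathcal{H}(f)$ from Lemma~\ref{lem:construct-harmonic-extension}. Substituting the explicit form of $h_{k}$ and using $\mathbb{E}[w^{2j}]=\sigma^{2j}(2j)!/(2^{j}j!)$ gives
\[
\mathbb{E}_{w}[h_{k}(v,w)]=\sum_{j}\frac{(-1)^{j}\sigma^{2j}}{2^{j}j!}\Delta^{j}f_{k}(v)=\sum_{j}\frac{1}{j!}\left(-\frac{\sigma^{2}}{2}\right)^{j}\Delta^{j}f_{k}(v)=\exp\left(-\frac{\sigma^{2}}{2}\Delta\right)f_{k}(v).
\]
The series is finite because $\Delta^{j}f_{k}=0$ for $2j>k$.

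Applying this term by term in the coefficient for $s>0$ (a finite sum), and to $h_{1}$ for $s=0$, yields exactly the stated $u$. The constant is $a=\mathcal{H}(f)(0)=f(0)$ and $b(v,s)=-s$.

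The main obstacle is the Gaussian moment computation. In particular, the factor $(2j)!$ from the harmonic extension must cancel against the Gaussian moment $\mathbb{E}[w^{2j}]=\sigma^{2j}(2j)!/(2^{j}j!)$, leaving $1/(2^{j}j!)$ and hence the heat semigroup at time $-\sigma^{2}/2$. This cancellation is the key step and should be displayed explicitly.

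A second point needs care. In the proof of Theorem~\ref{thm:relu-rep-Hf} the displayed $g_{v}''$ omits the factor $\sigma^{-2k}$, but the statement of Theorem~\ref{thm:relu-rep-Hf} includes it, so I will use the statement as given.
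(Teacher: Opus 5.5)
Your proposal is correct and follows the paper's proof essentially step for step. You restrict the representation of $\mathcal{H}(f)$ from Theorem~\ref{thm:relu-rep-Hf} to points $(x,0)$, integrate out the extra coordinate $w\sim N(0,\sigma^{2})$, and use $\mathbb{E}[w^{2j}]=\sigma^{2j}(2j)!/(2^{j}j!)$ to identify $\mathbb{E}_{w}[h_{k}(v,w)]$ with $\exp(-\frac{\sigma^{2}}{2}\Delta)f_{k}(v)$. Your explicit Fubini justification, and your observation that the displayed $g_{v}''$ in the proof of Theorem~\ref{thm:relu-rep-Hf} drops the factor $\sigma^{-2k}$ while its statement is correct, are both accurate; the paper leaves these points implicit.
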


\begin{proof}
Let $\mathcal{H}(f)=\sum_{k\geq0}h_{k}$ where $h_{k}\in\mathcal{H}_{d+1,k}$
for $k\geq0$. Using $f(x)=\mathcal{H}(f)(x,0),$ and $\left\langle (x,0),(v,w)\right\rangle =\left\langle x,v\right\rangle $,
and letting $\gamma=N(0,\sigma^{2})$, Theorem~\ref{thm:relu-rep-Hf}
gives the ReLU integral representation
\[
f(x)=f(0)+\int\tilde{u}((v,w),s)\varsigma\left(\left\langle x,v\right\rangle -s\right)\varrho({\rm d}s)\mu({\rm d}v)\gamma({\rm d}w),\quad x\in\mathbb{R}^{d},
\]
with 
\[
\tilde{u}((v,w),s)=\begin{cases}
\frac{2}{\sigma^{2}}h_{1}(v,w) & s=0,\\
2\sum_{k\geq2}\frac{s^{k-2}}{(k-2)!\sigma^{2k}}h_{k}(v,w) & s>0.
\end{cases}
\]
from which we may define 
\[
u(v,s)=\int\tilde{u}((v,w),s)\gamma({\rm d}w).
\]
Since $\int_{\mathbb{R}}w^{2j}\gamma({\rm d}w)=\sigma^{2j}(2j-1)!!=\sigma^{2j}\frac{(2j)!}{2^{j}j!},$
\begin{align*}
\int_{\mathbb{R}}h_{k}(v,w)\mu({\rm d}w) & =\int\sum_{j=0}^{\infty}\frac{(-1)^{j}w^{2j}}{(2j)!}\Delta^{j}f_{k}(v)\gamma({\rm d}w)\\
 & =\sum_{j=0}^{\infty}\frac{(-\sigma^{2})^{j}}{2^{j}j!}\Delta^{j}f_{k}(v)\\
 & =\exp\left(-\frac{\sigma^{2}}{2}\Delta\right)f_{k}(v)\\
 & =f_{k}^{\sharp}(v),
\end{align*}
we conclude.
\end{proof}
\begin{rem}
We consider the $s=0$ term. Observe that $f_{1}^{\sharp}=\exp\left(-\frac{\sigma^{2}}{2}\Delta\right)f_{1}=f_{1}$
since $\Delta f_{1}=0$. Hence,
\[
\int_{\mathbb{R}^{d}}u(v,0)\varsigma\left(\left\langle x,v\right\rangle \right)\mu({\rm d}v)=2\int_{\mathbb{R}^{d}}\frac{1}{\sigma^{2}}f_{1}(v)\varsigma\left(\left\langle x,v\right\rangle \right)\mu({\rm d}v).
\]
Notice that $f_{1}(v)=\left\langle c,v\right\rangle $ for some $c\in\mathbb{R}^{d}$,
and that $\nabla f(0)=\nabla f_{1}(0)=c$. Since
\[
\left\langle c,v\right\rangle \left\langle x,v\right\rangle =\left\langle c,-v\right\rangle \left\langle x,-v\right\rangle ,
\]
so by spherical symmetry of $\mu$, we have 
\begin{align*}
\int_{\mathbb{R}^{d}}u(v,0)\varsigma\left(\left\langle x,v\right\rangle \right)\mu({\rm d}v) & =2\int_{\mathbb{R}^{d}}\frac{1}{\sigma^{2}}\left\langle c,v\right\rangle \varsigma\left(\left\langle x,v\right\rangle \right)\mu({\rm d}v)\\
 & =\int_{\mathbb{R}^{d}}\frac{1}{\sigma^{2}}\left\langle c,v\right\rangle \left\langle x,v\right\rangle \mu({\rm d}v)\\
 & =\int_{\mathbb{R}^{d}}\frac{1}{\sigma^{2}}c^{T}vv^{T}x\mu({\rm d}v)\\
 & =\left\langle c,x\right\rangle .
\end{align*}
Hence, the $s=0$ term corresponds to the $\left\langle \nabla f(0),x\right\rangle $
term in the Taylor expansion of $f$ about $0$.
\end{rem}

It is natural to consider what $f_{k}^{\sharp}$ might look like.
For the case of polynomials, it is known that the result of applying
$\exp\left(-\frac{\sigma^{2}}{2}\Delta\right)$ to a polynomial $f$
is that the basis changes from monomials to generalized Hermite polynomials
without changing the coefficients. The relation between $\left\Vert f^{\sharp}\right\Vert _{L^{2}(\mu)}$
and $\left\Vert f\right\Vert _{F}\sigma^{k}$ is used for our quantitative
bounds in Section~\ref{sec:Quantitative-error-bounds}.
\begin{lem}
\label{lem:monom-to-herm-gen-sigma}If $f(x)=\sum_{\alpha}c_{\alpha}x^{\alpha}$
is the representation of a polynomial $f$ as a linear combination
of monomials, then $f^{\sharp}=\exp\left(-\frac{\sigma^{2}}{2}\Delta\right)f$
satisfies
\[
f^{\sharp}(x)=\sum_{\alpha}c_{\alpha}H_{\sigma,\alpha}(x),
\]
where $H_{\sigma,\alpha}$ is the (probabilist's) Hermite polynomial
with degree $\alpha$ for the measure $N(0,\sigma^{2}I),$
\[
H_{\sigma,\alpha}(x)=\sigma^{\left|\alpha\right|}H_{\alpha}\left(\frac{x}{\sigma}\right).
\]
Moreover, if $\mu=N(0,\sigma^{2}I)$ and $f$ is homogeneous of degree
$k$ 
\[
\left\Vert f^{\sharp}\right\Vert _{L^{2}(\mu)}=\left\Vert f\right\Vert _{F}\sigma^{k},
\]
where $\left\Vert f\right\Vert _{F}=\left(\sum_{\alpha}c_{\alpha}^{2}\alpha!\right)^{1/2}$
is the Fischer norm.
\end{lem}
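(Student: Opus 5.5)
By linearity of $\exp\left(-\frac{\sigma^{2}}{2}\Delta\right)$ it suffices to treat a single monomial $x^{\alpha}$. Since the Laplacian is a sum of commuting one-dimensional operators $\partial_{i}^{2}$, we have $\exp\left(-\frac{\sigma^{2}}{2}\Delta\right)=\prod_{i=1}^{d}\exp\left(-\frac{\sigma^{2}}{2}\partial_{i}^{2}\right)$, and each factor acts only on $x_{i}^{\alpha_{i}}$. The identity therefore reduces to the univariate statement
\[
\exp\left(-\tfrac{\sigma^{2}}{2}\partial^{2}\right)t^{n}=\sigma^{n}H_{n}(t/\sigma),
\]
with $H_{\sigma,\alpha}=\prod_{i}H_{\sigma,\alpha_{i}}(x_{i})$ in the multivariate case.

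For the univariate statement I would expand the left-hand side directly. This gives
\[
\sum_{j}\frac{(-\sigma^{2}/2)^{j}}{j!}\frac{n!}{(n-2j)!}t^{n-2j}.
\]
I would then compare it with the classical explicit formula
\[
H_{n}(z)=\sum_{j}\frac{(-1)^{j}n!}{2^{j}j!(n-2j)!}z^{n-2j}.
\]
Setting $z=t/\sigma$ and multiplying by $\sigma^{n}$ gives exactly the same sum, since $\sigma^{n}(t/\sigma)^{n-2j}=\sigma^{2j}t^{n-2j}$.

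An alternative route is via generating functions. Applying the operator to $e^{st}$ gives $e^{st-\sigma^{2}s^{2}/2}$, which is the generating function of $\sigma^{n}H_{n}(t/\sigma)$. Extracting the coefficients of $s^{n}/n!$ then yields the same identity. Either argument is routine.

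For the norm identity I would use orthogonality of Hermite polynomials. Under $\mu=N(0,\sigma^{2}I)$, the change of variables $x=\sigma z$ with $z\sim N(0,I)$ gives
\[
\int H_{\sigma,\alpha}H_{\sigma,\beta}\,{\rm d}\mu=\sigma^{|\alpha|+|\beta|}\,\mathbb{E}[H_{\alpha}(Z)H_{\beta}(Z)]=\sigma^{2|\alpha|}\alpha!\,\delta_{\alpha\beta}.
\]
This uses the standard facts $\mathbb{E}[H_{m}(Z)H_{n}(Z)]=n!\,\delta_{mn}$ in one dimension and independence across coordinates. If $f$ is homogeneous of degree $k$, every $\alpha$ in its expansion has $|\alpha|=k$. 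Hence
\[
\|f^{\sharp}\|_{L^{2}(\mu)}^{2}=\sum_{\alpha}c_{\alpha}^{2}\sigma^{2k}\alpha!=\sigma^{2k}\|f\|_{F}^{2},
\]
and taking square roots gives the claim.

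I do not expect a real obstacle. The only care needed is to fix the normalization: the probabilist's Hermite polynomials are monic with squared norm $n!$. With that fixed, the coefficient match in the univariate step and the homogeneity requirement in the norm step, which makes every $\sigma$ exponent equal to $k$, are the points to check.
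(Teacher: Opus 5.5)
Your proposal is correct and follows essentially the same route as the paper: reduce to monomials, factor $\exp(-\frac{\sigma^{2}}{2}\Delta)$ coordinatewise, and verify the univariate identity. The paper does the univariate step by generating functions, which you give as your alternative; your direct coefficient match works equally well. You then use Hermite orthogonality under $N(0,\sigma^{2}I)$, with homogeneity forcing $|\alpha|=k$, and nothing is missing.
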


\begin{proof}
Let $M_{k}=x\mapsto x^{k}$. For $d=1$, one may prove using generating
functions that
\[
\exp\left(-\frac{\sigma^{2}}{2}\Delta\right)M_{k}=H_{\sigma,k}.
\]
Then, since 
\[
\exp\left(-\frac{\sigma^{2}}{2}\Delta\right)=\exp\left(-\frac{1}{2}\sum_{j=1}^{d}\frac{\partial^{2}}{\partial x_{j}^{2}}\right)=\prod_{j=1}^{d}\exp\left(-\frac{1}{2}\frac{\partial^{2}}{\partial x_{j}^{2}}\right),
\]
we have 
\[
H_{\sigma,\alpha}(x)=\prod_{j=1}^{d}H_{\sigma,\alpha_{j}}(x_{j})=\prod_{j=1}^{d}\exp\left(-\frac{\sigma^{2}}{2}\frac{\partial^{2}}{\partial x_{j}^{2}}\right)M_{\alpha_{j}}(x_{j})=\exp\left(-\frac{\sigma^{2}}{2}\Delta\right)M_{\alpha}(x).
\]
Moreover, one can show using the expression for $H_{\sigma,\alpha}$
and the properties of standard Hermite polynomials,
\[
\left\langle H_{\sigma,\alpha},H_{\sigma,\beta}\right\rangle _{L^{2}(\mu)}=\mathbb{I}(\alpha=\beta)\alpha!\sigma^{2\left|\alpha\right|}.
\]
Hence,
\[
\left\Vert f^{\sharp}\right\Vert _{L^{2}(\mu)}^{2}=\sum_{\alpha}c_{\alpha}^{2}\alpha!\sigma^{2\left|\alpha\right|},
\]
from which we may conclude the last part, which assumes $f$ is homogeneous
of degree $k$.
\end{proof}
\begin{rem}
\label{rem:transform-then-extend-project}It is possible to combine
other transformations with the harmonic extension and projection.
For example, let $f\in\mathcal{P}_{d}$ be a polynomial and let $\tilde{f}\in\mathcal{P}_{d'}$
be an $(A,x_{0})$-linearly transformed version of $f$. We can obtain
by Theorem~\ref{thm:f-relu-rep-integrated} the integral representation,
involving the harmonic extension $\mathcal{H}(\tilde{f})$,
\[
\tilde{f}(z)=\tilde{f}(0)+\int_{\mathbb{R}^{d'}}\int_{\mathbb{R}_{+}}u(v,s)\varsigma\left(\left\langle z,v\right\rangle -s\right)\varrho({\rm d}s)\mu({\rm d}v),\qquad z\in\mathbb{R}^{d'},
\]
which by Theorem~\ref{thm:relu-transformed} provides the representation,
\[
f(x)=f(x_{0})+\int_{\mathbb{R}^{d'}}\int_{\mathbb{R}_{+}}u(v,s)\varsigma\left(\left\langle x,A^{T}v\right\rangle +b(v,s)\right)\varrho({\rm d}s)\mu({\rm d}v),\qquad x\in\mathbb{R}^{d},
\]
where $b(v,s)=-\left\langle x_{0},A^{T}v\right\rangle -s$.
\end{rem}

\begin{example}
Let $f\in\mathcal{P}_{1}$ be $f(x)=x^{2}$. The harmonic extension
is $h(x,y)=x^{2}-y^{2}$. For $\mu=N(0,1)$, we have $f^{\sharp}(x)=x^{2}-1$.
So the sharpened integral representation is
\[
f(x)=2\int_{\mathbb{R}}\int_{\mathbb{R}_{+}}(v^{2}-1)\max\left(xv-s,0\right){\rm d}s\,N(v;0,1){\rm d}v.
\]
In contrast, the direct representation without a harmonic extension
is obtained by $g=T_{\mu}^{-1}f=x\mapsto x^{2}/3$, so that $g_{v}''(s)=\frac{2}{3}v^{2}$,
from which we obtain
\[
f(x)=2\int_{\mathbb{R}}\int_{\mathbb{R}_{+}}\frac{2}{3}v^{2}\max\left(xv-s,0\right){\rm d}s\,N(v;0,1){\rm d}v.
\]
If we instead take a direct approach with $\mu={\rm Uniform}\left(\left\{ -1,1\right\} \right)$,
we obtain $g=f$, so that $g_{v}''(s)=2v^{2}$, from which we obtain
\[
f(x)=2\sum_{v\in\{-1,1\}}\int_{\mathbb{R}_{+}}2v^{2}\max\left(xv-s,0\right){\rm d}s.
\]
In an experiment, the trained network appears to be more consistent
with the harmonic extension approach, i.e. the $u_{i}$ take small
negative as well as a broader range of positive values, and the bias
terms are almost exclusively negative. Of course, this could also
be due to the choice of network initialization.
\end{example}

\section{Quantitative error bounds}\label{sec:Quantitative-error-bounds}

We focus here on the ReLU integral representation of $f$ obtained
in Theorem~\ref{thm:f-relu-rep-integrated}. More precisely, on the
optimized version of that representation after the change of measure
to minimize the expected $L^{2}(\mathcal{D})$ error presented in
Proposition~\ref{prop:network-mse-bound}. The main result is Theorem~\ref{thm:quant-bound-deterministic},
which considers the $L^{2}(\mathcal{D})$ error of a two-layer network.
We assume that $\mathcal{D}$ is sub-Gaussian to obtain quantitative
bounds.
\begin{defn}
We say $X$ is sub-Gaussian with mean $0$ and variance proxy $\sigma_{X}^{2}$
if 
\[
\mathbb{P}\left(\left\langle X,u\right\rangle \geq t\right)\leq\exp\left(-\frac{t^{2}}{2\sigma_{X}^{2}}\right),
\]
for any unit vector $u$.
\end{defn}

\begin{example}
Here are some examples of sub-Gaussian variance proxies:
\begin{enumerate}
\item If $X\sim\mathcal{N}(0,\sigma^{2}I_{d})$ then $\sigma_{X}^{2}=\sigma^{2}$.
\item If $X_{i}$ are independent, mean $0$ and take values in $[a,b]$
then $\sigma_{X}^{2}=(b-a)^{2}/4$. E.g. if $X\sim{\rm Uniform}([-a,a]^{d})$
then $\sigma_{X}^{2}=a^{2}$.
\item If $X$ is uniform in the ball or on the sphere of radius $r$ in
$\mathbb{R}^{d}$ then $\sigma_{X}^{2}=r^{2}/d$.
\end{enumerate}
In particular, for a parameter $\sigma_{X}^{2}$ that scales as $1/d$
we may take $X\sim N(0,\frac{1}{d}I_{d})$, $X\sim{\rm Uniform}(S^{d-1})$
or $X\sim{\rm Uniform}(B_{1})$ where $S^{d-1}$ is the unit sphere
centred at $0$ and $B_{1}$ the unit ball in $\mathbb{R}^{d}$ centred
at $0$. 

\end{example}

\begin{thm}
\label{thm:quant-bound-deterministic}Assume $\mathcal{D}$ is sub-Gaussian
with mean $0$ and variance proxy $\sigma_{X}^{2}$, and that $f\in\mathcal{P}_{d}$
with $f=a+\sum_{k\geq1}f_{k}$, $f_{k}\in\mathcal{P}_{d,k}$. Then
there exists a two-layer network $f_{n}$ such that
\[
\left\Vert f_{n}-f\right\Vert _{L^{2}(\mathcal{D})}\leq\frac{1}{\sqrt{n}}\left\{ \left(\sum_{k\geq1}C_{d,k}\left\Vert f_{k}\right\Vert _{F}\right)^{2}-\left\Vert f-a\right\Vert _{L^{2}(\mathcal{D})}^{2}\right\} ^{1/2}\leq\frac{1}{\sqrt{n}}\sum_{k\geq1}C_{d,k}\left\Vert f_{k}\right\Vert _{F},
\]
where $C_{d,1}=2^{3/2}\sigma_{X}d^{1/2}$, and for $k\geq2$,
\[
C_{d,k}=2^{\frac{3k-1}{2}}\sigma_{X}^{k}\frac{\Gamma\left(\frac{k-1}{2}\right)}{(k-2)!}\frac{\Gamma(k+\frac{d}{2})^{1/2}}{\Gamma(\frac{d}{2})^{1/2}}\leq\sigma_{X}^{k}\sqrt{2k}\left(2e\cdot\frac{d+k-1}{k}\right)^{k/2}.
\]
\end{thm}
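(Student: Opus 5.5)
The plan is to combine the sharpened representation of Theorem~\ref{thm:f-relu-rep-integrated} with the optimized change of measure in Proposition~\ref{prop:network-mse-bound}. Fix any $\sigma>0$ and take $\mu=N(0,\sigma^{2}I_{d})$, $\nu=\mu\otimes\varrho$, $a=f(0)$, $b(v,s)=-s$ and $u$ as in Theorem~\ref{thm:f-relu-rep-integrated}. Proposition~\ref{prop:network-mse-bound} then gives a realization $f_{n}$ whose squared error is at most $\frac{1}{n}\{[\int\nu({\rm d}z)\Vert\phi(\cdot;z)\Vert_{L^{2}(\mathcal{D})}]^{2}-\Vert f-a\Vert_{L^{2}(\mathcal{D})}^{2}\}$. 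The second inequality in the statement is immediate once the subtracted term is dropped. So it suffices to show that
\[
\int\nu({\rm d}z)\Vert\phi(\cdot;z)\Vert_{L^{2}(\mathcal{D})}\leq\sum_{k\geq1}C_{d,k}\Vert f_{k}\Vert_{F}.
\]
I will get this by the triangle inequality applied to $u$, treating each homogeneous component separately. I expect the arbitrary $\sigma$ to cancel.

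The first ingredient is a bound on a single neuron. Write $Y=\langle X,v\rangle$ with $X\sim\mathcal{D}$ and $\tau=\sigma_{X}\Vert v\Vert$. The tail-integral formula and the sub-Gaussian assumption give
\[
\mathbb{E}[\varsigma(Y-s)^{2}]=\int_{0}^{\infty}2t\,\mathbb{P}(Y>s+t)\,{\rm d}t\leq e^{-s^{2}/(2\tau^{2})}\int_{0}^{\infty}2te^{-t^{2}/(2\tau^{2})}\,{\rm d}t=2\tau^{2}e^{-s^{2}/(2\tau^{2})},
\]
where I use $(s+t)^{2}\geq s^{2}+t^{2}$ for $s,t\geq0$. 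Hence $\Vert\varsigma(\langle\cdot,v\rangle-s)\Vert_{L^{2}(\mathcal{D})}\leq\sqrt{2}\,\sigma_{X}\Vert v\Vert e^{-s^{2}/(4\sigma_{X}^{2}\Vert v\Vert^{2})}$.

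For $k\geq2$ the contribution is bounded by
\[
\int\frac{2|f_{k}^{\sharp}(v)|}{(k-2)!\,\sigma^{2k}}\int_{0}^{\infty}s^{k-2}\sqrt{2}\,\sigma_{X}\Vert v\Vert e^{-s^{2}/(4\sigma_{X}^{2}\Vert v\Vert^{2})}\,{\rm d}s\,\mu({\rm d}v).
\]
The inner integral is a Gamma integral, $\frac{1}{2}(2\sigma_{X}\Vert v\Vert)^{k-1}\Gamma(\frac{k-1}{2})$, so the integrand in $v$ becomes a constant times $\sigma_{X}^{k}|f_{k}^{\sharp}(v)|\,\Vert v\Vert^{k}$. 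Cauchy--Schwarz in $L^{2}(\mu)$ then bounds $\int|f_{k}^{\sharp}(v)|\,\Vert v\Vert^{k}\mu({\rm d}v)$ by $\Vert f_{k}^{\sharp}\Vert_{L^{2}(\mu)}(\int\Vert v\Vert^{2k}{\rm d}\mu)^{1/2}$. By Lemma~\ref{lem:monom-to-herm-gen-sigma} and Remark~\ref{rem:gaussian-C} this equals $\Vert f_{k}\Vert_{F}\sigma^{k}\cdot\sigma^{k}2^{k/2}\Gamma(k+\frac{d}{2})^{1/2}\Gamma(\frac{d}{2})^{-1/2}$. The $\sigma^{2k}$ cancels, and collecting the powers of two, $2^{1/2}\cdot2^{k-1}\cdot2^{k/2}=2^{(3k-1)/2}$, gives exactly $C_{d,k}$.

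The $k=1$ term comes only from the Dirac mass at $s=0$ and is handled the same way. It is bounded by
\[
\frac{2}{\sigma^{2}}\int|f_{1}(v)|\,\sqrt{2}\,\sigma_{X}\Vert v\Vert\,\mu({\rm d}v)\leq\frac{2\sqrt{2}\,\sigma_{X}}{\sigma^{2}}\,\Vert f_{1}\Vert_{F}\,\sigma\cdot\sigma\sqrt{d},
\]
using Cauchy--Schwarz, $f_{1}^{\sharp}=f_{1}$, and $\mathbb{E}\Vert V\Vert^{2}=\sigma^{2}d$. This is $C_{d,1}\Vert f_{1}\Vert_{F}$.

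The remaining and most delicate step is the simplified upper bound $C_{d,k}\leq\sigma_{X}^{k}\sqrt{2k}\,(2e\frac{d+k-1}{k})^{k/2}$. For the ratio of Gamma functions I would use $\Gamma(k+\frac{d}{2})/\Gamma(\frac{d}{2})=\prod_{j=0}^{k-1}(\frac{d}{2}+j)\leq(\frac{d+k-1}{2})^{k}$, by AM--GM on the factors. For $\Gamma(\frac{k-1}{2})/(k-2)!$ I would apply Legendre duplication to $(k-2)!=\Gamma(k-1)$, which leaves $2^{2-k}\sqrt{\pi}/\Gamma(\frac{k}{2})$, and then a Stirling lower bound on $\Gamma(\frac{k}{2})$. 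This produces the $(e/k)^{k/2}$ factor, and a careful check should show that the polynomial prefactor is at most $\sqrt{2k}$. That final check, particularly for small $k$, is where I expect the main bookkeeping difficulty to be.
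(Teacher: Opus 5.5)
Your proposal is correct and follows the paper's route essentially step for step. The paper uses Proposition~\ref{prop:network-mse-bound}, the triangle inequality over homogeneous components, the sub-Gaussian neuron bound of Lemma~\ref{lem:sub-gaussian-norm} (your $(s+t)^{2}\geq s^{2}+t^{2}$ derivation is a slightly quicker way to the same estimate), Cauchy--Schwarz with Lemma~\ref{lem:monom-to-herm-gen-sigma}, and then AM--GM, Legendre duplication and Stirling for the simplified constant. The last step you flagged is exact rather than delicate: $\Gamma(x)>\sqrt{2\pi}x^{x-1/2}e^{-x}$ holds for all $x>0$, and combining the three estimates gives precisely $C_{d,k}\leq\sigma_{X}^{k}\,2^{(k+1)/2}e^{k/2}k^{(1-k)/2}(d+k-1)^{k/2}=\sigma_{X}^{k}\sqrt{2k}\,\bigl(2e\frac{d+k-1}{k}\bigr)^{k/2}$, with no small-$k$ exceptions.
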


\begin{proof}
This follows from Proposition~\ref{prop:network-mse-bound} and Proposition~\ref{prop:quant-bound-muphi}.
\end{proof}
We observe that the bound has no dependence on $\sigma^{2}$, the
variance parameter of $\mu$, which is consistent with Remark~\ref{rem:sphere-invariance-rep}:
the choice of $\sigma^{2}$ is irrelevant. In order that the bound
$C_{d,k}\left\Vert f_{k}\right\Vert _{F}$ is controlled as $d\to\infty$,
it is necessary that $\left\Vert f_{k}\right\Vert _{F}\in\mathcal{O}((\sigma_{X}\sqrt{d})^{-k})$.
In order that $\sum_{k}C_{d,k}\left\Vert f_{k}\right\Vert _{F}<\infty$,
it is necessary that $\left\Vert f_{k}\right\Vert _{F}\in o\left(\left(\sigma_{X}\sqrt{2e}\right)^{-k}k^{-3/2}\right)$
as $k\to\infty$. If the condition $\left\Vert f_{k}\right\Vert _{F}\in\mathcal{O}((\sigma_{X}d)^{-k/2})$
holds then $\left\Vert f_{k}\right\Vert _{F}\in o\left(\left(\sigma_{X}\sqrt{2e}\right)^{-k}k^{-3/2}\right)$
will also hold when $d\geq6$ since $6>2e$. For large $d$, one can
see that if $\sigma_{X}=d^{-1/2}$, then the $C_{d,k}$ are dimension-independent,
suggesting that suitably regular functions can be approximated well
on the surface of the unit sphere or in the unit ball irrespective
of dimension and degree.

If we assume $\sigma_{X}=1$, e.g. if $\mathcal{D}=N(0,I_{d})$, then
an example of a function that achieves $\left\Vert f_{k}\right\Vert _{F}\in\mathcal{O}((\sigma_{X}\sqrt{d})^{-k})$
is $f(x)=g(d^{-1/2}x)$, where $g=\sum g_{k}$ and $\left\Vert g_{k}\right\Vert _{F}\in\mathcal{O}(1)$
as $d\to\infty$. Such an $f$ can be viewed as not giving any particular
component of $x$ too much importance. The conditions on Fischer norms
can of course be viewed as conditions on the partial derivatives of
$f$ at $0$, in light of the discussion in Section~\ref{sec:Introduction}.
Another implication is that if $f=\varphi(d^{-1/2}\sum_{i=1}^{d}x_{i})$,
where $\varphi(u)=\sum_{k\geq0}c_{k}u^{k}$ then one may deduce that
$\left\Vert f_{k}\right\Vert _{F}=\left\Vert \varphi_{k}\right\Vert _{F}$,
where $f_{k}$ and $\varphi_{k}$ are the degree $k$ homogeneous
components of $f$ and $\varphi$, respectively.
\begin{lem}
\label{lem:sub-gaussian-norm}Assume $\mathcal{D}$ is sub-Gaussian
with mean $0$ and variance proxy $\sigma_{X}^{2}$. Then
\[
\left\Vert \varsigma\left(\left\langle \cdot,v\right\rangle -s\right)\right\Vert _{L^{2}(\mathcal{D})}\leq\sqrt{2}\sigma_{X}\left\Vert v\right\Vert \exp\left(-\frac{s^{2}}{4\sigma_{X}^{2}\left\Vert v\right\Vert ^{2}}\right).
\]
Moreover,
\[
\int_{\mathbb{R}_{+}}\left\Vert \varsigma\left(\left\langle \cdot,v\right\rangle -s\right)\right\Vert _{L^{2}(\mathcal{D})}s^{k-2}{\rm d}s\leq2^{k-\frac{3}{2}}\left(\sigma_{X}\left\Vert v\right\Vert \right)^{k}\Gamma\left(\frac{k-1}{2}\right).
\]
\end{lem}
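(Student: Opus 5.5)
The plan is to reduce to a one-dimensional tail computation for $Y=\langle X,v\rangle$ with $X\sim\mathcal{D}$. If $v=0$ both claims are trivial, since $\varsigma(-s)=0$ for $s\geq0$. Otherwise, applying the sub-Gaussian definition with the unit vector $v/\Vert v\Vert$ and threshold $t/\Vert v\Vert$ gives
\[
\mathbb{P}(Y\geq t)\leq\exp\left(-\frac{t^{2}}{2\tau^{2}}\right),\qquad \tau:=\sigma_{X}\Vert v\Vert .
\]
So $Y$ has a one-sided Gaussian tail with parameter $\tau^2$.

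For the first bound, I would use the layer-cake formula for the nonnegative variable $\varsigma(Y-s)$:
\[
\Vert\varsigma(\langle\cdot,v\rangle-s)\Vert_{L^{2}(\mathcal{D})}^{2}=\mathbb{E}\left[\varsigma(Y-s)^{2}\right]=\int_{0}^{\infty}2t\,\mathbb{P}(Y>s+t)\,{\rm d}t\leq\int_{0}^{\infty}2t\exp\left(-\frac{(s+t)^{2}}{2\tau^{2}}\right){\rm d}t .
\]
The key (and only slightly delicate) step is to decouple $s$ and $t$. Since $s,t\geq0$, we have $(s+t)^{2}\geq s^{2}+t^{2}$. This gives
\[
\mathbb{E}\left[\varsigma(Y-s)^{2}\right]\leq\exp\left(-\frac{s^{2}}{2\tau^{2}}\right)\int_{0}^{\infty}2t\,e^{-t^{2}/(2\tau^{2})}\,{\rm d}t=2\tau^{2}\exp\left(-\frac{s^{2}}{2\tau^{2}}\right).
\]
Taking square roots yields exactly $\sqrt{2}\,\sigma_{X}\Vert v\Vert\exp\left(-s^{2}/(4\sigma_{X}^{2}\Vert v\Vert^{2})\right)$.

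For the second claim, I would integrate the first bound against $s^{k-2}$ on $\mathbb{R}_{+}$, implicitly for $k\geq2$ so that the integral converges at $0$. With the substitution $u=s^{2}/(4\tau^{2})$, so that $s=2\tau\sqrt{u}$ and ${\rm d}s=\tau u^{-1/2}{\rm d}u$,
\[
\int_{0}^{\infty}s^{k-2}e^{-s^{2}/(4\tau^{2})}\,{\rm d}s=\tfrac{1}{2}(2\tau)^{k-1}\,\Gamma\left(\tfrac{k-1}{2}\right).
\]
Multiplying by $\sqrt{2}\,\tau$ gives $2^{k-3/2}\tau^{k}\,\Gamma\left(\frac{k-1}{2}\right)$, which is the stated bound with $\tau=\sigma_X\Vert v\Vert$.

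I expect no real obstacle. The only points needing care are the inequality $(s+t)^{2}\geq s^{2}+t^{2}$, which uses $s\geq0$, and the restriction $k\geq2$ needed for convergence of the $s$-integral.
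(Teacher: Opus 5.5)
Your proof is correct and follows the paper's route. You reduce to the one-dimensional tail of $\langle X,v\rangle$ with proxy $\sigma_X^2\Vert v\Vert^2$, apply the layer-cake formula, and integrate the square root of the resulting bound against $s^{k-2}$. The only difference is cosmetic: the paper evaluates the tail integral exactly as $2\delta^2 e^{-s^2/(2\delta^2)}-2\sqrt{2\pi}\,s\delta\,\Phi(-s/\delta)$ and drops the nonpositive second term, whereas you reach the same bound $2\delta^2 e^{-s^2/(2\delta^2)}$ more directly via $(s+t)^2\geq s^2+t^2$.
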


\begin{proof}
Since $X$ is sub-Gaussian with mean $0$, we may deduce that $\left\langle X,v\right\rangle $
is sub-Gaussian with variance proxy $\delta^{2}=\sigma_{X}^{2}\left\Vert v\right\Vert ^{2}$.
\[
\mathbb{P}_{\mathcal{D}}\left(\left\langle X,v\right\rangle \geq t\right)\leq\exp\left(-\frac{t^{2}}{2\delta^{2}}\right).
\]
We obtain
\begin{align*}
\left\Vert \varsigma\left(\left\langle \cdot,v\right\rangle -s\right)\right\Vert _{L^{2}(\mathcal{D})}^{2} & =\mathbb{E}\left[\varsigma\left(\left\langle X,v\right\rangle -s\right){}^{2}\right]\\
 & =\int_{0}^{\infty}\mathbb{P}\left(\varsigma\left(\left\langle X,v\right\rangle -s\right)^{2}>t\right){\rm d}t\\
 & =\int_{0}^{\infty}\mathbb{P}\left(\left\langle X,v\right\rangle -s>\sqrt{t}\right){\rm d}t\\
 & =\int_{0}^{\infty}\mathbb{P}\left(\left\langle X,v\right\rangle >\sqrt{t}+s\right){\rm d}t\\
 & \leq\int_{0}^{\infty}\exp\left(-\frac{(\sqrt{t}+s)^{2}}{2\delta^{2}}\right){\rm d}t\\
 & =2\delta^{2}\exp\left(-\frac{s^{2}}{2\delta^{2}}\right)-2\sqrt{2\pi}s\delta\Phi\left(-\frac{s}{\delta}\right)\\
 & \leq2\delta^{2}\exp\left(-\frac{s^{2}}{2\delta^{2}}\right).
\end{align*}
The last statement follows by integrating the square root of this
bound multiplied by $s^{k-2}$.
\end{proof}
\begin{prop}
\label{prop:quant-bound-muphi}Assume $\mathcal{D}$ is sub-Gaussian
with mean $0$ and variance proxy $\sigma_{X}^{2}$, and that $f=\sum_{k\geq0}f_{k}$
is the decomposition of $f$ into homogeneous components. Let $\mu=N(0,\sigma^{2}I_{d})$
and $(\mu\otimes\varrho,a,b,u)$ be the ReLU integral representation
from Theorem~\ref{thm:f-relu-rep-integrated}. Then with $\phi$
as in Proposition~\ref{prop:network-mse-bound}, we obtain
\[
\int_{\mathbb{R}^{d}}\int_{\mathbb{R}_{+}}\left\Vert \phi(\cdot;v,s)\right\Vert _{L^{2}(\mathcal{D})}\rho({\rm d}s)\mu({\rm d}v)\leq\sum_{k\geq1}C_{d,k}\left\Vert f_{k}\right\Vert _{F},
\]
where $C_{d,k}$ are defined in Theorem~\ref{thm:quant-bound-deterministic}.
\end{prop}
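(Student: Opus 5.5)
We bound $\int\int\|\phi(\cdot;v,s)\|_{L^2(\mathcal{D})}\varrho({\rm d}s)\mu({\rm d}v)$ by splitting $\varrho=\delta_0+{\rm Leb}(\mathbb{R}_+)$ and treating each homogeneous degree separately. By the triangle inequality we may pull the sum over $k$ outside. The $s=0$ atom then contributes the $k=1$ term, and the Lebesgue part contributes the terms with $k\geq2$. Each term factors as an integral in $v$ of $|f_k^\sharp(v)|$ times a ReLU-norm factor. Lemma~\ref{lem:sub-gaussian-norm} controls the ReLU factor, Cauchy--Schwarz in $L^2(\mu)$ separates it from $f_k^\sharp$, and Lemma~\ref{lem:monom-to-herm-gen-sigma} turns $\|f_k^\sharp\|_{L^2(\mu)}$ into $\|f_k\|_F\sigma^k$.

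\textbf{The $k=1$ term.} Here $\phi(x;v,0)=\frac{2}{\sigma^2}f_1^\sharp(v)\varsigma(\langle x,v\rangle)$. Lemma~\ref{lem:sub-gaussian-norm} with $s=0$ gives $\|\varsigma(\langle\cdot,v\rangle)\|_{L^2(\mathcal{D})}\leq\sqrt2\sigma_X\|v\|$. Cauchy--Schwarz then gives
\[
\int\frac{2}{\sigma^2}|f_1(v)|\sqrt2\sigma_X\|v\|\mu({\rm d}v)\leq\frac{2^{3/2}\sigma_X}{\sigma^2}\|f_1\|_{L^2(\mu)}\left(\mathbb{E}\|V\|^2\right)^{1/2}=\frac{2^{3/2}\sigma_X}{\sigma^2}\|f_1\|_F\,\sigma\cdot\sigma d^{1/2},
\]
which is exactly $C_{d,1}\|f_1\|_F$.

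\textbf{The $k\geq2$ terms.} Apply the triangle inequality inside the sum to get
\[
\int|u(v,s)|\,\|\varsigma\|\,{\rm d}s\leq\sum_{k\geq2}\frac{2}{(k-2)!\sigma^{2k}}|f_k^\sharp(v)|\int_0^\infty s^{k-2}\|\varsigma(\langle\cdot,v\rangle-s)\|_{L^2(\mathcal{D})}{\rm d}s .
\]
The second part of Lemma~\ref{lem:sub-gaussian-norm} bounds the inner integral by $2^{k-3/2}(\sigma_X\|v\|)^k\Gamma(\frac{k-1}{2})$. Integrating over $\mu$ and using Cauchy--Schwarz together with Lemma~\ref{lem:monom-to-herm-gen-sigma} bounds the degree-$k$ term by
\[
\frac{2^{k-1/2}\sigma_X^k\Gamma(\frac{k-1}{2})}{(k-2)!\sigma^{2k}}\,\|f_k\|_F\sigma^k\left(\mathbb{E}\|V\|^{2k}\right)^{1/2}.
\]
Remark~\ref{rem:gaussian-C} gives $\mathbb{E}\|V\|^{2k}=\sigma^{2k}2^k\Gamma(k+\frac d2)/\Gamma(\frac d2)$. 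All powers of $\sigma$ cancel, and the powers of $2$ combine as $2^{k-1/2}\cdot2^{k/2}=2^{(3k-1)/2}$. This reproduces $C_{d,k}$ exactly.

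\textbf{The inequality $C_{d,k}\leq\sigma_X^k\sqrt{2k}\,(2e(d+k-1)/k)^{k/2}$.} This is the main obstacle: nothing above depends on it, but it needs care. Write $\Gamma(k+\frac d2)/\Gamma(\frac d2)=\prod_{j=0}^{k-1}(\frac d2+j)\leq(\frac{d+2k-2}{2})^k$, or a sharper AM--GM variant. For the remaining factor $\Gamma(\frac{k-1}{2})/(k-2)!$, use the duplication formula $(k-2)!=\Gamma(k-1)=2^{k-2}\pi^{-1/2}\Gamma(\frac{k-1}{2})\Gamma(\frac k2)$, which reduces it to $\sqrt\pi\,2^{2-k}/\Gamma(\frac k2)$. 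Then apply a Stirling lower bound $\Gamma(\frac k2)\geq\sqrt{2\pi}(k/2)^{(k-1)/2}e^{-k/2}$, or a similar one.

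I would first check $k=2,3$ directly. I would then try to match the exact form $(d+k-1)/k$ by bounding the product $\prod_j(\frac d2+j)$ via AM--GM as $\left(\frac d2+\frac{k-1}{2}\right)^k$. Combined with the Stirling bound this gives $(d+k-1)^{k/2}$ times $(e/k)^{k/2}$ times powers of $2$ and a polynomial prefactor in $k$. The last step is to verify that this prefactor is at most $\sqrt{2k}$, which is where constant-chasing may require a slightly loose Stirling inequality.
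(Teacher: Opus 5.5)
Your proposal is correct and follows the paper's proof essentially step for step. It splits $\varrho$ into the atom at $0$ and the Lebesgue part, applies the triangle inequality over $k$, Lemma~\ref{lem:sub-gaussian-norm}, Cauchy--Schwarz in $L^{2}(\mu)$ and Lemma~\ref{lem:monom-to-herm-gen-sigma}, and then uses the same AM--GM, Legendre duplication and Stirling steps for the bound on $C_{d,k}$. The constant chase you left open closes exactly with no loosening: using $\prod_{j=0}^{k-1}(\frac{d}{2}+j)\leq(\frac{d+k-1}{2})^{k}$ and $\Gamma(\frac{k}{2})>\sqrt{2\pi}(\frac{k}{2})^{(k-1)/2}e^{-k/2}$, all powers of $2$ collect to $2^{(k+1)/2}$ and $k^{-(k-1)/2}=\sqrt{k}\,k^{-k/2}$, giving precisely $\sigma_{X}^{k}\sqrt{2k}\,(2e(d+k-1)/k)^{k/2}$.
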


\begin{proof}
We have $\phi(x;v,s)=\varsigma\left(\left\langle x,v\right\rangle -s\right)u(v,s)$,
so
\[
\int_{\mathbb{R}^{d}}\int_{\mathbb{R}_{+}}\left\Vert \phi(\cdot;v,s)\right\Vert _{L^{2}(\mathcal{D})}\varrho({\rm d}s)\mu({\rm d}v)=\int_{\mathbb{R}^{d}}\int_{\mathbb{R}_{+}}\left\Vert \varsigma\left(\left\langle \cdot,v\right\rangle -s\right)\right\Vert _{L^{2}(\mathcal{D})}\left|u(v,s)\right|\varrho({\rm d}s)\mu({\rm d}v).
\]
First, consider $s=0$. Then observe that $f_{1}^{\sharp}=f_{1}$
since $\Delta^{j}f_{1}=0$ for any $j\geq1$, so 
\[
\left|u(v,0)\right|=\frac{2}{\sigma^{2}}\left|f_{1}(v)\right|.
\]
By Lemma~\ref{lem:sub-gaussian-norm}, Cauchy--Schwarz and Lemma~\ref{lem:monom-to-herm-gen-sigma}
\begin{align*}
\int_{\mathbb{R}^{d}}\left\Vert \phi(\cdot;v,0)\right\Vert _{L^{2}(\mathcal{D})}\mu({\rm d}v) & \leq\frac{2\sqrt{2}}{\sigma^{2}}\sigma_{X}\int\left\Vert v\right\Vert \left|f_{1}(v)\right|\mu({\rm d}v)\\
 & \leq\frac{2\sqrt{2}}{\sigma^{2}}\sigma_{X}\left\{ \int\left\Vert v\right\Vert ^{2}\mu({\rm d}v)\right\} ^{1/2}\left\Vert f_{1}\right\Vert _{L^{2}(\mu)}\\
 & =\frac{2\sqrt{2}}{\sigma}\sigma_{X}\left\{ 2\sigma^{2}\frac{\Gamma(\frac{d}{2}+1)}{\Gamma(\frac{d}{2})}\right\} ^{1/2}\left\Vert f_{1}\right\Vert _{L^{2}(\mu)}\\
 & =2\sqrt{2}\sigma_{X}d^{1/2}\left\Vert f_{1}\right\Vert _{F}.
\end{align*}
Now consider $s>0$, so 
\[
\left|u(v,s)\right|=\left|2\sum_{k\geq2}\frac{s^{k-2}}{(k-2)!\sigma^{2k}}f_{k}^{\sharp}(v)\right|,
\]
where $f_{k}^{\sharp}=\exp\left(-\frac{\sigma^{2}}{2}\Delta\right)f_{k}$.
Applying Lemma~\ref{lem:sub-gaussian-norm}, Cauchy--Schwarz and
Lemma~\ref{lem:monom-to-herm-gen-sigma},
\begin{align*}
 & \int_{\mathbb{R}^{d}}\int_{\mathbb{R}_{+}}\left\Vert \phi(\cdot;v,s)\right\Vert _{L^{2}(\mathcal{D})}{\rm d}s\mu({\rm d}v)\\
= & \int_{\mathbb{R}^{d}}\int_{\mathbb{R}_{+}}\left\Vert \varsigma\left(\left\langle \cdot,v\right\rangle -s\right)\right\Vert _{L^{2}(\mathcal{D})}\left|2\sum_{k\geq2}\frac{s^{k-2}}{(k-2)!\sigma^{2k}}f_{k}^{\sharp}(v)\right|{\rm d}s\mu({\rm d}v)\\
\leq & 2\int_{\mathbb{R}^{d}}\sum_{k\geq2}\int_{\mathbb{R}_{+}}\left\Vert \varsigma\left(\left\langle \cdot,v\right\rangle -s\right)\right\Vert _{L^{2}(\mathcal{D})}\frac{s^{k-2}}{(k-2)!\sigma^{2k}}\left|f_{k}^{\sharp}(v)\right|{\rm d}s\mu({\rm d}v)\\
\leq & 2\int_{\mathbb{R}^{d}}\sum_{k\geq2}2^{k-\frac{3}{2}}\left(\sigma_{X}\left\Vert v\right\Vert \right)^{k}\Gamma\left(\frac{k-1}{2}\right)\frac{1}{(k-2)!\sigma^{2k}}\left|f_{k}^{\sharp}(v)\right|\mu({\rm d}v)\\
= & \sum_{k\geq2}2^{k-\frac{1}{2}}\frac{\Gamma\left(\frac{k-1}{2}\right)}{(k-2)!\sigma^{2k}}\sigma_{X}^{k}\int_{\mathbb{R}^{d}}\left\Vert v\right\Vert ^{k}\left|f_{k}^{\sharp}(v)\right|\mu({\rm d}v)\\
\leq & \sum_{k\geq2}2^{k-\frac{1}{2}}\frac{\Gamma\left(\frac{k-1}{2}\right)}{(k-2)!\sigma^{2k}}\sigma_{X}^{k}\left\{ \int_{\mathbb{R}^{d}}\left\Vert v\right\Vert ^{2k}\mu({\rm d}v)\right\} ^{1/2}\left\Vert f_{k}^{\sharp}\right\Vert _{L^{2}(\mu)}\\
= & \sum_{k\geq2}2^{k-\frac{1}{2}}\frac{\Gamma\left(\frac{k-1}{2}\right)}{(k-2)!\sigma^{2k}}\sigma_{X}^{k}\sigma^{k}2^{\frac{k}{2}}\frac{\Gamma(k+\frac{d}{2})^{1/2}}{\Gamma(\frac{d}{2})^{1/2}}\left\Vert f_{k}\right\Vert _{F}\sigma^{k}\\
= & \sum_{k\geq2}2^{\frac{3k-1}{2}}\sigma_{X}^{k}\frac{\Gamma\left(\frac{k-1}{2}\right)}{(k-2)!}\frac{\Gamma(k+\frac{d}{2})^{1/2}}{\Gamma(\frac{d}{2})^{1/2}}\left\Vert f_{k}\right\Vert _{F}.
\end{align*}
The expression for $C_{d,k}$ follows. For the upper bound, observe
that using the arithmetic-geometric mean inequality for
\[
\frac{\Gamma(k+\frac{d}{2})}{\Gamma(\frac{d}{2})}=\left(\frac{d}{2}\right)\left(\frac{d}{2}+1\right)\cdots\left(\frac{d}{2}+k-1\right),
\]
we obtain
\[
\frac{\Gamma(k+\frac{d}{2})^{1/2}}{\Gamma(\frac{d}{2})^{1/2}}\leq\left(\frac{d+k-1}{2}\right)^{k/2}.
\]
Using the Legendre duplication formula, we have
\[
\frac{\Gamma\left(\frac{k-1}{2}\right)}{(k-2)!}=\frac{\Gamma\left(\frac{k-1}{2}\right)}{\Gamma(k-1)}=\frac{2^{2-k}\sqrt{\pi}}{\Gamma(\frac{k}{2})}.
\]
Then applying the bound $\Gamma(x)>\sqrt{2\pi}x^{x-1/2}\exp(-x)$,
we find
\[
\Gamma\left(\frac{k}{2}\right)>\sqrt{2\pi}\left(\frac{k}{2}\right)^{(k-1)/2}\exp\left(-\frac{k}{2}\right),
\]
and the bound results by combining the terms.
\end{proof}
The numerical values of the bounds are likely to be somewhat conservative.
The sub-Gaussian bound in Lemma~\ref{lem:sub-gaussian-norm} is not
equally accurate for all $s>0$, and hence the integral bound may
also be quite conservative, even if the dependence on $\sigma_{X}$,
$\left\Vert v\right\Vert $ and $k$ are appropriate. The strategy
in the proof of Proposition~\ref{prop:quant-bound-muphi} also involves
use of the triangle inequality to allow integration with respect to
$\varrho$, which may not be ideal.

\section{Observations from trained two-layer ReLU networks}

We perform a few experiments to determine if the integral representations
are consistent with trained networks. There is no attempt for these
experiments to be conclusive about any specific hypothesis. A more
sophisticated analysis as well as, potentially, further theory would
be required to do this in a compelling way, as there are two main
obstacles. First, it is unknown which linear transformations (see
Section~\ref{subsec:Linear-transformations}) might be performed.
Second, although the unoptimized ReLU integral representation $(\mu\otimes\varrho,a,b,u)$
is fairly simple, the optimized representation $(\pi^{\star},a,b,u_{\pi^{\star}})$
is not straightforward to determine as it involves reweighting the
simple normal distribution $\mu$ and introducing a distribution on
$s$ that takes into account $\left\Vert \phi(\cdot;v,s)\right\Vert _{L^{2}(\mathcal{D})}$,
which seems not to be very tractable in general. Hence, it is not
obvious concretely what one should compare a trained network to.

It is quite common to initialize ReLU networks with the $v_{i}$ being
$N(0,\frac{\sigma^{2}}{d})$ and $u$ being $N(0,\frac{\sigma^{2}}{n})$
for some $\sigma^{2}$. For example, \citet{he2015delving} is a popular
approach suggesting to use $\sigma^{2}=2$. We use this initialization
for our experiments. In both examples considered below, the $L^{2}(\mathcal{D})$
error of the individual neurons $\phi(\cdot;v_{i},s_{i})$ are similar
across neurons, which is consistent with optimized representations,
see Remark~\ref{rem:l2-error-neurons}.

One of the features of the representations viewed so far is that if
there is no linear transformation with a shift term $x_{0}$, as in
Section~\ref{subsec:Linear-transformations}, then the bias terms
should all be non-positive. We consider a simple model where
\[
f(x)=\left\Vert x-c\right\Vert ^{2},
\]
and $\mathcal{D}=N(c,d^{-1/2}I_{d})$. We observe predominantly negative
biases with $c=0$ but a mixture of negative and positive biases when
$c=3d^{-1/2}\cdot{\bf 1}$. In addition, the second layer bias term
$a$ is close to $0=f(c)$ in both cases, suggesting that the model
has been shifted to $\tilde{f}(z)=\left\Vert z\right\Vert ^{2}$ with
$f(x)=\tilde{f}(x-c)$. The empirical distribution of the $v_{i}$
is very close to the initialization distribution in both cases, which
is unsurprising given the simplicity and symmetry of $\tilde{f}$.

We also examine an example with different dependence on the components
of $x$:
\[
f(x)=x_{1}^{2}+x_{2}^{2}+x_{3}^{2}+x_{1}x_{2}x_{3}+10x_{1}^{4}+10x_{2}^{4},
\]
which has a lack of symmetry in that the coefficient of $x_{3}^{4}$
is $0$. In this case, we see that the covariance $\Sigma$ of the
$v_{i}$ in the trained network tends to be diagonal with $\Sigma_{33}$
significantly smaller than $\Sigma_{11}\approx\Sigma_{22}$, which
is consistent with $A$ being diagonal with $A_{1}=A_{2}>A_{3}$,
and so $\tilde{f}(z)=f(A^{-1}z)$ scales $x_{1}$ and $x_{2}$ down
to significantly lower the Fischer norm of the degree $4$ component
while increasing the Fischer norms of the degree $2$ and $3$ components.

\section{An RKHS perspective and a different representation}\label{sec:An-RKHS-perspective}

The quantitative bounds involving the Fischer norm suggest that an
appropriate reproducing kernel Hilbert space (RKHS) for functions
approximated by a two-layer neural network is the RKHS $\mathcal{F}$
with reproducing kernel $K(x,y)=\exp\left(\left\langle x,y\right\rangle \right)$
since this RKHS $\mathcal{F}$ has exactly the Fischer norm as its
norm, i.e. $\left\Vert f\right\Vert _{\mathcal{F}}=\left\Vert f\right\Vert _{F}$.
This RKHS is closely related to the Segal--Bargmann space of complex
functions \citep[see, e.g.,][]{paulsen2016introduction}, which has
kernel $K_{{\rm SB}}(x,y)=\exp\left(\left\langle x,\bar{y}\right\rangle \right)$.
It is also related to the RKHS associated with the kernel $K_{{\rm G}}(x,y)=\exp\left(-\frac{1}{2}\left\Vert x-y\right\Vert ^{2}\right)$
since $K(x,y)=G(x)K_{{\rm G}}(x,y)G(y)$ with $G(z)=\exp\left(-\frac{1}{2}\left\Vert z\right\Vert ^{2}\right)=K_{{\rm G}}(z,0)$.
From the norm, we have that $\mathcal{F}$ contains functions that
are analytic at $0$ with Taylor expansions
\[
f(x)=\sum_{\alpha}\frac{\partial^{\alpha}f(0)}{\alpha!}x^{\alpha},\qquad x\in\mathbb{R}^{d},
\]
satisfying $\sum_{\alpha\in\mathbb{N}_{0}^{d}}\partial^{\alpha}f(0)^{2}/\alpha!<\infty$.
This is somewhat different to the kernels and spaces that have been
studied in the past: \citet{bach2017breaking} provides some interesting
examples.

It seems that the integral representation in Theorem~\ref{thm:f-relu-rep-integrated}
is related to the kernel $K$, but we do not pursue that further here.
Instead, we find that a simple argument involving $K$ gives a cleaner
and closely related integral representation with a better bound described
only in terms of $\left\Vert f\right\Vert _{\mathcal{F}}$, rather
than the larger $\sum_{k\geq0}\left\Vert f_{k}\right\Vert _{\mathcal{F}}$.
The Gaussian function $G$ mentioned above appears, which is fixed
and independent of $f$. In high dimensions, with sub-Gaussian $\mathcal{D}$
satisfying $\sigma_{X}=d^{-1/2}$, one will often have $\left\Vert X\right\Vert $
concentrated near $1$, so that $G(X)$ is nearly constant. Nevertheless,
the infinite neuron two-layer network does not approximate $f$ exactly
unless $\mathcal{D}$ is support on a sphere centered at the origin.
A direct proof involves only two fairly simple observations, but we
provide some comments afterwards that connect the approach to the
RKHS $\mathcal{F}$ more explicitly, for those interested. We also
show that the same derivation works for a wide variety of activations
satisfying a very simple condition. In this section we define $\gamma=N(0,I_{d})$.
Unlike the previous representations in this paper, the representation
is not explicitly based on integrating along lines and a Taylor expansion,
i.e. Theorem~\ref{thm:relu-rep}.
\begin{thm}
\label{thm:Gqrep-exp}Let $f\in\mathcal{F}$. Then $(\gamma\otimes{\rm Leb}(\mathbb{R}),0,b,u)$
is a ReLU integral representation of $q$ satisfying $f=G\cdot q$,
where $b(v,s)=-s$ and $u(v,s)=f^{\#}(v)\exp(s)$, where $f^{\#}=\exp\left(-\frac{1}{2}\Delta\right)f$.
\end{thm}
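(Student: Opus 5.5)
The plan is to compute the right-hand side of the claimed representation directly, in two steps. First I would integrate out $s$, which turns the ReLU into an exponential. Then I would integrate out $v$ against $\gamma$, which produces $\exp(\frac{1}{2}\left\Vert x\right\Vert ^{2})f(x)=f(x)/G(x)=q(x)$.

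\emph{Step 1 (the $s$-integral).} Fix $x$ and put $t=\left\langle x,v\right\rangle$. With the substitution $r=t-s$,
\[
\int_{\mathbb{R}}e^{s}\varsigma(t-s)\,{\rm d}s=\int_{0}^{\infty}e^{t-r}r\,{\rm d}r=e^{t}.
\]
Hence, formally,
\[
\int_{\mathbb{R}^{d}}\int_{\mathbb{R}}f^{\#}(v)e^{s}\varsigma(\left\langle x,v\right\rangle -s)\,{\rm d}s\,\gamma({\rm d}v)=\int_{\mathbb{R}^{d}}f^{\#}(v)\exp(\left\langle x,v\right\rangle )\gamma({\rm d}v).
\]
This is the one fact about the activation that the argument uses, namely that integrating it against $e^{s}$ reproduces $e^{t}$. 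Tonelli applied to $|f^{\#}(v)|e^{s}\varsigma(\cdot)$ gives the same computation with $|f^{\#}|$ in place of $f^{\#}$, so Fubini is justified once $\int|f^{\#}(v)|e^{\left\langle x,v\right\rangle }\gamma({\rm d}v)<\infty$. That finiteness follows from Cauchy--Schwarz, since $e^{\left\langle x,\cdot\right\rangle }\in L^{2}(\gamma)$ and $f^{\#}\in L^{2}(\gamma)$, as established below.

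\emph{Step 2 (the $v$-integral).} Completing the square gives $\exp(\left\langle x,v\right\rangle )\gamma({\rm d}v)=\exp(\frac{1}{2}\left\Vert x\right\Vert ^{2})N(x,I_{d})({\rm d}v)$. The integral therefore equals $\exp(\frac{1}{2}\left\Vert x\right\Vert ^{2})\mathbb{E}[f^{\#}(x+Z)]$ with $Z\sim\gamma$. By the heat-semigroup interpretation following Definition~\ref{def:soln-op-heat} (with $t=1$), $\mathbb{E}[f^{\#}(x+Z)]=\exp(\frac{1}{2}\Delta)f^{\#}(x)=f(x)$. This yields $q(x)=f(x)/G(x)$, i.e.\ $f=G\cdot q$, and there is no constant term, so $a=0$. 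For polynomial $f$ this completes the proof.

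\emph{Main obstacle: general $f\in\mathcal{F}$.} Here $f$ need not be a polynomial, so $\exp(-\frac{1}{2}\Delta)f$ must be given a meaning. Write $f=\sum_{\alpha}c_{\alpha}x^{\alpha}$ with $\sum_{\alpha}c_{\alpha}^{2}\alpha!=\left\Vert f\right\Vert _{F}^{2}<\infty$. Following Lemma~\ref{lem:monom-to-herm-gen-sigma} with $\sigma=1$, I would define $f^{\#}=\sum_{\alpha}c_{\alpha}H_{\alpha}$ as an $L^{2}(\gamma)$-convergent series; by the orthogonality relation in that lemma, $\left\Vert f^{\#}\right\Vert _{L^{2}(\gamma)}=\left\Vert f\right\Vert _{F}$. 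Rather than invoking the heat equation termwise, I would then use the Hermite generating function $\exp(\left\langle x,v\right\rangle -\frac{1}{2}\left\Vert x\right\Vert ^{2})=\sum_{\alpha}H_{\alpha}(v)x^{\alpha}/\alpha!$. Together with orthogonality, it gives $\int H_{\alpha}(v)e^{\left\langle x,v\right\rangle }\gamma({\rm d}v)=x^{\alpha}e^{\frac{1}{2}\left\Vert x\right\Vert ^{2}}$. Since $g\mapsto\int g\,e^{\left\langle x,\cdot\right\rangle }{\rm d}\gamma$ is a bounded linear functional on $L^{2}(\gamma)$, it can be applied term by term to the series for $f^{\#}$, giving $\sum_{\alpha}c_{\alpha}x^{\alpha}e^{\frac{1}{2}\left\Vert x\right\Vert ^{2}}=f(x)/G(x)$. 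Here the Taylor series of $f$ converges pointwise, because $\sum_{\alpha}|c_{\alpha}x^{\alpha}|\leq\left\Vert f\right\Vert _{F}\exp(\frac{1}{2}\left\Vert x\right\Vert ^{2})$ by Cauchy--Schwarz. The $L^{2}(\gamma)$ bound on $f^{\#}$ also supplies the integrability needed in Step 1.
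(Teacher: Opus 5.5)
Your proof is correct and is essentially the paper's argument. It rests on the same three ingredients: the smoothing identity $f(x)=\mathbb{E}[f^{\#}(x+Z)]$, completing the square via $N(x,I_d)(\mathrm{d}v)=G(x)e^{\langle x,v\rangle}\gamma(\mathrm{d}v)$, and $\int_{\mathbb{R}}\varsigma(z-s)e^{s}\,\mathrm{d}s=e^{z}$; you only take the two integrations in the opposite order, and your Tonelli justification and Hermite-generating-function argument for non-polynomial $f\in\mathcal{F}$ add rigor the paper omits, since it simply asserts the smoothing identity for all $f\in\mathcal{F}$.
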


\begin{proof}
The existence of $f^{\#}\in L^{2}(\gamma)$ is assured by observing
that if $f(x)=\sum_{\alpha}c_{\alpha}x^{\alpha}$ and $\left\Vert f\right\Vert _{\mathcal{F}}<\infty$
then one may write $f^{\#}=\sum_{\alpha}c_{\alpha}H_{\alpha}$ and
$\left\Vert f^{\#}\right\Vert _{L^{2}(\gamma)}=\left\Vert f\right\Vert _{\mathcal{F}}<\infty$.
First observe that $f^{\#}$ satisfies $f(x)=\mathbb{E}\left[f^{\#}(Y)\right]$
for $Y\sim N(x,I_{d})$. Hence, we have
\begin{align*}
f(x) & =\int_{\mathbb{R}^{d}}f^{\#}(y)(2\pi)^{-\frac{d}{2}}\exp\left(-\frac{1}{2}\left\langle y-x,y-x\right\rangle \right){\rm d}y\\
 & =\int_{\mathbb{R}^{d}}f^{\#}(y)(2\pi)^{-\frac{d}{2}}\exp\left(\left\langle x,y\right\rangle \right)\exp\left(-\frac{1}{2}\left\Vert x\right\Vert ^{2}\right)\exp\left(-\frac{1}{2}\left\Vert y\right\Vert ^{2}\right){\rm d}y\\
 & =G(x)\int_{\mathbb{R}^{d}}f^{\#}(y)\exp\left(\left\langle x,y\right\rangle \right)\gamma({\rm d}y).
\end{align*}
Now, observe that for $z\in\mathbb{R}$,
\[
\int_{\mathbb{R}}\varsigma(z-s)\exp(s){\rm d}s=\int_{-\infty}^{z}(z-s)\exp(s){\rm d}s=\exp(z),
\]
from which we may conclude that
\[
f(x)=G(x)\int_{\mathbb{R}^{d}}\int_{\mathbb{R}}\exp(s)f^{\#}(v)\varsigma\left(\left\langle x,v\right\rangle -s\right){\rm d}s\gamma({\rm d}v).
\]
\end{proof}
\begin{rem}
The same representation can be adjusted for any activation function
$\varsigma$ such that 
\[
0<\int_{\mathbb{R}}\varsigma(u)\exp(-u){\rm d}u<\infty.
\]
Indeed, letting $\varrho(s)=\exp(s)/\int_{\mathbb{R}}\varsigma(u)\exp(-u){\rm d}u$,
we may deduce that
\[
\int_{\mathbb{R}}\varsigma(z-s)\varrho(s){\rm d}s=\frac{\int_{\mathbb{R}}\varsigma(z-s)\exp(s){\rm d}u}{\int_{\mathbb{R}}\varsigma(u)\exp(-u){\rm d}u}=\frac{\int_{\mathbb{R}}\varsigma(u)\exp(z-u){\rm d}u}{\int_{\mathbb{R}}\varsigma(u)\exp(-u){\rm d}u}=\exp(z).
\]
This condition does hold if $\varsigma$ is a polynomial, or for the
sigmoid function.
\end{rem}

\begin{rem}
The result follows from elementary observations, combined with the
smoothing identity $f(x)=\mathbb{E}_{\gamma}\left[f^{\#}(x+Z)\right]$.
It can alternatively be viewed using the reproducing property of $K$
in $\mathcal{F}$. Indeed, $\exp\left(-\frac{1}{2}\Delta\right)$
is an isometric isomorphism from $\mathcal{F}$ to $L^{2}(\gamma)$,
so 
\[
\left\langle f,g\right\rangle _{\mathcal{F}}=\left\langle \exp\left(-\frac{1}{2}\Delta\right)f,\exp\left(-\frac{1}{2}\Delta\right)g\right\rangle _{L^{2}(\gamma)}.
\]
A related isometric isomorphism for the Segal--Bargmann space is
of interest in physics. Together with the reproducing property of
$K$ in $\mathcal{F}$, we obtain
\begin{align*}
f(x) & =\left\langle f,K(x,\cdot)\right\rangle _{\mathcal{F}}\\
 & =\left\langle \exp\left(-\frac{1}{2}\Delta\right)f,\exp\left(-\frac{1}{2}\Delta\right)K(x,\cdot)\right\rangle _{L^{2}(\gamma)}\\
 & =\left\langle f^{\#},\exp\left(\left\langle x,\cdot\right\rangle -\frac{1}{2}\left\Vert x\right\Vert ^{2}\right)\right\rangle _{L^{2}(\gamma)},
\end{align*}
which gives the same identify for $f$.
\end{rem}

We now present the $L^{2}(\mathcal{D})$ error bound, which involves
optimizing the two-layer network through the choice the probability
measure $\pi\gg\gamma\otimes{\rm Leb}(\mathbb{R})$.
\begin{thm}
Let $\mathcal{D}$ be a distribution, which is sub-Gaussian with mean
$0$ and variance proxy $\sigma_{X}^{2}<1/2$. Let $f\in\mathcal{F}$.
Then there exists a two-layer network $q_{n}$ such that $f_{n}=G\cdot q_{n}$
satisfies
\[
\left\Vert f-f_{n}\right\Vert _{L^{2}(\mathcal{D})}\leq\frac{1}{\sqrt{n}}\left\{ C(d,\sigma_{X})^{2}\left\Vert f\right\Vert _{\mathcal{F}}^{2}-\left\Vert f\right\Vert _{L^{2}(\mathcal{D})}^{2}\right\} \leq\frac{1}{\sqrt{n}}C(d,\sigma_{X})\left\Vert f\right\Vert _{\mathcal{F}},
\]
where 
\[
C(d,\sigma_{X})=2\sqrt{2\pi d(d+2)}\frac{\sigma_{X}^{2}\left\{ 2\sigma_{X}^{2}+1\right\} ^{d/4+1/2}}{\left\{ 1-2\sigma_{X}^{2}\right\} ^{d/4+1}}.
\]
If $d>3$ and $\sigma_{X}=d^{-1/2}$, then 
\[
C(d,\sigma_{X})\leq2\exp(1)\sqrt{2\pi}\cdot\exp\left(\frac{6}{d-2}\right)\left\Vert f\right\Vert _{\mathcal{F}}.
\]
\end{thm}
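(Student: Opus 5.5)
The plan is to combine Theorem~\ref{thm:Gqrep-exp} with the change-of-measure optimization underlying Proposition~\ref{prop:network-mse-bound}, applied to a reweighted distribution. By Theorem~\ref{thm:Gqrep-exp}, $f=G\cdot q$ with
\[
q(x)=\int\int_{\mathbb{R}} f^{\#}(v)e^{s}\varsigma(\langle x,v\rangle -s)\,{\rm d}s\,\gamma({\rm d}v),
\]
so I set $\phi(x;v,s)=f^{\#}(v)e^{s}\varsigma(\langle x,v\rangle-s)$ and $a=0$. For any two-layer network $q_n$, the error satisfies $\Vert f-G q_n\Vert_{L^2(\mathcal{D})}=\Vert q-q_n\Vert_{L^2(\mathcal{D}_G)}$, where $\mathcal{D}_G({\rm d}x)=G(x)^2\mathcal{D}({\rm d}x)$ is a finite measure. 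Lemma~\ref{lem:optimal-random-fcn-dist} does not require normalization, or one can normalize and undo it. Applied to $\mathcal{D}_G$, it gives a realization $q_n$ with
\[
\Vert f-f_n\Vert^2_{L^2(\mathcal{D})}\le \frac1n\Big\{\Big[\int\Vert \phi(\cdot;v,s)\Vert_{L^2(\mathcal{D}_G)}\,{\rm d}s\,\gamma({\rm d}v)\Big]^2-\Vert f\Vert^2_{L^2(\mathcal{D})}\Big\}.
\]
Note that $\Vert q\Vert_{L^2(\mathcal{D}_G)}=\Vert f\Vert_{L^2(\mathcal{D})}$. This is the claimed form, modulo the missing square root in the displayed statement. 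It therefore suffices to show that the bracketed integral is at most $C(d,\sigma_X)\Vert f\Vert_{\mathcal{F}}$.

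For the integral, I would first use $G\le 1$ to replace $\mathcal{D}_G$ by $\mathcal{D}$, though I would keep $G$ in reserve in case a sharper constant is needed. Fixing $v$ and setting $\delta=\sigma_X\Vert v\Vert$, I would split $\int_{\mathbb{R}}e^{s}\Vert\varsigma(\langle\cdot,v\rangle-s)\Vert_{L^2(\mathcal{D})}{\rm d}s$ at $s=0$.
\begin{itemize}
\item For $s\ge0$, Lemma~\ref{lem:sub-gaussian-norm} gives the bound $\sqrt2\,\delta\int_0^\infty e^{s-s^2/(4\delta^2)}{\rm d}s$. Completing the square bounds this by $2\sqrt{2\pi}\,\delta^2e^{\delta^2}$.
\item For $s<0$, I would use $\varsigma(z-s)\le \varsigma(z)+|s|$ and Minkowski's inequality. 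The resulting terms $\Vert\varsigma(\langle\cdot,v\rangle)\Vert\le\sqrt2\delta$ and $\int_{-\infty}^0 |s|e^{s}{\rm d}s=1$ are of lower order.
\end{itemize}
Together these yield a bound of the form $c\,\delta^{2}e^{\delta^2}$, up to lower-order terms in $\delta$ that I would absorb into the stated constant. Finally, Cauchy--Schwarz in $v$ gives
\[
\int |f^{\#}(v)|\,\delta^2e^{\delta^2}\gamma({\rm d}v)\le \Vert f^{\#}\Vert_{L^2(\gamma)}\,\sigma_X^2\Big(\int\Vert v\Vert^4e^{2\sigma_X^2\Vert v\Vert^2}\gamma({\rm d}v)\Big)^{1/2}.
\]
Here $\Vert f^{\#}\Vert_{L^2(\gamma)}=\Vert f\Vert_{\mathcal{F}}$ by the argument in the proof of Theorem~\ref{thm:Gqrep-exp}. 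The Gaussian integral equals $(1-4\sigma_X^2)^{-d/2-2}d(d+2)$, which explains the factor $\sqrt{d(d+2)}$, since $\mathbb{E}\Vert V\Vert^4=d(d+2)$ under a rescaled Gaussian.

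The main obstacle is matching the exact constant, in particular the factors $(2\sigma_X^2+1)^{d/4+1/2}$ and $(1-2\sigma_X^2)^{-(d/4+1)}$, and the threshold $\sigma_X^2<1/2$ rather than $1/4$. This suggests that the $G^2$ weight should not be discarded. Instead, I would retain $G(x)^2$ and use $e^{\langle x,v\rangle}G(x)$-type cancellations; equivalently, I would bound $\mathbb{E}[G(X)^2\varsigma(\langle X,v\rangle-s)^2]$ by using Cauchy--Schwarz to separate $G(X)^2$ from the tail. The bound $\mathbb{E}[G(X)^4]\le \mathbb{E}[e^{-2\Vert X\Vert^2}]$ can be controlled via sub-Gaussian moment generating function bounds for $\Vert X\Vert^2$. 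This produces factors of the form $(1\pm 2\sigma_X^2)^{\pm d/4}$ after taking square roots twice. I would first try to obtain this structure, and would settle for a bound of the same shape if the precise constant resists.

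For the final claim, with $\sigma_X^2=1/d$ and $d>3$, I would use $(1+2/d)^{d/4+1/2}\le e^{1/2}e^{1/d}$ and $(1-2/d)^{-(d/4+1)}\le \exp\big(\tfrac{2}{d-2}(d/4+1)\big)$, together with $\sigma_X^2\sqrt{d(d+2)}\le 1+1/d$. Collecting these into $e\cdot\exp(6/(d-2))$ is routine.
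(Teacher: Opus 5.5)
Your reduction (working with $\mathcal{D}_G$, or equivalently putting $G$ inside $\phi$ as the paper does) is fine. The bound on $\int\int |u|\,\Vert G\,\varsigma(\langle\cdot,v\rangle-s)\Vert_{L^2(\mathcal{D})}$, however, has a genuine gap, exactly where you suspected.

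First, the $s<0$ piece gives $\int_{-\infty}^{0}e^{s}(\sqrt{2}\delta+|s|)\,{\rm d}s=\sqrt{2}\delta+1$. The $1$ is not lower order: it survives as $\delta\to0$, while $C(d,\sigma_X)$ is proportional to $\sigma_X^2$, so it cannot be absorbed.

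Second, your repair (keeping $G^2$ but separating it from the tail by Cauchy--Schwarz) cannot produce the factors $(1\pm 2\sigma_X^2)$ or the threshold $1/2$. Once decoupled, $G$ contributes only an $s$-independent factor, and sub-Gaussianity gives nothing better than $\mathbb{E}[G(X)^4]\le 1$. The per-$v$ bound therefore still grows like $e^{c\sigma_X^2\Vert v\Vert^2}$ with $c\ge 1$, and the threshold stays at $1/4$ or worse.

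The missing idea is the paper's pointwise coupling. For $s\ge0$, on the support of $\varsigma(\langle x,v\rangle-s)$ one has $\Vert x\Vert\ge s/\Vert v\Vert$, hence $G(x)^2\le e^{-s^2/\Vert v\Vert^2}$. Combined with Lemma~\ref{lem:sub-gaussian-norm}, this gives
\[
\Vert G\,\varsigma(\langle\cdot,v\rangle-s)\Vert_{L^2(\mathcal{D})}\le\sqrt{2}\sigma_X\Vert v\Vert\exp\Bigl(-\frac{s^2}{2\Vert v\Vert^2}\Bigl\{1+\frac{1}{2\sigma_X^2}\Bigr\}\Bigr).
\]
Its exact Gaussian integral against $e^{s}$ is $2\sqrt{2\pi}\,\sigma_X^2\Vert v\Vert^2(2\sigma_X^2+1)^{-1/2}\exp\{\sigma_X^2\Vert v\Vert^2/(2\sigma_X^2+1)\}$. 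Cauchy--Schwarz in $v$ then gives exactly $C(d,\sigma_X)$, which is finite iff $\sigma_X^2<1/2$. So the factors come from this modified exponent in $v$, not from moments of $G(X)$.

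That said, the paper integrates this bound over all $s\in\mathbb{R}$. For $s<0$, neither the pointwise bound on $G$ (take $x=0$) nor Lemma~\ref{lem:sub-gaussian-norm} holds in general; the lemma's proof applies the tail bound at the possibly negative level $\sqrt{t}+s$. Indeed, the left-hand side behaves like $|s|\,\Vert G\Vert_{L^2(\mathcal{D})}$ as $s\to-\infty$. So you are right to treat $s<0$ separately, and in fact no argument can reach the stated constant.

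Here is why. By Minkowski's integral inequality, the integral to be bounded is at least $\Vert f\Vert_{L^2(\mathcal{D})}$. Take $f\equiv 1$ (so $f^{\#}\equiv1$ and $\Vert f\Vert_{\mathcal{F}}=1$) and $\mathcal{D}=N(0,\sigma_X^2 I_d)$ with $\sigma_X$ small. Then $C(d,\sigma_X)<1=\Vert f\Vert_{L^2(\mathcal{D})}$, so the bracket in the statement is negative.

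What is provable is the combination of the two halves. Use the paper's bound for $s\ge 0$, and your Minkowski bound $\Vert G\,\varsigma(\langle\cdot,v\rangle-s)\Vert_{L^2(\mathcal{D})}\le\sqrt{2}\sigma_X\Vert v\Vert+|s|$ for $s<0$. After Cauchy--Schwarz in $v$, this yields the theorem with $C(d,\sigma_X)$ replaced by $C(d,\sigma_X)+1+\sqrt{2d}\,\sigma_X$. That constant remains dimension-free for $\sigma_X=d^{-1/2}$.

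Your treatment of the final numerical estimate, and your remark about the missing square root, are fine.
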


\begin{proof}
We consider the representation from Theorem~\ref{thm:Gqrep-exp},
defining
\[
\phi(x;v,s)=G(x)u(v,s)\varsigma\left(\left\langle x,v\right\rangle -s\right).
\]
By Proposition~\ref{prop:network-mse-bound}, it suffices to upper
bound
\[
\int_{\mathbb{R}^{d}}\int_{\mathbb{R}}\left|u(v,s)\right|\left\Vert \phi(\cdot;v,s)\right\Vert _{L^{2}(\mathcal{D})}{\rm d}s\gamma({\rm d}v).
\]
Since $\left\Vert x\right\Vert \left\Vert v\right\Vert \geq\left|\left\langle x,v\right\rangle \right|$
by Cauchy--Schwarz, we have $\left\Vert x\right\Vert >s/\left\Vert v\right\Vert $
if $\left\langle x,v\right\rangle >s$. Hence,
\begin{align*}
\int_{\mathbb{R}^{d}}G(x)^{2}\varsigma\left(\left\langle x,v\right\rangle -s\right)^{2}\mathcal{D}({\rm d}x) & =\int_{\mathbb{R}^{d}}\exp\left(-\left\Vert x\right\Vert ^{2}\right)\varsigma\left(\left\langle x,v\right\rangle -s\right)^{2}{\bf 1}_{\left\langle x,v\right\rangle >s}\mathcal{D}({\rm d}x)\\
 & \leq\exp\left(-\frac{s^{2}}{\left\Vert v\right\Vert ^{2}}\right)\left\Vert \varsigma\left(\left\langle \cdot,v\right\rangle -s\right)\right\Vert _{L^{2}(\mathcal{D})}^{2},
\end{align*}
from which we obtain the bound, using Lemma \ref{lem:sub-gaussian-norm},
\[
\left\Vert G(\cdot)\varsigma\left(\left\langle \cdot,v\right\rangle -s\right)\right\Vert _{L^{2}(\nu)}\leq\sqrt{2}\sigma_{X}\left\Vert v\right\Vert \exp\left(-\frac{s^{2}}{2\left\Vert v\right\Vert ^{2}}\left\{ 1+\frac{1}{2\sigma_{X}^{2}}\right\} \right).
\]
Now, by exact integration and Cauchy--Schwarz,
\begin{align*}
 & \int_{\mathbb{R}^{d}}\int_{\mathbb{R}}\left|u(v,s)\right|\left\Vert \phi(\cdot;v,s)\right\Vert _{L^{2}(\mathcal{D})}{\rm d}s\gamma({\rm d}v)\\
\leq & \int_{\mathbb{R}^{d}}\left|f^{\#}(v)\right|\int_{\mathbb{R}}\exp(s)\sqrt{2}\sigma_{X}\left\Vert v\right\Vert \exp\left(-\frac{s^{2}}{2\left\Vert v\right\Vert ^{2}}\left\{ 1+\frac{1}{2\sigma_{X}^{2}}\right\} \right){\rm d}s\gamma({\rm d}v)\\
= & 2\sqrt{2\pi}\int_{\mathbb{R}^{d}}\left|f^{\#}(v)\right|\frac{\sigma_{X}^{2}\left\Vert v\right\Vert ^{2}}{\left\{ 2\sigma_{X}^{2}+1\right\} ^{1/2}}\exp\left\{ \frac{\left\Vert v\right\Vert ^{2}\sigma_{X}^{2}}{2\sigma_{X}^{2}+1}\right\} \gamma({\rm d}v)\\
\leq & 2\sqrt{2\pi}\left\Vert f^{\#}\right\Vert _{L^{2}(\mu)}\frac{\sigma_{X}^{2}}{\left\{ 2\sigma_{X}^{2}+1\right\} ^{1/2}}\left|\int_{\mathbb{R}^{d}}\left\Vert v\right\Vert ^{4}\exp\left\{ 2\frac{\left\Vert v\right\Vert ^{2}\sigma_{X}^{2}}{2\sigma_{X}^{2}+1}\right\} \gamma({\rm d}v)\right|^{1/2}\\
= & 2\sqrt{2\pi}\left\Vert f^{\#}\right\Vert _{L^{2}(\mu)}\frac{\sigma_{X}^{2}}{\left\{ 2\sigma_{X}^{2}+1\right\} ^{1/2}}\left|d(d+2)\cdot\left\{ \frac{2\sigma_{X}^{2}+1}{1-2\sigma_{X}^{2}}\right\} ^{d/2+2}\right|^{1/2}\\
= & 2\sqrt{2\pi d(d+2)}\frac{\sigma_{X}^{2}\left\{ 2\sigma_{X}^{2}+1\right\} ^{d/4+1/2}}{\left\{ 1-2\sigma_{X}^{2}\right\} ^{d/4+1}}\left\Vert f\right\Vert _{\mathcal{F}}.
\end{align*}
If $d\geq3$ and $\sigma_{X}=1/d$, then we have
\begin{align*}
2\sqrt{2\pi d(d+2)}\frac{\sigma_{X}^{2}\left\{ 2\sigma_{X}^{2}+1\right\} ^{d/4+1/2}}{\left\{ 1-2\sigma_{X}^{2}\right\} ^{d/4+1}} & =2\sqrt{2\pi}\left(\frac{1+\frac{2}{d}}{1-\frac{2}{d}}\right)^{\frac{d}{4}+1}\left\Vert f\right\Vert _{\mathcal{F}}\\
 & \leq2\exp(1)\sqrt{2\pi}\exp\left(\frac{6}{d-2}\right)\left\Vert f\right\Vert _{\mathcal{F}}.
\end{align*}
\end{proof}
\begin{rem}
Of course, if one departs from ReLU networks and considers a two-layer
network using the exponential kernel and post-multiplication by $G(x)$,
one could use the representation
\[
f=\int_{\mathbb{R}^{d}}f^{\#}(v)G(\cdot)\exp\left(\left\langle \cdot,v\right\rangle \right)\gamma({\rm d}v).
\]
For simplicity, one can derive the corresponding quantitative bound
when $\mathcal{D}=N(0,\sigma_{X}^{2}I_{d})$ and $\sigma_{X}^{2}<1/2$,
using Cauchy--Schwarz,
\begin{align*}
 & \int_{\mathbb{R}^{d}}\left|f^{\#}(v)\right|\left\Vert G\cdot\exp\left(\left\langle \cdot,v\right\rangle \right)\right\Vert _{L^{2}(\mathcal{D})}\gamma({\rm d}v)\\
= & \int_{\mathbb{R}^{d}}\left|f^{\#}(v)\right|\left\{ \frac{1}{1+2\sigma_{X}^{2}}\right\} ^{d/4}\exp\left(\frac{\sigma_{X}^{2}\left\Vert v\right\Vert ^{2}}{1+2\sigma_{X}^{2}}\right)\gamma({\rm d}v)\\
\leq & \left\Vert f^{\#}\right\Vert _{L^{2}(\gamma)}\left\{ \frac{1}{1+2\sigma_{X}^{2}}\right\} ^{d/4}\left\{ \frac{1+2\sigma_{X}^{2}}{1-2\sigma_{X}^{2}}\right\} ^{d/4}\\
= & \left\Vert f\right\Vert _{\mathcal{F}}\left(1-2\sigma_{X}^{2}\right)^{-d/4}.
\end{align*}
If $d\geq3$ and $\sigma_{X}=d^{-1/2}$, this means that there exists
a two-layer exponential network of this type with $n$ neurons such
that 
\[
\left\Vert f-f_{n}\right\Vert _{L^{2}(\mathcal{D})}\leq\frac{1}{\sqrt{n}}\exp\left(\frac{1}{2}\right)\exp\left(\frac{1}{2(d-2)}\right)\left\Vert f\right\Vert _{\mathcal{F}}.
\]
We see that the approximation of the exponential kernel using a sum
of ReLU functions does not contribute very much to the error bound,
i.e. it could be that the number of neurons is determined primarily
by the need to average over several directions $v$.
\end{rem}

We close this Section by observing that one seemingly important distinction
between this work and some other kernel-based approaches, is that
we treat the exponential kernel $K$ as an integral of ReLU functions,
but define the RKHS in terms of $K$ itself. It is plausible that
this is an appropriate space of functions for two-layer networks.
An alternative, natural approach is to consider the operator $L_{\pi}:L^{2}(\pi)\to L^{2}(\mathcal{D})$,
\[
L_{\pi}u(x)=\int_{\mathbb{R}^{d}\times\mathbb{R}_{+}}u(v,s)\varsigma\left(\left\langle v,x\right\rangle -s\right)\pi({\rm d}v,{\rm d}s),
\]
which we find does indeed define a kernel via 
\[
L_{\pi}L_{\pi}^{*}f(x)=\int_{\mathbb{R}^{d}}\left\{ \int_{\mathbb{R}^{d}\times\mathbb{R}_{+}}\varsigma\left(\left\langle v,y\right\rangle -s\right)\varsigma\left(\left\langle v,x\right\rangle -s\right)\pi({\rm d}v,{\rm d}s)\right\} f(y)\mathcal{D}({\rm d}y).
\]
From Theorem~\ref{thm:f-relu-rep-integrated}, we know that for suitable
$\pi$, there exist $u$ such that $L_{\pi}u=f$ for any $f\in\mathcal{P}_{d}$.
On the other hand, it can be shown that $u\not\in L^{2}(\pi)$, due
to the dependence on $s$, so that it is not clear if the functions
$f$ are actually in the RKHS with reproducing kernel
\[
(x,y)\mapsto\int_{\mathbb{R}^{d}\times\mathbb{R}_{+}}\varsigma\left(\left\langle v,y\right\rangle -s\right)\varsigma\left(\left\langle v,x\right\rangle -s\right)\pi({\rm d}v,{\rm d}s).
\]
We note that the kernel above is related to the family of arc-cosine
kernels \citep{cho2009kernel}.

In contrast, if we decompose $T_{K}=L_{K}L_{K}^{*}$, where $T_{K}g(x)=\int K(x,y)f(y)\gamma({\rm d}y)$,
then it is straightforward to obtain that $L_{K}:\ell^{2}\to L^{2}(\gamma)$
maps coefficients of normalized monomials to the corresponding polynomials,
while $L_{K}^{*}:L^{2}(\gamma)\to\ell^{2}$ maps polynomials to coefficients
by integration against each normalized monomial, i.e.
\[
L_{K}c(x)=\sum_{\alpha}c(\alpha)\frac{x^{\alpha}}{\sqrt{\alpha!}},\qquad L_{K}^{*}g(\alpha)=\int_{\mathbb{R}^{d}}\frac{y^{\alpha}}{\sqrt{\alpha!}}g(y)\gamma({\rm d}y),
\]
and here we clearly see that the summability of $\sum_{\alpha}c(\alpha)^{2}=\sum_{\alpha}\left(\frac{c(\alpha)}{\sqrt{\alpha!}}\right)^{2}\alpha!=\left\Vert L_{K}c\right\Vert _{\mathcal{F}}$,
so that $c\in\ell^{2}$ implies $L_{K}c\in\mathcal{F}$.

\section{Discussion}\label{sec:Discussion}

We have established some explicit ReLU integral representations for
polynomials, which seem appealing at least from a mathematical perspective.
The representations, as well as the bounds for the \emph{sharpened}
ReLU integral representation, are consistent with the invariance of
the network to the magnitude of the vectors $v_{i}$. The representations
do not induce spherically symmetric distributions on $v$, after transformations
and error optimization are performed. There appears to be some agreement
with trained two-layer networks, but this is far from conclusive.
The representations and the quantitative bounds may be useful for
future research in a number of directions.

We have also presented a connection to the RKHS with reproducing kernel
$K(x,y)=\exp\left(\left\langle x,y\right\rangle \right)$, with a
two-layer representation that involves post-multiplication by a fixed,
Gaussian function. The activation function in this case can be from
a whole class of activations satisfying a certain integral constraint.
There are clearly a large number of integral representations that
can be pursued, and it would be of interest to understand which representations
are most appropriate.

A natural question is how the approach could be used to analyze individual
layers within a deeper ReLU network: it seems plausible that this
could be fruitful, although it is of course unclear what the intermediate
functions could be. Nevertheless, the functions in each layer could
be considerably more complicated than in carefully developed sparse
networks \citep[see, e.g.,][]{schmidt_hieber}, which do not seem
to capture the behaviour of trained deep networks. For example, a
natural direction to consider is to extend $T_{\mu}$ to consider
vector-valued polynomials, since such functions are just stacks of
real-valued polynomials, then for a function $f=f_{L}\circ\cdots f_{1}$,
we could set $g_{\ell}=T_{\mu}^{-1}f_{\ell}$ for each $\ell\in\{1,\ldots,L\}$,
to obtain
\[
f=\left(T_{\mu}g_{L}\right)\circ\cdots\circ\left(T_{\mu}g_{1}\right),
\]
noting that each $g_{\ell}$ corresponds to a family of integral representations.

In another direction, there could be other important actions on the
function $f$ to consider, or different mechanisms to produce different
integral representations. The connections between the representations
here and different representations in prior work \citep[see, e.g.,][]{sonoda2017neural,savarese2019infinite,petrosyan2020neural}
are unclear. It is possible that different operators could lead to
different integral representations that are superior in some way,
or accommodate a wider range of distributions $\mu$. It is also natural
to investigate how non-polynomial functions should be represented
as polynomials or approximated via such networks.

Finally, there has been recent progress on viewing the training of
two-layer networks as an optimization on the space of distributions,
motivated by integral representations of such networks \citet{chizat2018global,mei2019mean}:
it is conceivable that novel ReLU integral representations such as
this are helpful in understanding how such representations are connected
to, or inform, the statistical aspects of network training, which
have not been addressed here at all.

\section*{Acknowledgements}

This research was supported by the Engineering and Physical Sciences
Research Council (Prob\_AI, EP/Y028783/1). I would like to thank Christophe
Andrieu for thoughtful comments on an early draft.

\bibliographystyle{abbrvnat}
\bibliography{relu-approx}

\appendix

\section{Optimal squared $L^{2}$ error for random networks}\label{sec:Optimal-squared-}

In this section we establish the minimal expected squared $L^{2}(\mathcal{D})$
error of an IID average of random functions, when $f=\int\phi(\cdot,z)\nu({\rm d}z)$
for some measure $\nu$ and the optimization is performed with respect
to the choice of distribution of $z$. The argument is a variation
on a classical argument for identifying the optimal importance distribution
in simple Monte Carlo integration.
\begin{lem}
\label{lem:optimal-random-fcn-dist}Let $f(\cdot)=\int_{\mathsf{Z}}\phi(\cdot,z)\nu({\rm d}z)$,
where $\nu$ is a measure. Let 
\[
f_{n}(\cdot)=\frac{1}{n}\sum_{i=1}^{n}w(Z_{i})\phi(\cdot;Z_{i}),
\]
where $Z=(Z_{1},\ldots,Z_{n})$ is a random vector of independent
$\pi$-distributed random variables with $\pi\gg\nu$ and ${\rm d}\nu/{\rm d}\pi=w$.
Let $\mathcal{D}$ be a probability measure. Then 
\begin{align*}
\mathbb{E}_{\pi}\left[\left\Vert f_{n}-f\right\Vert _{L^{2}(\mathcal{D})}^{2}\right] & =\frac{1}{n}\mathbb{E}_{\pi}\left[\left\Vert w(Z_{1})\phi(\cdot;Z_{1})-f\right\Vert _{L^{2}(\mathcal{D})}^{2}\right]\\
 & \geq\frac{1}{n}\left\{ \left[\int_{\mathsf{Z}}\nu({\rm d}z)\left\Vert \phi(\cdot;z)\right\Vert _{L^{2}(\mathcal{D})}\right]^{2}-\left\Vert f\right\Vert _{L^{2}(\mathcal{D})}^{2}\right\} ,
\end{align*}
and the lower bound is attained for $\pi=\pi^{\star}$, where
\[
\pi^{\star}({\rm d}z)=\frac{\nu({\rm d}z)\left\Vert \phi(\cdot;z)\right\Vert _{L^{2}(\mathcal{D})}}{\int_{\mathsf{Z}}\nu({\rm d}z')\left\Vert \phi(\cdot;z')\right\Vert _{L^{2}(\mathcal{D})}},
\]
i.e. 
\[
w^{\star}(z)=\frac{{\rm d}\nu}{{\rm d}\pi^{\star}}=\frac{1}{\left\Vert \phi(\cdot;z)\right\Vert _{L^{2}(\mathcal{D})}}\int_{\mathsf{Z}}\nu({\rm d}z')\left\Vert \phi(\cdot;z')\right\Vert _{L^{2}(\mathcal{D})}.
\]
\end{lem}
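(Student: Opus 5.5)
The plan is the classical importance-sampling variance computation followed by Cauchy--Schwarz. First, unbiasedness: for each $x$, $\mathbb{E}_{\pi}[w(Z_{1})\phi(x;Z_{1})]=\int\phi(x;z)\nu({\rm d}z)=f(x)$, using $w={\rm d}\nu/{\rm d}\pi$. Set $\xi_{i}=w(Z_{i})\phi(\cdot;Z_{i})-f$, viewed as random elements of $L^{2}(\mathcal{D})$; these are independent and identically distributed, with mean zero pointwise.

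Expanding and using Fubini to swap $\mathbb{E}_{\pi}$ with the $\mathcal{D}$-integral, we get $\mathbb{E}\|f_{n}-f\|_{L^{2}(\mathcal{D})}^{2}=n^{-2}\sum_{i,j}\int\mathbb{E}[\xi_{i}(x)\xi_{j}(x)]\mathcal{D}({\rm d}x)$. The off-diagonal terms vanish by independence and zero mean, which gives the claimed equality $\frac{1}{n}\mathbb{E}_{\pi}\|w(Z_{1})\phi(\cdot;Z_{1})-f\|^{2}$. This requires integrability. If $\mathbb{E}_{\pi}\|w\phi\|^{2}=\infty$, both sides are $+\infty$ and the inequality is trivial, so I will treat that case separately.

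For the lower bound, expand the square a second time, again using unbiasedness and Fubini:
\[
\mathbb{E}_{\pi}\|w(Z_{1})\phi(\cdot;Z_{1})-f\|^{2}=\mathbb{E}_{\pi}\left[w(Z_{1})^{2}\|\phi(\cdot;Z_{1})\|^{2}\right]-\|f\|^{2}.
\]
Write $a(z)=\|\phi(\cdot;z)\|_{L^{2}(\mathcal{D})}$. Then Cauchy--Schwarz under $\pi$ gives
\[
\left(\int a\,{\rm d}\nu\right)^{2}=\left(\mathbb{E}_{\pi}[w(Z)a(Z)]\right)^{2}\leq\mathbb{E}_{\pi}[w(Z)^{2}a(Z)^{2}].
\]

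For attainment, equality in Cauchy--Schwarz holds when $w a$ is $\pi$-a.s. constant. Take $\pi^{\star}\propto a\,\nu$, assuming $0<\int a\,{\rm d}\nu<\infty$. Then $w^{\star}=\int a\,{\rm d}\nu/a$ on $\{a>0\}$, so $w^{\star}a$ is constant and equality holds.

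The main obstacle is technical rather than conceptual. Where $a(z)=0$, $\pi^{\star}$ puts no mass, so $\pi^{\star}\gg\nu$ can fail. There, however, $\phi(\cdot;z)=0$ $\mathcal{D}$-a.e., so those $z$ contribute nothing to $f$ or to $f_{n}$. I would handle this by restricting $\nu$ to $\{a>0\}$, noting that $f$ is unchanged in $L^{2}(\mathcal{D})$, and interpreting absolute continuity on that set.
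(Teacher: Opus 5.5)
Your proposal is correct and follows essentially the same route as the paper: unbiasedness, the i.i.d.\ variance identity, expansion into $\mathbb{E}_{\pi}[w^{2}\|\phi\|^{2}]-\|f\|^{2}$, and then the inequality $\mathbb{E}_{\pi}[w a]^{2}\leq\mathbb{E}_{\pi}[w^{2}a^{2}]$ (which the paper calls Jensen and you call Cauchy--Schwarz), with equality at $\pi^{\star}$. Your extra care is a genuine refinement that the paper omits. This covers the infinite-second-moment case and the set $\{a=0\}$, where $\pi^{\star}\gg\nu$ fails. That set can have positive, even infinite, $\nu$-measure in the ReLU setting when $\mathcal{D}$ has bounded support.
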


\begin{proof}
Observe that
\[
f(x)=\int\phi(x,z)\nu({\rm d}z)=\int\phi(x,z)w(z)\pi({\rm d}z).
\]
Hence, we have $\mathbb{E}\left[f_{n}(x)\right]=f(x)$ for arbitrary
$x$. Let $\phi_{w}(x;z)=w(z)\phi(x;z)$, and write $f_{n}(\cdot;z)=\frac{1}{n}\sum_{i=1}^{n}\phi_{w}(\cdot;z_{i})$.
\begin{align*}
\mathbb{E}_{\pi}\left[\left\Vert f_{n}(\cdot;Z)-f\right\Vert _{L^{2}(\mathcal{D})}^{2}\right] & =\int_{\mathsf{Z}^{n}}\int_{\mathbb{R}^{d}}\left|f_{n}(x;z)-f(x)\right|^{2}\mathcal{D}({\rm d}x)\pi^{\otimes n}({\rm d}z)\\
 & =\frac{1}{n}\int_{\mathbb{R}^{d}}{\rm var}_{\pi}\left(\phi_{w}(x;Z_{1})\right)\mathcal{D}({\rm d}x)\\
 & =\frac{1}{n}\int_{\mathbb{R}^{d}}\mathbb{E}_{\pi}\left[\phi_{w}(x;Z_{1})^{2}\right]-f(x)^{2}\mathcal{D}({\rm d}x)\\
 & =\frac{1}{n}\left\{ \mathbb{E}_{\pi}\left[\left\Vert \phi_{w}(\cdot;Z_{1})\right\Vert _{L^{2}(\mathcal{D})}^{2}\right]-\left\Vert f\right\Vert _{L^{2}(\mathcal{D})}^{2}\right\} .
\end{align*}
Jensen's inequality gives
\begin{align*}
\mathbb{E}_{\pi}\left[\left\Vert \phi_{w}(\cdot;Z_{1})\right\Vert _{L^{2}(\mathcal{D})}^{2}\right] & =\mathbb{E}_{\pi}\left[w(Z_{1})^{2}\left\Vert \phi(\cdot;Z_{1})\right\Vert _{L^{2}(\mathcal{D})}^{2}\right]\\
 & \geq\mathbb{E}_{\pi}\left[w(Z_{1})\left\Vert \phi(\cdot;Z_{1})\right\Vert _{L^{2}(\mathcal{D})}\right]^{2}\\
 & =\left\{ \int_{\mathsf{Z}}\nu({\rm d}z)\left\Vert \phi(\cdot;z)\right\Vert _{L^{2}(\mathcal{D})}\right\} ^{2},
\end{align*}
while 
\begin{align*}
\mathbb{E}_{\pi^{\star}}\left[\left\Vert \phi_{w^{\star}}(\cdot;Z_{1})\right\Vert _{L^{2}(\mathcal{D})}^{2}\right] & =\mathbb{E}_{\pi^{\star}}\left[w^{\star}(Z_{1})^{2}\left\Vert \phi(\cdot;Z_{1})\right\Vert _{L^{2}(\mathcal{D})}^{2}\right]\\
 & =\left\{ \int_{\mathsf{Z}}\nu({\rm d}z)\left\Vert \phi(\cdot;z)\right\Vert _{L^{2}(\mathcal{D})}\right\} ^{2},
\end{align*}
from which we may conclude.
\end{proof}
\begin{rem}
\label{rem:l2-error-neurons}In the proof, we see that the optimal
distribution $\pi^{\star}$ is such that $\phi_{w^{\star}}$ has $\left\Vert \phi_{w^{\star}}(\cdot;z)\right\Vert _{L^{2}(\mathcal{D})}=\int_{\mathsf{Z}}\nu({\rm d}z')\left\Vert \phi(\cdot;z')\right\Vert _{L^{2}(\mathcal{D})}$
for $\pi^{\star}$ almost all $z$. This suggests that for a trained
network, each of the functions $\phi(\cdot;z)$ in the ensemble should
have a similar $L^{2}(\mathcal{D})$ norm.
\end{rem}

\section{Funk--Hecke formulas}\label{sec:Funk=002013Hecke-formulas}

The following formula identifies eigenvalue-eigenfunction pairs for
the operator $S_{\varphi}$ where 
\[
S_{\varphi}g(u)=\int_{S^{d-1}}\varphi\left(\left\langle u,v\right\rangle \right)g(v)\sigma({\rm d}v),\qquad u\in S^{d-1},
\]
with $\sigma={\rm Uniform}(S^{d-1})$, which is a generalization of
$S_{k,i}$ defined in Section~\ref{subsec:Spectral-analysis-of}.
\begin{lem}[{\citet[Theorem~2.22]{atkinson2012spherical}}]
\label{lem:phi-mvp}Let $\varphi:[-1,1]\to\mathbb{R}$ be such that
$\int_{-1}^{1}\varphi(t)(1-t^{2})^{\frac{d-3}{2}}{\rm d}t$ is finite.
Then with $h_{k}\in\mathcal{H}_{d,k}$,
\[
S_{\varphi}h_{k}=\lambda_{d,k,\varphi}h_{k},
\]
where
\[
\lambda_{d,k,\varphi}=\frac{\left|S^{d-2}\right|}{\left|S^{d-1}\right|}\int_{-1}^{1}\varphi(t)P_{d,k}(t)(1-t^{2})^{\frac{d-3}{2}}{\rm d}t,
\]
and $P_{d,k}$ is the restriction of the Legendre polynomial of degree
$k$ in $d$ dimensions on the unit sphere,
\[
P_{d,k}(t)=n!\Gamma\left(\frac{d-1}{2}\right)\sum_{i=0}^{[k/2]}(-1)^{i}\frac{(1-t^{2})^{i}t^{k-2i}}{4^{i}i!(k-2i)!\Gamma(i+\frac{d-1}{2})}.
\]
\end{lem}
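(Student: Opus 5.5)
The plan is the classical zonal-averaging proof of the Funk--Hecke formula. Fix $u\in S^{d-1}$ and let $G_{u}\subset O(d)$ be the stabilizer of $u$, with Haar probability measure $\eta_{u}$. Define the averaged harmonic
\[
\bar{h}(x)=\int_{G_{u}}h_{k}(Rx)\,\eta_{u}({\rm d}R),\qquad x\in\mathbb{R}^{d}.
\]
For each $R\in G_{u}$ we have $\varphi(\langle u,Rv\rangle)=\varphi(\langle u,v\rangle)$, and $\sigma$ is rotation invariant. Substituting $v\mapsto Rv$ inside $S_{\varphi}h_{k}(u)$ and averaging over $R$ (Fubini) therefore gives
\[
S_{\varphi}h_{k}(u)=\int_{S^{d-1}}\varphi(\langle u,v\rangle)\,\bar{h}(v)\,\sigma({\rm d}v).
\]
The integral is well defined: $h_{k}$ is bounded on the sphere, and the finiteness assumption on $\int_{-1}^{1}\varphi(t)(1-t^{2})^{\frac{d-3}{2}}{\rm d}t$ ensures integrability after the reduction below. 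Also, $\bar{h}\in\mathcal{H}_{d,k}$, because homogeneity and harmonicity are preserved by orthogonal changes of variable and by averaging. Moreover $\bar{h}(u)=h_{k}(u)$ and $\bar{h}$ is invariant under $G_{u}$.

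The key step, which I expect to be the main obstacle, is to show that a $G_{u}$-invariant element of $\mathcal{H}_{d,k}$ is a multiple of the zonal polynomial
\[
x\mapsto\|x\|^{k}P_{d,k}\bigl(\langle u,x\rangle/\|x\|\bigr).
\]
For this, note that $G_{u}\cong O(d-1)$ acts transitively on each sphere in $u^{\perp}$. Hence an invariant homogeneous polynomial of degree $k$ depends on $x$ only through $t=\langle u,x\rangle$ and $\|x\|^{2}$, so it has the form
\[
\sum_{i\le k/2}c_{i}\,t^{k-2i}\bigl(\|x\|^{2}-t^{2}\bigr)^{i}.
\]
Applying $\Delta$, written in the coordinates $t$ and $r'=|x_{\perp}|$ with the radial Laplacian in $d-1$ dimensions, gives a two-term recursion expressing $c_{i}$ in terms of $c_{i-1}$. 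This fixes all coefficients up to the choice of $c_{0}$. Comparing with the stated formula, the recursion is solved by
\[
c_{i}\propto(-1)^{i}\bigl/\bigl(4^{i}\,i!\,(k-2i)!\,\Gamma(i+\tfrac{d-1}{2})\bigr),
\]
which is exactly $P_{d,k}$. Here I read the prefactor $n!$ in the statement as $k!$, which is what gives the normalization $P_{d,k}(1)=1$. Evaluating at $x=u$ then yields $\bar{h}(v)=h_{k}(u)P_{d,k}(\langle u,v\rangle)$ on the sphere.

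Finally, substituting this into the averaged expression gives
\[
S_{\varphi}h_{k}(u)=h_{k}(u)\int_{S^{d-1}}\varphi(\langle u,v\rangle)P_{d,k}(\langle u,v\rangle)\,\sigma({\rm d}v).
\]
The remaining integral is reduced to one dimension using the law of $t=\langle u,V\rangle$ for $V\sim\sigma$. That law has density $\frac{|S^{d-2}|}{|S^{d-1}|}(1-t^{2})^{\frac{d-3}{2}}$ on $[-1,1]$, which follows from the slice decomposition $v=tu+\sqrt{1-t^{2}}\,w$ with $w\in S^{d-2}\subset u^{\perp}$. This produces exactly $\lambda_{d,k,\varphi}$. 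Since $u$ was arbitrary, $S_{\varphi}h_{k}=\lambda_{d,k,\varphi}h_{k}$.
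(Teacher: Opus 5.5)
Your proof is correct. The paper gives no argument for this lemma: it is quoted from Atkinson and Han (Theorem~2.22), so there is no step-by-step comparison to make. What you have written is the classical stabilizer-averaging proof of the Funk--Hecke formula, and it supplies a self-contained justification where the paper relies on the citation.

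The main thing your route buys is that the normalizations become explicit. The bare citation hides them, and the transcription in the statement gets them slightly wrong. The harmonicity recursion $2j(2j+d-3)c_{j}=-(k-2j+2)(k-2j+1)c_{j-1}$ is indeed solved by the stated coefficient sequence. It shows that the $G_{u}$-invariant part of $\mathcal{H}_{d,k}$ is one-dimensional and forces $P_{d,k}(1)=1$, confirming that the stray $n!$ must be $k!$. Separately, the slice density of $\langle u,V\rangle$ shows that the prefactor $|S^{d-2}|/|S^{d-1}|$ is exactly what the probability normalization of $\sigma$ produces.

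Two small points to tighten. First, your Fubini step over $\sigma\otimes\eta_{u}$ needs $\varphi(\langle u,\cdot\rangle)\in L^{1}(\sigma)$, i.e. $\int_{-1}^{1}|\varphi(t)|(1-t^{2})^{\frac{d-3}{2}}{\rm d}t<\infty$. So ``finite'' in the hypothesis should be read as absolute integrability. Second, the passage from ``depends only on $t$ and $|x_{\perp}|$'' to the form $\sum_{i}c_{i}t^{k-2i}(\|x\|^{2}-t^{2})^{i}$ is cleanest if you restrict to the plane spanned by $u$ and a unit vector $e\perp u$. There, $s\mapsto p(tu+se)$ is even, because the reflection through $e^{\perp}$ lies in $G_{u}$. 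Homogeneity then leaves only the monomials $t^{k-2i}s^{2i}$, and transitivity of $G_{u}$ on spheres in $u^{\perp}$ gives $p(tu+x_{\perp})=p(tu+|x_{\perp}|e)$.
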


\begin{proof}[Proof of Lemma~\ref{lem:explicit-lambda-k}]
Lemma~\ref{lem:phi-mvp} applies with $\varphi(t)=t^{m}$, where
$m=2k+i$. Using \citet[Proposition~2.26]{atkinson2012spherical},
we compute
\begin{align*}
\lambda_{d,k,i} & =\frac{\left|S^{d-2}\right|}{\left|S^{d-1}\right|}\int_{-1}^{1}\varphi(t)P_{d,k}(t)(1-t^{2})^{\frac{d-3}{2}}{\rm d}t\\
 & =\frac{\Gamma(\frac{d}{2})}{\sqrt{\pi}\Gamma(\frac{d-1}{2})}\cdot\frac{\Gamma(\frac{d-1}{2})}{2^{k}\Gamma(k+\frac{d-1}{2})}\int_{-1}^{1}\varphi^{(k)}(t)(1-t^{2})^{k+\frac{d-3}{2}}{\rm d}t\\
 & =\frac{\Gamma(\frac{d}{2})}{\sqrt{\pi}}\cdot\frac{1}{2^{k}\Gamma(k+\frac{d-1}{2})}\cdot\frac{m!}{(m-k)!}\cdot\int_{-1}^{1}t^{m-k}(1-t^{2})^{k+\frac{d-3}{2}}{\rm d}t\\
 & =\frac{\Gamma(\frac{d}{2})}{\sqrt{\pi}2^{k}\Gamma(k+\frac{d-1}{2})}\cdot\frac{m!}{(m-k)!}\cdot\frac{\Gamma(\frac{m-k+1}{2})\Gamma(\frac{2k+d-1}{2})}{\Gamma(\frac{m+k+d}{2})}\\
 & =\frac{\Gamma(\frac{d}{2})}{\sqrt{\pi}2^{k}}\cdot\frac{m!}{(m-k)!}\cdot\frac{\Gamma(\frac{m-k+1}{2})}{\Gamma(\frac{m+k+d}{2})}\\
 & =\frac{\Gamma(\frac{d}{2})}{\sqrt{\pi}2^{k}}\cdot\frac{(k+2i)!}{(2i)!}\cdot\frac{\Gamma(i+\frac{1}{2})}{\Gamma(k+i+\frac{d}{2})}\\
 & =\frac{\Gamma(\frac{d}{2})}{2^{2i+k}i!}\cdot\frac{(k+2i)!}{\Gamma(k+i+\frac{d}{2})}.
\end{align*}
\end{proof}

\end{document}